\documentclass{article}
\pdfoutput=1


\usepackage[final, nonatbib]{neurips_2021}



\usepackage[utf8]{inputenc} 
\usepackage[T1]{fontenc}    
\usepackage[pdftex]{hyperref}       
\usepackage{url}            
\usepackage{booktabs}       
\usepackage{amsfonts}       
\usepackage{nicefrac}       
\usepackage{microtype}      
\usepackage{xcolor}         
\usepackage[natbib=true]{biblatex}
\addbibresource{multi-pert-exp-design.bib}
\usepackage{freetikz}
\usepackage{rotating}
\usetikzlibrary{positioning}
\usetikzlibrary{decorations.markings}
\usetikzlibrary{shapes,snakes}
\tikzset{->-/.style={decoration={
  markings,
  mark=at position .5 with {\arrow{>}}},postaction={decorate}}}


\usepackage{amsmath}
\usepackage{mathtools}
\usepackage{amssymb}
\usepackage{amsthm}
\usepackage{algorithm,algorithmic}
\usepackage{bbm}
\usepackage{enumitem}

\title{Near-Optimal Multi-Perturbation Experimental Design for Causal Structure Learning}

\DeclareMathOperator*{\E}{\mathbb{E}}
\DeclareMathOperator*{\Prob}{\mathbb{P}}

\DeclarePairedDelimiter\abs{\lvert}{\rvert}%
\DeclareMathOperator*{\argmax}{arg\,max}

\newcommand{\bigO}{\mathcal{O}}

\newtheorem{theorem}{Theorem}

\newtheorem{lemma}{Lemma}
\newtheorem{proposition}{Proposition}

\theoremstyle{definition}
\newtheorem{definition}{Definition}

\newtheorem{objective}{Objective}

\newcommand{\scglong}{\textsc{Double Greedy Continuous}}
\newcommand{\scg}{\textsc{DGC}}
\newcommand{\ssglong}{\textsc{Separating System Greedy}}
\newcommand{\ssg}{\textsc{SSG}}
\newcommand{\csslong}{\textsc{Continuous Separating System Greedy}}
\newcommand{\css}{\textsc{CSSG}}

\newcommand{\Fmi}{\ensuremath{F_{\text{MI}}}}
\newcommand{\Finf}{\ensuremath{F_{\infty}}}
\newcommand{\apFinf}{\ensuremath{\tilde{F}_{\infty}}}

\newcommand{\Feo}{\ensuremath{F_{\text{EO}}}}
\newcommand{\feo}{\ensuremath{f_{\text{EO}}}}

\newcommand{\rand}{\textsc{Rand}}
\newcommand{\ssga}{\textsc{SSG-a}}
\newcommand{\ssgb}{\textsc{SSG-b}}
\newcommand{\greedy}{\textsc{Greedy}}
\newcommand{\ssgbinf}{\textsc{SSG-b}-$\infty$}
\newcommand{\mytest}{\ifmmode \mathrm{Yes}\else No\fi.}
%

\author{%
  Scott Sussex \\ 
  Department of Computer Science\\ 
  ETH Z{\"u}rich\\
  Z{\"u}rich, Switzerland \\
  \texttt{scott.sussex@inf.ethz.ch} \\
  \And
  Andreas Krause \\
  Department of Computer Science\\ 
  ETH Z{\"u}rich\\
  Z{\"u}rich, Switzerland \\
  \And
  Caroline Uhler \\
  Laboratory for Information \& Decision Systems \\ 
  Massachusetts Institute of Technology\\
  Cambridge, MA \\
}

\begin{document}

\maketitle

\begin{abstract}
 \looseness -1 Causal structure learning is a key problem in many domains. Causal structures can be learnt by performing experiments on the system of interest. We address the largely unexplored problem of designing a batch of experiments that each {\em simultaneously intervene on multiple variables}. While potentially more informative than the commonly considered single-variable interventions, selecting such interventions is algorithmically much more challenging, due to the doubly-exponential combinatorial search space over sets of composite interventions.
In this paper, we develop efficient algorithms for optimizing different objective functions quantifying the informativeness of a budget-constrained batch of experiments.
By establishing novel submodularity properties of these objectives, we provide approximation guarantees for our algorithms. 
Our algorithms empirically perform superior to both random interventions and algorithms that only select single-variable interventions. 
\end{abstract}

\section{Introduction}
\label{sec:intro}

The problem of finding the causal relationships between a set of variables is ubiquitous throughout the sciences. For example, scientists are interested in reconstructing gene regulatory networks (GRNs) of biological cells \cite{gardner2003inferring}. Directed Acyclic Graphs (DAGs) are a natural way to represent causal structures, with a directed edge from variable $X$ to $Y$ representing $X$ being a direct cause of $Y$ \cite{Spirtes:1607850}. 

\looseness -1 Learning the causal structure of a set of variables is fundamentally difficult. With only observational data, in general we can only identify the true DAG up to a set of DAGs called its {\em Markov Equivalence Class (MEC)} \cite{verma1990equivalence}. Empirically, for sparse DAGs the size of the MEC grows exponentially in the number of nodes \cite{he2015counting}. Identifiability can be improved by intervening on variables, meaning one perturbs a subset of the variables and then observes more samples from the system \cite{Eberhardt2007,hauser2012characterization,yang2018characterizing}. There exist various inference algorithms for learning causal structures from a combination of observational and interventional data \cite{hauser2012characterization,wang2017permutation, yang2018characterizing,mooij2020,squires2020}. Here we focus on the identification of DAGs that have no unobserved confounding variables. 

\looseness -1 Performing experiments is often expensive, however.  Thus, we are interested in learning as much about the causal structure as possible given some constraints on the interventions. In this work, we focus on the {\em batched setting}, where several interventions are performed in parallel. This is a natural setting in scientific domains like reconstructing GRNs. Existing works propose meaningful objective functions for this batched causal structure learning problem and then give algorithms that have provable guarantees \cite{agrawal2019abcd,ghassami2018budgeted}. However, these works focus on the setting where only a {\em single} random variable is perturbed per intervention. It is an open question as to whether there exist efficient algorithms for the {\em multiple-perturbation} setting, where more than one variable is perturbed in each intervention.

\looseness -1 For the task of reconstructing GRNs, it is now possible for experimenters to perturb multiple genes in a single cell \cite{adamson2016multiplexed, dixit2016perturb}. Figures~\ref{fig:over} b) and c) illustrate a specific example where a two-node intervention completely identifies a DAG in half as many interventions as single-node interventions. In general, it is possible for a set of $q$-node interventions to orient {\em up to $q$-times more} edges in a DAG than single-node interventions (see the supplementary material for a more general example). While multi-perturbation interventions can be more informative, designing them is algorithmically challenging because it leads to an {\em exponentially} larger search space: any algorithm must now select a set of sets.  

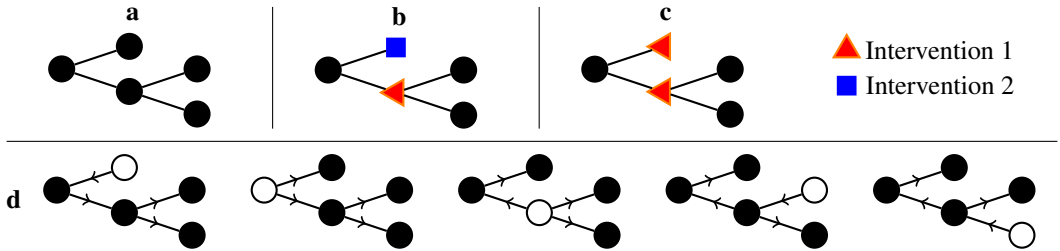
\begin{figure*}[t]
\minipage{0.24\textwidth}
\centering
\textbf{a}
\vskip 0.05in
\begin{turn}{90}
\begin{tikzpicture}[scale = 0.6, roundnode/.style={circle, draw=black!100, fill=black!1000, thick, minimum size=2mm},
]
\node[roundnode] (d0) at (1.5, 7.5) {};
\node[roundnode] (d1) at (1, 6) {};
\node[roundnode] (d2) at (2, 6) {};
\node[roundnode] (d3) at (0.5, 4.5) {};
\node[roundnode] (d4) at (1.5, 4.5) {};
  
\draw[-, thick] (d0) -- (d1);
\draw[-, thick] (d0) -- (d2);
\draw[-, thick] (d1) -- (d3);
\draw[-, thick] (d1) -- (d4);
\end{tikzpicture}
\end{turn}
\endminipage\hfill \vline
\minipage{0.24\textwidth}
\centering
\textbf{b}
\vskip 0.05in
\begin{turn}{90}
\begin{tikzpicture}[scale = 0.6, roundnode/.style={circle, draw=black!100, fill=black!1000, thick, minimum size=2mm},
bluenode/.style={rectangle, draw=blue!100, fill=blue!1000, thick, minimum size=2mm},
orangenode/.style={regular polygon,regular polygon sides=3, draw=orange!100, fill=orange!1000, thick, minimum size=2mm, scale=0.6},
]
\node[roundnode] (d0) at (1.5, 7.5) {};
\node[orangenode] (d1) at (1, 6) {};
\node[bluenode] (d2) at (2, 6) {};
\node[roundnode] (d3) at (0.5, 4.5) {};
\node[roundnode] (d4) at (1.5, 4.5) {};
  
\draw[-, thick] (d0) -- (d1);
\draw[-, thick] (d0) -- (d2);
\draw[-, thick] (d1) -- (d3);
\draw[-, thick] (d1) -- (d4);
\end{tikzpicture}
\end{turn}
\endminipage\hfill \vline
\minipage{0.24\textwidth}
\centering
\textbf{c}
\vskip 0.05in
\begin{turn}{90}
\begin{tikzpicture}[scale = 0.6, roundnode/.style={circle, draw=black!100, fill=black!1000, thick, minimum size=2mm},
bluenode/.style={rectangle, draw=blue!100, fill=blue!1000, thick, minimum size=2mm},
orangenode/.style={regular polygon,regular polygon sides=3, draw=orange!100, fill=orange!1000, thick, minimum size=2mm, scale=0.6},
]
\node[roundnode] (d0) at (1.5, 7.5) {};
\node[orangenode] (d1) at (1, 6) {};
\node[orangenode] (d2) at (2, 6) {};
\node[roundnode] (d3) at (0.5, 4.5) {};
\node[roundnode] (d4) at (1.5, 4.5) {};
  
\draw[-, thick] (d0) -- (d1);
\draw[-, thick] (d0) -- (d2);
\draw[-, thick] (d1) -- (d3);
\draw[-, thick] (d1) -- (d4);
\end{tikzpicture}
\end{turn}
\endminipage\hfill
\minipage{0.24\textwidth}
\centering
\begin{tikzpicture}[scale = 0.9, roundnode/.style={circle, draw=black!100, fill=black!1000, thick, minimum size=2mm},
bluenode/.style={rectangle, draw=blue!100, fill=blue!1000, thick, minimum size=2mm},
orangenode/.style={regular polygon,regular polygon sides=3, draw=orange!100, fill=orange!1000, thick, minimum size=2mm, scale=0.6},
]
\matrix {
  \node [orangenode,label=right:Intervention 1] {}; \\
  \node [bluenode,label=right:Intervention 2] {}; \\
};
\end{tikzpicture}
\endminipage\hfill
\vskip 0.05in

\hrulefill
\vskip 0.05in

\textbf{d}
\minipage{0.19\textwidth}
\centering
\begin{turn}{90}
\begin{tikzpicture}[scale = 0.6, roundnode/.style={circle, draw=black!100, fill=black!100, thick, minimum size=2mm},
rootnode/.style={circle, draw=black!100, fill=black!0, thick, minimum size=2mm},
]
\node[roundnode] (d0) at (1.5, 7.5) {};
\node[roundnode] (d1) at (1, 6) {};
\node[rootnode] (d2) at (2, 6) {};
\node[roundnode] (d3) at (0.5, 4.5) {};
\node[roundnode] (d4) at (1.5, 4.5) {};

\draw[->-, thick] (d0) -- (d1);
\draw[->-, thick] (d2) -- (d0);
\draw[->-, thick] (d1) -- (d3);
\draw[->-, thick] (d1) -- (d4);
\end{tikzpicture}
\end{turn}
\endminipage\hfill 
\minipage{0.19\textwidth}
\centering
\begin{turn}{90}
\begin{tikzpicture}[scale = 0.6, roundnode/.style={circle, draw=black!100, fill=black!100, thick, minimum size=2mm},
rootnode/.style={circle, draw=black!100, fill=black!0, thick, minimum size=2mm},
]
\node[rootnode] (d0) at (1.5, 7.5) {};
\node[roundnode] (d1) at (1, 6) {};
\node[roundnode] (d2) at (2, 6) {};
\node[roundnode] (d3) at (0.5, 4.5) {};
\node[roundnode] (d4) at (1.5, 4.5) {};
  
\draw[->-, thick] (d0) -- (d1);
\draw[->-, thick] (d0) -- (d2);
\draw[->-, thick] (d1) -- (d3);
\draw[->-, thick] (d1) -- (d4);
\end{tikzpicture}
\end{turn}
\endminipage\hfill 
\minipage{0.19\textwidth}
\centering
\begin{turn}{90}
\begin{tikzpicture}[scale = 0.6, roundnode/.style={circle, draw=black!100, fill=black!100, thick, minimum size=2mm},
rootnode/.style={circle, draw=black!100, fill=black!0, thick, minimum size=2mm},
]
\node[roundnode] (d0) at (1.5, 7.5) {};
\node[rootnode] (d1) at (1, 6) {};
\node[roundnode] (d2) at (2, 6) {};
\node[roundnode] (d3) at (0.5, 4.5) {};
\node[roundnode] (d4) at (1.5, 4.5) {};
 
\draw[->-, thick] (d1) -- (d0);
\draw[->-, thick] (d0) -- (d2);
\draw[->-, thick] (d1) -- (d3);
\draw[->-, thick] (d1) -- (d4);
\end{tikzpicture}
\end{turn}
\endminipage\hfill
\minipage{0.19\textwidth}
\centering
\begin{turn}{90}
\begin{tikzpicture}[scale = 0.6, roundnode/.style={circle, draw=black!100, fill=black!100, thick, minimum size=2mm},
rootnode/.style={circle, draw=black!100, fill=black!0, thick, minimum size=2mm},
]
\node[roundnode] (d0) at (1.5, 7.5) {};
\node[roundnode] (d1) at (1, 6) {};
\node[roundnode] (d2) at (2, 6) {};
\node[roundnode] (d3) at (0.5, 4.5) {};
\node[rootnode] (d4) at (1.5, 4.5) {};

\draw[->-, thick] (d1) -- (d0);
\draw[->-, thick] (d0) -- (d2);
\draw[->-, thick] (d1) -- (d3);
\draw[->-, thick] (d4) -- (d1);
\end{tikzpicture}
\end{turn}
\endminipage\hfill 
\minipage{0.19\textwidth}
\centering
\begin{turn}{90}
\begin{tikzpicture}[scale = 0.6, roundnode/.style={circle, draw=black!100, fill=black!100, thick, minimum size=2mm},
rootnode/.style={circle, draw=black!100, fill=black!0, thick, minimum size=2mm},
]
\node[roundnode] (d0) at (1.5, 7.5) {};
\node[roundnode] (d1) at (1, 6) {};
\node[roundnode] (d2) at (2, 6) {};
\node[rootnode] (d3) at (0.5, 4.5) {};
\node[roundnode] (d4) at (1.5, 4.5) {};
 
\draw[->-, thick] (d1) -- (d0);
\draw[->-, thick] (d0) -- (d2);
\draw[->-, thick] (d3) -- (d1);
\draw[->-, thick] (d1) -- (d4);
\end{tikzpicture}
\end{turn}
\endminipage\hfill

\begin{center}
\caption{\textbf{a)} We illustrate the MEC of a tree graph on $5$ nodes. \textbf{b)} Two single-node interventions are required to fully identify the true DAG. \textbf{c)} Only one two-node intervention is required to fully identify the true DAG. \textbf{d)} This MEC contains $5$ DAGs, each corresponding to a different root node (marked white). This is a property particular to tree MECs.}
\label{fig:over}
\end{center}
\vskip -0.3in
\end{figure*}

\looseness -1 Our main contribution is to provide efficient algorithms for different objective functions with accompanying performance bounds. We demonstrate empirically on both synthetic and GRN graphs that our algorithms result in greater identifiability than existing approaches  that do not make use of multiple perturbations \cite{ghassami2018budgeted, yang2018characterizing}, as well as a random strategy.

\looseness -1 We begin by introducing the notation and the objective functions considered in this work in Section~\ref{sec:background}, before reviewing related work in Section~\ref{sec:related}. In Section~\ref{sec:method} we present our algorithms along with proofs of their performance guarantees. Finally, in Section~\ref{sec:experiments} we demonstrate the superior empirical performance of our method over existing baselines on both synthetic networks and on data generated from models of real GRNs. 

\section{Background and Problem Statement}
\label{sec:background}

\paragraph{Causal DAGs} \looseness -1 Consider a causal DAG $G = ([p],E)$ where $[p] := \{1,...,p\}$ is a set of nodes and $E$ is a set of directed edges. Let $(i, j) \in E$ iff there is an edge from node $i$ to node $j$. 
Each node $i$ is associated with a random variable $X_i$. 
In the GRN example, $X_i$ would be the measurement of the gene expression level for gene $i$. An edge from $i \rightarrow j$ would represent gene $i$ having a causal effect on the expression of gene $j$. The functional dependence of a random variable on its parents can be described by a \emph{structural equation model} (SEM). 

 \looseness -1 The probability distribution over $X=(X_1, \dots , X_p)$ is related to $G$ by the Markov property, meaning each variable $X_i$ is conditionally independent of its non-descendants given its parents \cite{Spirtes:1607850}. From conditional independence tests one can determine the MEC of $G$, a set of DAGs with the same conditional independancies between variables. All members of the MEC share the same undirected skeleton and colliders \cite{verma1990equivalence}. A collider is a pair of edges $(i, k), (j, k) \in E$ such that $(i, j), (j, i) \notin E$. The {\em essential graph} of $G$, $\text{Ess}(G)$, is a partially directed graph, with directed edges where all members of the MEC share the same edge direction, and with undirected edges otherwise \cite{andersson1997characterization}. $\text{Ess}(G)$ uniquely represents the MEC of $G$. These MECs can be large, so we seek to perform interventions on the nodes to reduce the MEC to a smaller set of possible DAGs. 

\vspace{-2mm} \paragraph{Interventions} \looseness -1 We use the term {\em intervention} to refer to a set $I \subset [p]$ of perturbation targets (variables). We assume all interventions are {\em hard interventions}, meaning intervening on a set $I$ removes the incoming edges to the random variables $X_I := (X_i)_{i\in I}$ and sets their joint distribution to some interventional distribution $\mathcal{P}^{I}$ \cite{eberhardt2005number}. In the GRN reconstruction example this corresponds to, for example, running an experiment where we knockout all genes in set $I$. Some of our results extend easily to the alternative model of {\em soft} interventions \cite{markowetz2005probabilistic}, as we discuss in the supplementary material. 

We use $\mathcal{I} = 2^{[p]}$ to refer to the set of all possible interventions. Our goal will be to select a  batch  $\xi$ of interventions, where $\xi$ is a multiset of some $I \in \mathcal{I}$. For practical reasons, we typically have {\em constraints} on the number of interventions, i.e., $\abs{\xi} \leq m$ and on the number of variables involved in each intervention $\abs{I} \leq q $, $\forall I \in \xi$. Namely, there are at most $m$ interventions per batch and each intervention contains at most $q$ nodes. A constraint on the number of perturbations per intervention is natural in reconstructing GRNs, since perturbing too many genes in one cell will leave it unlikely to survive. We refer to the set of $\xi$s satisfying these constraints as $C_{m, q}$. The observational distribution (no intervention) is given by $\xi = \emptyset$. 

\looseness -1 For any set of interventions $\xi$ and DAG $G$, there is a set of $\xi$-Markov equivalent DAGs. These are the set of DAGs that have the same set of conditional independencies under all $I \in \xi$ and under the observational distribution. This set of DAGs is no larger than the MEC of $G$ and can similarly be characterized by an essential graph $\textrm{Ess}^{\xi}(G)$ \cite{hauser2012characterization}. 

\looseness -1 We will always assume that there exist no unobserved common causes of any pair of nodes in $G$. We also assume that the distribution of the random variables satisfies faithfulness with respect to $G$ \cite{Spirtes:1607850}. 

\vspace{-2mm} \paragraph{Choosing optimal interventions} We seek to maximize an objective function $F$ that quantifies our certainty in  the true DAG. In general our goal is to determine
$$\argmax_{\xi \in C_{m, q}} F(\xi).$$
A natural choice for $F$ is given by \citet{agrawal2019abcd}. They assume that there exist parameters $\theta$ that determine the functional relationships between random variables. For example, this could be the coefficients in a linear model. Given existing data $D$, we try to choose $\xi$ that maximizes

\begin{objective}[\bf Mutual Information (MI)]
\label{obj:abcdobj}
\begin{equation}
\begin{split}
\Fmi(\xi) =  {\E}_{G\mid D} {\E}_{y\mid G, \hat{\theta}, \xi}\left[\tilde{U}_{M.I} (y,\xi ;D) \right],
\end{split}
\end{equation}
\end{objective}
where $y$ is the set of samples from the interventions, $\hat{\theta}$ is the current estimate of the parameters given $D$ and $G$, and $\tilde{U}_{M.I} (y,\xi ;D)$ is the mutual information between the posterior over $G$ and the samples $y$. Each intervention produces one sample in $y$. The use of mutual information means the objective aims to, in expectation over all observed samples and true DAGs, minimize the entropy of the posterior distribution over DAGs. There already exist a number of algorithms for determining the posterior over DAGs from observational or experimental data \cite{yang2018characterizing, wang2017permutation, hauser2012characterization}. 

\vspace{-2mm}
\paragraph{Infinite sample objectives} Finding algorithms that optimize the MI objective is difficult because we have to account for noisy observations and limited samples. To remove this complexity, we study the limiting case of {\em infinitely many samples} per unique intervention. The constraints given by $C_{m, q}$ still stipulate that there can be only $m$ unique interventions, but each intervention can be performed with an infinite number of samples. We also assume that an essential graph is already known (i.e., we have infinite observational samples and infinite samples for any experiments performed so far). For objectives with infinite samples per intervention, we treat $\xi$ as a set of interventions, not a multiset, since there is no change in objective value for choosing an intervention twice. Consider $\xi'$ to be the set of interventions contained in our dataset before our current batch. In this setting, maximizing Objective~\ref{obj:abcdobj} reduces to maximizing

\begin{objective}[\bf Mutual info.~inf.~samples (MI-$\infty$)]
\label{obj:infABCDobj}
\begin{equation}
    F_{\infty}(\xi)= - \frac{1}{\abs{\mathcal{G}}} \sum_{G\in\mathcal{G}} \log_2 \abs{\textrm{Ess}^{\xi \cup \xi'}(G)},
\end{equation}
\end{objective}

\looseness -1 where $\textrm{Ess}^{\xi \cup \xi'}(G)$ refers to the updated essential graph after performing interventions in $\xi$. The objective aims to, on average across possible true DAGs, minimize the $\log$ of the essential graph size after performing the interventions. The derivation of this objective is given in the supplementary material. 

\citet{ghassami2018budgeted} study a different objective in the infinite-sample setting. The objective seeks to, on average across possible $G$ given the current essential graph, orient as many edges as possible. Let $R(\xi, G, \xi')$ be the set of edges oriented by $\xi$ if the true DAG is $G$, and the essential graph is given by the $\xi'$-MEC.

\begin{objective}[\bf Edge-orientation (EO)]
\label{obj:edgeorientobj}
\begin{equation}
\begin{split}
    F_{\text{EO}}(\xi)= \frac{1}{\abs{\mathcal{G}}} \sum_{G\in\mathcal{G}}\abs{R(\xi, G, \xi')}.
    \end{split}
\end{equation}
\end{objective}

\looseness -1 The function $R$ is computed as follows. Firstly, $\forall I \in \xi$, orient undirected edge $i - j$ in $\textrm{Ess}^{\xi'}(G)$ if $i \in I$ but $j \notin I$ or vice-versa. Secondly, execute the {\em Meek Rules} \cite{meek1995casual}, which allow inferring additional edge orientations (discussed in the supplementary material) on the resulting partially directed graph. Finally, output the set of all edges oriented.  \citet{agrawal2019abcd} show that this objective is not consistent; however, this is because they fix $\mathcal{G}$ to be the MEC instead of using the most up-to-date essential graph for each batch. In the supplementary material, we show that the version of the objective we work with is indeed consistent. We will drop the dependence of $R$ on $\xi'$ for readability. 

Below, we provide algorithms with near-optimality guarantees for Objectives~\ref{obj:infABCDobj}~and~\ref{obj:edgeorientobj}, while motivating a practical algorithm for Objective \ref{obj:abcdobj}.   

\section{Related Work}
\label{sec:related}

Causality has been widely studied in machine learning \cite{pearl2009causality, peters2017elements}. Here we focus on prior research that is most relevant to our work. 

\looseness -1 \citet{agrawal2019abcd} and \citet{ghassami2018budgeted} give near-optimal greedy algorithms for Objectives~\ref{obj:abcdobj} and~\ref{obj:edgeorientobj} respectively. \citet{ahmaditeshnizi20a} present a dynamic programming algorithm for an adversarial version of Objective~\ref{obj:edgeorientobj}, optimizing for the worst case ground truth DAG in the MEC. However, all these algorithms only apply to {\em single-perturbation} interventions.  Both of these works use the submodularity of the two objectives. In this paper we address the exponentially large search space that arises when designing multi-perturbation interventions, a strictly harder problem. 

\looseness -1 Much existing work in experimental design for causal DAGs is focused on identifying the graph uniquely, while minimizing some cost associated with doing experiments \cite{eberhardt2005number,hyttinen2013experiment,shanmugam2015learning,kocaoglu2017cost, lindgren2018experimental}. When the MEC is large and the number of experiments is small, identifying the entire graph will be infeasible. Instead, one must select interventions that optimize a measure of the information gained about the causal graph.

\citet{lindgren2018experimental} show NP-hardness for selecting an intervention set of at most $m$ interventions, with minimum number of perturbations, that completely identifies the true DAG. This, however, does not directly imply a hardness result for our problem. 

\citet{gamella2020active} propose an approach to experimental design for causal structure learning based on invariant causal prediction. While our approach has guarantees for objectives relating to either the whole graph or functions of the oriented edges, their work is specific to the problem of learning the direct causes of one variable.  

\looseness -1 \citet{acharya2018learning} consider {\em testing} between two candidate causal models. However, the setting differs from ours: they assume the underlying DAG is known but allow for unobserved confounding variables. 

\looseness -1 Designing multi-perturbation interventions has been previously studied in linear cyclic networks, with a focus on parameter identification \cite{gross20}. Here we focus on causal graph identification in DAGs. 

\section{Greedy Algorithms for Experiment Design}
\label{sec:method}
 
All of our algorithms follow the same general strategy. Like in previous works on single-perturbation experimental design \cite{agrawal2019abcd, ghassami2018budgeted}, we greedily add interventions to our intervention set. We add $I$ maximizing $F(\xi \cup \{I\})$ where $\xi$ is the currently proposed set of interventions. This greedy selection is justified because our objectives are submodular, a property we define formally later. For single-perturbation experimental design, this is algorithmically simple since there are only $p$ possible interventions. However, for multi-perturbation interventions even selecting greedily is {\em intractable} at scale since we have $\binom{p}{q}$ possible interventions. Therefore we provide ways to find an intervention that is approximately greedy, i.e, an intervention with marginal improvement in objective that is close to that of the greedy intervention. 

A further challenge with the greedy approach is that it involves evaluating the objective, which for our objectives is a potentially {\em exponential sum} over members of an essential graph. Each of the two algorithms we give has a different strategy for overcoming this. 

In Section~\ref{subsec:method:orient}, we present \scglong{} (\scg) for optimizing Objective~\ref{obj:edgeorientobj}, the edge-orientation objective. For greedily selecting interventions to maximize an exponential sum, we employ the stochastic continuous optimization technique of \citet{hassani2019stochastic}. 

In Section~\ref{subsec:method:MI}, we present \ssglong{} (\ssg) for optimizing Objective~\ref{obj:infABCDobj}, MI-$\infty$. To greedily select interventions, we use the construction of \emph{separating systems} (SS) \cite{wegener1979separating, shanmugam2015learning, lindgren2018experimental}, to create a smaller set of interventions to search over. Collectively, the interventions in the SS fully orient the graph. To handle tractably evaluating the objective, we use the idea of \citet{ghassami2018budgeted} and \citet{agrawal2019abcd} to optimize an approximation of the objective constructed using a limited sample of DAGs. 

\looseness -1 To give near-optimality guarantees for these algorithms, we will use two properties of the objectives: monotonicity and submodularity. 
 
\begin{definition} 
\label{def:monotone}
A set function $F:2^V\rightarrow \mathbb{R}$ is \emph{monotonically increasing} if for all sets $I_1 \subseteq I_2 \subseteq V$ we have $F(I_1)\leq F(I_2)$. 
\end{definition}

\begin{definition} 
\label{def:submod}
A set function $F:2^V\rightarrow \mathbb{R}$ is \emph{submodular} if for all sets $I_1 \subseteq I_2 \subseteq V$ and all $v \in V \setminus I_2$ we have $F(I_1 \cup \{v\}) - F(I_1) \geq F(I_2 \cup \{v\}) - F(I_2)$. 
\end{definition}

Submodularity is a natural diminishing returns property, and many strategies have been studied for optimizing submodular objectives \cite{krause2014submodular}. In both the above definitions, $V$ is called the \emph{groundset}, the set that we can choose elements from. In the single-perturbation problem, the groundset is just $[p]$, whereas in our case it is all subsets of up to $q$ nodes. 

\looseness -1 We show that \scg{} achieves an objective value within a constant factor of the optimal intervention set on Objective~\ref{obj:edgeorientobj}. \ssg{} does not achieve a constant-factor guarantee, but for both infinite sample objectives we obtain a lower bound on its performance. 

All of our algorithms run in polynomial time; however, they assume access to a uniform sampler across all DAGs in the essential graph. This exists for sampling from the MEC \cite{wienobst2020polynomialtime} but not for essential graphs given existing interventions. In practice, we find that an efficient non-uniform sampler \cite{ghassami2018budgeted} can be used to achieve strong empirical performance. 


\subsection{Optimizing the Edge-orientation Objective}
\label{subsec:method:orient}

In the following, we develop an algorithm for maximizing Objective~\ref{obj:edgeorientobj}.

In fact, the algorithm we provide has a near-optimality guarantee for a more general form of $\Feo$, namely
$$\Feo(\xi) = \sum_{G \in \mathcal{G}} a(G) \sum_{e\in G} w(e) \mathbbm{1}(e \in R(\xi, G)),$$ 
where $\forall e, w(e)\geq 0$ and $\forall G, a(G) \geq 0$. The weights $a(G)$ can be thought of as corresponding to having a non-uniform prior over the DAGs in the essential graph, whilst the weights $w(e)$ can be thought of as assigning priority to the orienting of certain edges. The inner sum above is a {\em weighted coverage function} \cite{krause2014submodular} over the set of edges. 

\looseness -1 We will first show that $\Feo$ is monotone submodular over groundset $\mathcal{I}$. This generalizes a result by \citet{ghassami2018budgeted} who showed the same result for groundset $[p]$ (single perturbation interventions). 

\begin{lemma}
\label{lem:F_mon_sub}
\Feo{} is monotone submodular over the groundset~$\mathcal{I}$.
\end{lemma}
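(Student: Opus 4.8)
The plan is to reduce $\Feo$ to a nonnegative combination of weighted coverage functions and then invoke the standard fact that weighted coverage functions are monotone submodular, being careful that the ``coverage'' here is mediated by $R(\xi, G)$, which is not literally a union of sets but the output of the Meek-rule closure. Concretely, fix $G$ and an edge $e$; I would define for each intervention $I \in \mathcal{I}$ the set $S_e^G(I) \subseteq \mathcal{I}$ (or rather think of each $I$ as ``covering'' $e$ in the instance $(G,e)$) by: $I$ directly orients $e = i-j$ in $\textrm{Ess}^{\xi'}(G)$ whenever exactly one of $i,j$ lies in $I$. The key structural claim is that $e \in R(\xi, G)$ if and only if $e$ lies in the Meek closure of the set of directly-oriented edges $\bigcup_{I \in \xi} D^G(I)$, where $D^G(I)$ is the (fixed, $\xi$-independent) set of edges $I$ directly orients. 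So I would first isolate the following monotonicity/closure lemma about Meek rules: the map sending a set of oriented edges $A$ to its Meek closure $\overline{A}$ is monotone ($A \subseteq B \Rightarrow \overline{A} \subseteq \overline{B}$) and idempotent, and crucially that $\mathbbm{1}(e \in \overline{A})$ is a monotone submodular function of $A \subseteq E$. This last point is the crux and the main obstacle.

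For that crux, I would argue that $g_e(A) := \mathbbm{1}(e \in \overline{A})$ is a matroid-free coverage-type function by exhibiting it as a monotone submodular function directly from first principles: monotonicity is immediate from monotonicity of the closure; for submodularity I need that if adding an edge $f$ to the smaller set $A_1$ already forces $e$ into the closure, it also does so when added to any larger $A_2 \supseteq A_1$ — i.e. $g_e(A_1 \cup \{f\}) - g_e(A_1) \geq g_e(A_2 \cup \{f\}) - g_e(A_2)$. The only nontrivial case is $g_e(A_1) = 0$, $g_e(A_1 \cup \{f\}) = 1$ but $g_e(A_2) = 1$ already, in which case the RHS is $0$ and the inequality holds trivially; if $g_e(A_2) = 0$ then by monotonicity of closure $e \in \overline{A_1 \cup \{f\}} \subseteq \overline{A_2 \cup \{f\}}$, giving RHS $= 1 \leq 1 =$ LHS. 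Hence $g_e$ is monotone submodular on $2^E$. (Here I'm using that $\overline{A \cup \{f\}} = \overline{\overline A \cup \{f\}}$ so the closure behaves well under incremental additions; this follows from idempotence and monotonicity of Meek-rule application, which I'd state as a small lemma citing \citet{meek1995casual}.)

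With $g_e$ in hand, the composition step is routine: for fixed $G$, the map $A^G : \xi \mapsto \bigcup_{I \in \xi} D^G(I)$ from $2^{\mathcal{I}}$ to $2^E$ is a modular (indeed, union-of-sets) map, and precomposing a monotone submodular set function $g_e$ with a ``coverage-style'' union map preserves monotone submodularity — this is exactly the argument that weighted coverage is submodular, applied with the generalized ``coverage indicator'' $g_e$ in place of the usual $\mathbbm{1}(\text{element covered})$. One has to verify the composition preserves submodularity: for $\xi_1 \subseteq \xi_2$ and $I \notin \xi_2$, $A^G(\xi_1 \cup \{I\}) = A^G(\xi_1) \cup D^G(I)$ and $A^G(\xi_1) \subseteq A^G(\xi_2)$, so submodularity of $\xi \mapsto g_e(A^G(\xi))$ follows from submodularity of $g_e$ together with the set inequality $A^G(\xi_1 \cup \{I\}) \setminus A^G(\xi_1) \supseteq A^G(\xi_2 \cup \{I\}) \setminus A^G(\xi_2)$. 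Finally, $\Feo(\xi) = \sum_{G} a(G) \sum_{e} w(e)\, g_e(A^G(\xi))$ is a nonnegative linear combination of monotone submodular functions, hence monotone submodular over $\mathcal{I}$, which is the claim. I expect the write-up to be dominated by carefully stating and proving the Meek-closure lemmas; once those are pinned down, the submodularity bookkeeping is short.
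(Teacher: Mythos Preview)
Your proposal has a genuine gap at its central step: the claim that $g_e(A) := \mathbbm{1}(e \in \overline{A})$ is monotone submodular on $2^E$ is false. Your case analysis asserts that ``the only nontrivial case is $g_e(A_1)=0$, $g_e(A_1\cup\{f\})=1$,'' but you have overlooked the case $g_e(A_1)=g_e(A_1\cup\{f\})=g_e(A_2)=0$ while $g_e(A_2\cup\{f\})=1$. This case is perfectly consistent with monotonicity of the closure, and it is exactly where submodularity fails. Concretely, take a triangle on $\{a,b,c\}$ with all edges undirected, $e=(a,c)$, $A_1=\emptyset$, $A_2=\{a\to b\}$, $f=b\to c$. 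Neither $\{a\to b\}$ nor $\{b\to c\}$ alone lets any Meek rule fire, so $g_e(A_1\cup\{f\})=g_e(A_2)=0$; but $\{a\to b,\,b\to c\}$ triggers R2 and orients $a\to c$, so $g_e(A_2\cup\{f\})=1$. The marginal gain is strictly larger on the larger set, contradicting submodularity. Any Meek rule requiring two directed premises (R2, R3, R4) produces this kind of synergy, so the indicator-of-closure function is not a coverage function in disguise, and your composition argument collapses.

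What makes the lemma true is precisely that you are adding \emph{interventions}, not single edges: the set $D^G(I)$ is always a cut of the skeleton, and no single intervention can orient, say, just one edge of a triangle. The paper exploits this by proving directly, at the level of intervention sets, the set inclusion $R(\xi_2\cup\{I\})\setminus R(\xi_2)\subseteq R(\xi_1\cup\{I\})\setminus R(\xi_1)$. This is done by a case analysis over which Meek rule (R0--R4) first orients a given edge, together with an auxiliary structural lemma about triangles in the Meek-closed graph (if $v_a\to v_b$ and $v_b - v_c$ after closure, then necessarily $v_a\to v_c$). That triangle lemma is what tames the R2 synergy your approach stumbles on. If you want to rescue your outline, you cannot factor through $2^E$; you must argue the diminishing-returns inclusion for $R$ directly over $2^{\mathcal{I}}$, which is essentially the paper's route.
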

\begin{proof}
All proofs are presented in the supplementary material unless otherwise stated. 
\end{proof}

As mentioned, we cannot use a greedy search directly since the groundset $\mathcal{I}$ is too large. Instead, we develop an algorithm for selecting an intervention with near-maximal utility compared to the greedy choice. In particular, our strategy is to prove a submodularity result over the function $F$ with modified domain. Consider the set function $\Feo^{\xi}(I) = \Feo(\xi \cup \{I\})$ for fixed $\xi$. 

\begin{lemma}
$\Feo^{\xi}$ is non-monotone submodular over the groundset $[p]$.
\end{lemma}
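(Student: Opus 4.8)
The plan is to fix $\xi$ and analyze how $\Feo^{\xi}(I) = \Feo(\xi \cup \{I\})$ behaves as a set function of $I \subseteq [p]$. First I would unpack the definition: adding the single intervention $I$ orients every currently-undirected edge $i - j$ of $\textrm{Ess}^{\xi}(G)$ with exactly one endpoint in $I$ (a ``cut'' edge), then the Meek rules propagate further orientations. So $\Feo^{\xi}(I)$ is, up to the additive constant $\Feo(\xi)$ (the edges already oriented by $\xi$ and $\xi'$), a weighted sum over $G$ of the weight of edges oriented by the cut induced by $I$ together with Meek closure. The key structural observation is that ``which edges get oriented'' depends on $I$ only through the set of undirected edges it cuts in each $\textrm{Ess}^{\xi}(G)$, and the Meek-closure map from a set of oriented edges to its closure is itself a monotone operation.

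For submodularity over groundset $[p]$, I would argue as follows. For fixed $G$, consider the map sending $I$ to the set of undirected edges of $\textrm{Ess}^{\xi}(G)$ cut by $I$. This is not itself submodular as a set-coverage in the usual sense, but the composition with ``orient cut edges, then take Meek closure, then take weighted cardinality'' should be shown submodular by the standard coverage argument: one can view each oriented-and-propagated edge $e$ as being ``covered'' by certain subsets of $[p]$, and a weighted coverage function is submodular. Concretely, I would reduce to showing that for $I_1 \subseteq I_2$ and $v \notin I_2$, the marginal gain $\Feo^{\xi}(I_1 \cup \{v\}) - \Feo^{\xi}(I_1) \geq \Feo^{\xi}(I_2 \cup \{v\}) - \Feo^{\xi}(I_2)$; expanding, this is equivalent to a per-$G$ inequality about edge sets oriented by $I_2 \cup \{v\}$ versus $I_2$, which I would establish by showing any edge newly oriented when going from $I_2$ to $I_2 \cup \{v\}$ is also newly oriented when going from $I_1$ to $I_1 \cup\{v\}$. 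This ``newly-oriented monotonicity'' is where the Meek-rule monotonicity enters: a larger starting set of oriented edges (from the larger cut of $I_2$) can only make Meek closure larger, but the *increment* due to adding $v$ shrinks — this is exactly the diminishing-returns structure, and is most cleanly seen by noting $\Feo$ itself is submodular over $\mathcal{I}$ (Lemma~\ref{lem:F_mon_sub}) and that restricting a submodular function on $\mathcal{I}$ via $I \mapsto \xi \cup \{I\}$, then further restricting the groundset to singletons $\{v\}$ for $v \in [p]$, preserves submodularity. In fact the cleanest route is: $\Feo^\xi$ is the restriction of $\Feo$ to the sublattice $\{\xi \cup \{I\} : I \subseteq [p]\}$ composed with the lattice map $I \mapsto \xi \cup \{I\}$, and since $\Feo$ is submodular on $\mathcal{I} = 2^{[p]}$ and $I \mapsto \xi \cup \{I\}$ is a modular (indeed, union-with-fixed-set) map, submodularity is inherited.

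For non-monotonicity, I would exhibit a small explicit counterexample: a graph and a $\xi$ (possibly $\xi = \emptyset$) where intervening on a larger set $I_2 \supsetneq I_1$ orients *fewer* edges than $I_1$, because a node added to $I_1$ can ``uncut'' an edge $i-j$ that was previously cut (when both $i,j$ end up inside $I_2$), and the lost orientations from that edge and its Meek consequences outweigh any new ones. The five-node tree or a simple path should suffice; I would just need to check the Meek-rule bookkeeping on the chosen example.

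The main obstacle I anticipate is the submodularity argument done honestly ``by hand'' at the level of Meek closures — tracking how marginal gains interact with rule propagation is fiddly. The shortcut of inheriting submodularity from Lemma~\ref{lem:F_mon_sub} via the union-with-$\xi$ map sidesteps this entirely, so the real work reduces to (i) justifying that restriction-of-groundset-to-singletons preserves submodularity (immediate from the definition) and (ii) constructing a clean non-monotonicity example.
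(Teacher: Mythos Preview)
Your shortcut via Lemma~\ref{lem:F_mon_sub} does not work. Lemma~\ref{lem:F_mon_sub} gives submodularity of $\Feo$ as a function on $2^{\mathcal{I}}$, i.e., with groundset $\mathcal{I}=2^{[p]}$ (sets of interventions). But the map $I \mapsto \xi \cup \{I\}$ from $2^{[p]}$ to $2^{\mathcal{I}}$ is \emph{not} order-preserving: if $I_1 \subsetneq I_2$ as subsets of $[p]$, then $\xi \cup \{I_1\}$ and $\xi \cup \{I_2\}$ are two distinct sets of interventions of the same cardinality, with neither contained in the other. So there is no ``sublattice restriction'' to invoke, and the diminishing-returns inequality you need,
\[
\Feo(\xi \cup \{I_1 \cup \{v\}\}) - \Feo(\xi \cup \{I_1\}) \;\ge\; \Feo(\xi \cup \{I_2 \cup \{v\}\}) - \Feo(\xi \cup \{I_2\}),
\]
is simply not an instance of Lemma~\ref{lem:F_mon_sub}.

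Your earlier ``by hand'' sketch is closer to what is actually required, but it too has a gap. You propose to show that any edge newly oriented when passing from $I_2$ to $I_2\cup\{v\}$ is also newly oriented when passing from $I_1$ to $I_1\cup\{v\}$, appealing to ``Meek-rule monotonicity'' and ``the larger cut of $I_2$''. But the cut of $I_2$ need not be larger than that of $I_1$ --- enlarging $I$ can swallow both endpoints of a previously cut edge --- which is precisely why $\Feo^\xi$ is non-monotone. Consequently $R(I)$ is not monotone in $I$, and adding $v$ can \emph{remove} oriented edges as well as add them. The submodularity inequality therefore splits into two set inclusions that both require proof:
\[
R(I_2 \cup \{v\}) \setminus R(I_2) \;\subseteq\; R(I_1 \cup \{v\}) \setminus R(I_1)
\quad\text{and}\quad
R(I_1) \setminus R(I_1 \cup \{v\}) \;\subseteq\; R(I_2) \setminus R(I_2 \cup \{v\}).
\]
The paper establishes each of these by a direct case analysis over the Meek rules (the second via a symmetry $R(I)=R(I^{C})$ combined with the first). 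That ``fiddly'' Meek-rule bookkeeping is exactly the content of the lemma; there is no shortcut around it.
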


The Non-monotone Stochastic Continuous Greedy (\textsc{NMSCG}) algorithm of \citet{mokhtari2020stochastic} can therefore be used as a subroutine to select, in expectation, an approximately greedy intervention to add to an existing intervention set. The algorithm uses a stochastic gradient-based method to optimize a continuous relaxation of our objective, and then rounds the solution to obtain a set of interventions. The continuous relaxation of $\Feo^{\xi}$ is the multilinear extension
$$\feo^{\xi}(x) = \sum_{I \in \mathcal{I}} \Feo^{\xi}(I) \prod_{i \in I} x_i \prod_{i \notin I} (1-x_i)$$
with constraints $\sum_i x_i \leq q $, $0 \leq x_i \leq 1$ for all nodes $i$. The multilinear extension can be thought of as computing the expectation of $\Feo^{\xi}(I)$, when input $x$ is a vector of independent probabilities such that $x_i$ is the probability of including node $i$ in the intervention.  The sum over $\mathcal{I}$ in $\feo^{\xi}$ and the sum over DAGs in $\Feo^{\xi}$ make computing the gradient of this objective intractable. Therefore, we compute an unbiased stochastic approximation of the gradient $\nabla \feo^{\xi}(x)$ by uniformly sampling a DAG $G$ from $\mathcal{G}$ and intervention $I$ from the distribution specified by $x$. Define
\begin{equation}
\label{eqn:f-R}
\hat{f}_{\text{EO}}^{\xi}(I, G) = \abs{R(\xi \cup \{I\}, G)}.
\end{equation}
\looseness -1 \citet{mokhtari2020stochastic} show that an unbiased estimate of the gradient of $f(x)$ can be computed by sampling $G$ and $I$ to approximate
\begin{equation}
\label{eqn:grad}
\frac{\partial}{\partial x_i} \feo^{\xi}(x)   = \E_{G, I \mid x}\left[  \hat{f}_{\text{EO}}^{\xi}(I, G; I_i \leftarrow 1) - \hat{f}_{\text{EO}}^{\xi}(I, G; I_i \leftarrow 0) \right],
\end{equation}
where $I_i \leftarrow 0$ means that if $i\in I$, remove it. The use of a stochastic gradient means that $\Feo$ can be efficiently optimized despite it being a possibly exponential sum over~$\mathcal{G}$.

After several gradient updates we obtain a vector of probabilities $x$ that approximately maximizes $\feo$. To obtain an intervention $I$ one uses a \textsc{Round} function, for example pipage rounding which, on submodular functions, has the guarantee that $\E[\Feo^{\xi}(\textrm{\textsc{Round}}(x))] = \feo^{\xi}(x)$ \cite{calinescu2011maximizing}. 

\begin{theorem}[\citet{mokhtari2020stochastic}]
\label{thm:mokhtari}
Let $I^*$ be the maximizer of $\Feo^{\xi}$. \textsc{NMSCG} with pipage rounding, after $\bigO \left( p^{5/2}/\epsilon^3 \right)$ evaluations of $R$, achieves a solution $I$ such that 
$$\E\left[\Feo^{\xi}(I)\right] \geq \frac{1}{e} \Feo^{\xi}(I^*) - \epsilon.$$
\end{theorem}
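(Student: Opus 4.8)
The plan is to obtain this bound as a direct specialization of the \textsc{NMSCG} convergence guarantee of \citet{mokhtari2020stochastic} applied to the continuous relaxation $\feo^{\xi}$. Their theorem delivers a $(1/e)$-approximation, up to an additive $\epsilon$, for maximizing a (possibly non-monotone) continuous DR-submodular function over a down-closed convex body, given access to an unbiased stochastic gradient, and it quantifies the number of stochastic-gradient queries needed. So the work is to (a) check the structural hypotheses, (b) verify that the estimator in Eq.~\eqref{eqn:grad} is an admissible stochastic gradient, (c) translate their query complexity into a count of evaluations of $R$, and (d) carry the guarantee through pipage rounding.

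For (a), the preceding lemma already establishes that $\Feo^{\xi}$ is non-monotone submodular over $[p]$, so $\feo^{\xi}$ is its multilinear extension and hence continuous DR-submodular, and the feasible set $\{x : \sum_i x_i \le q,\ 0 \le x_i \le 1\}$ is exactly the independence polytope of the rank-$q$ uniform matroid, which is convex and down-closed, matching their setting. For (b), because the multilinear extension equals $\E_{I\sim x}[\Feo^{\xi}(I)]$ when the coordinates are independent Bernoulli$(x_i)$, drawing $I \sim x$ and (using the assumed uniform sampler over the essential graph) $G \sim \mathcal{G}$, and forming the coordinatewise finite differences $\hat{f}_{\text{EO}}^{\xi}(I,G; I_i \leftarrow 1) - \hat{f}_{\text{EO}}^{\xi}(I,G; I_i \leftarrow 0)$, yields an unbiased estimate of $\nabla\feo^{\xi}(x)$; and since $\hat{f}_{\text{EO}}^{\xi}(I,G) = \abs{R(\xi\cup\{I\},G)} \le \binom{p}{2}$ by Eq.~\eqref{eqn:f-R}, the estimator, its variance, and the smoothness of $\feo^{\xi}$ are all bounded by fixed polynomials in $p$, which is what licenses instantiating their rate with explicit constants.

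For (c), assembling one stochastic estimate of $\nabla\feo^{\xi}$ costs $\bigO(p)$ evaluations of $R$ (two coordinatewise finite differences per node); feeding this, together with the problem constants — the $\bigO(p^2)$ uniform bound on $\hat{f}_{\text{EO}}^{\xi}$, the resulting $\bigO(p^{3/2})$ bounds on the gradient norm and its variance, and the $\bigO(\sqrt{p})$ diameter of the feasible set — into the $\bigO(1/\epsilon^3)$-type iteration-and-sample complexity of \citet{mokhtari2020stochastic} and multiplying through gives the stated $\bigO(p^{5/2}/\epsilon^3)$ total. For (d), their theorem yields $\E[\feo^{\xi}(x)] \ge \tfrac1e \max_{x'} \feo^{\xi}(x') - \epsilon \ge \tfrac1e \Feo^{\xi}(I^*) - \epsilon$, the last inequality because the indicator vector of $I^*$ is feasible; and since $\Feo^{\xi}$ is submodular, pipage rounding satisfies $\E[\Feo^{\xi}(\textsc{Round}(x))] = \feo^{\xi}(x)$ \citep{calinescu2011maximizing}, so taking expectations over the internal randomness of \textsc{NMSCG} and applying the tower rule gives $\E[\Feo^{\xi}(I)] \ge \tfrac1e \Feo^{\xi}(I^*) - \epsilon$.

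The step I expect to be the real obstacle is (c): getting the power of $p$ exactly right. One must track precisely how the function-value bound, gradient-norm and variance bounds, smoothness constant, and constraint-set diameter enter the \citet{mokhtari2020stochastic} complexity estimate and confirm that the product with the $\bigO(p)$ per-sample $R$-cost collapses to $p^{5/2}$ rather than some neighbouring power — pure bookkeeping, but exactly where an off-by-a-power-of-$p$ slip would hide. A minor additional point is to check that pipage rounding stays inside the uniform-matroid polytope (so the output genuinely satisfies $\abs{I}\le q$) and that its expectation identity composes correctly with the randomness of the gradient iterations.
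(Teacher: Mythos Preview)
Your proposal is correct and takes the same route as the paper, which in fact does not give its own proof of this statement: it is attributed to \citet{mokhtari2020stochastic}, and the only argument the paper supplies is the short paragraph immediately after the theorem, noting that the original complexity is stated in terms of single-sample gradient approximations and that these are in constant-factor correspondence with evaluations of $R$. Your steps (a)--(d) are exactly a fleshed-out version of that citation-plus-translation, filling in the hypothesis checks (DR-submodularity of the multilinear extension, down-closed matroid polytope, unbiasedness of the estimator, pipage-rounding identity) that the paper leaves implicit.

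One small bookkeeping discrepancy worth flagging: you charge $\bigO(p)$ evaluations of $R$ per stochastic gradient (two finite differences per coordinate), whereas the paper asserts the conversion from gradient samples to $R$-evaluations is a constant factor. This comes down to whether the dimension dependence is already inside the Mokhtari et al.\ rate or multiplied in afterward; either reading lands on the same $\bigO(p^{5/2}/\epsilon^3)$, and you rightly identify step (c) as the place where such an off-by-a-power slip would hide. The paper does not carry out that accounting explicitly either, so your caution there is well placed rather than a gap.
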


The original result measures runtime in terms of the number of times we approximate the gradient in Equation~\ref{eqn:grad} with a single sample (in our case a single $G, I$ tuple). From Equations~\ref{eqn:f-R} and \ref{eqn:grad} we can see that the number of gradient approximations is a constant factor of the number of evaluations of $R$. Hence, we measure runtime in terms of number of evaluations of $R$. The bottleneck for evaluating $R$ is applying the Meek Rules, which can be computed in time polynomial in $p$ \cite{meek1995casual}. Note that the \textsc{NMSCG} subroutine can be modified to stabilize gradient updates by the approximation of a Hessian, in which case the same guarantee can be achieved in $\bigO \left( p^{3/2}/\epsilon^2 \right)$ \cite{hassani2019stochastic}. We use this version of NMSCG for the experiments. 
\begin{figure}[h!]
\begin{minipage}[t]{.48\linewidth}
\centering
\begin{algorithm}[H]
\caption{\scglong (\scg{})}
\label{alg:algo1}
\begin{algorithmic}
\STATE {\bfseries Input:} essential graph $\mathcal{G}$, constraints $C_{m, q}$, objective $\Feo$
 \STATE Init $\xi \leftarrow \emptyset$
 
 \WHILE{$\abs{\xi} \leq m$}
  \STATE $I \leftarrow \textrm{\textsc{Round}}(\textrm{\textsc{NMSCG}}(\Feo^{\xi}, q))$
  \STATE $\xi \leftarrow \xi \cup \{I\}$ 
 \ENDWHILE
 \STATE {\bfseries Output:} a set $\xi$ of interventions
\end{algorithmic}
\end{algorithm}
\end{minipage}
\quad
\begin{minipage}[t]{.48\linewidth}
\begin{algorithm}[H]
\caption{\ssglong (\ssg{})}
\label{alg:algo2}
\begin{algorithmic}
\STATE {\bfseries Input:} essential graph $\mathcal{G}$, constraints $C_{m, q}$, objective $\apFinf$
 \STATE Init $\mathcal{I} \leftarrow \emptyset$ 
 \STATE $\mathcal{S} \leftarrow \textrm{\textsc{SEPARATE}}(q, \mathcal{G})$
 \WHILE{$\abs{\xi} \leq m$}
  \STATE $I' \leftarrow \argmax_I \apFinf(\mathcal{I} \cup \{I \})$ 
  \STATE $\xi \leftarrow \xi \cup \{I'\}$ 
 \ENDWHILE
 \STATE {\bfseries Output:} a set $\xi$ of interventions
\end{algorithmic}
\end{algorithm}
\end{minipage}

\end{figure}

Our main result now follows from the fact that selecting interventions approximately greedily will lead to an approximation guarantee due to lemma~\ref{lem:F_mon_sub}.
 
\begin{theorem}
\label{thm:eo}
Let $\xi^* \in C_{N,b}$ be the maximizer of Objective \ref{obj:edgeorientobj}. \scg{} will, after $\bigO \left( m^4 p^{5/2}/\epsilon^3 \right)$ evaluations of $R$, achieve a solution $\xi$ such that 
$$\E[\Feo(\xi)] \geq \left( 1 - \frac{1}{e^{1/e}} \right) \Feo(\xi^*) - \epsilon.$$

\end{theorem}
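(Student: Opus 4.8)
The plan is to run the classical analysis of greedy maximization of a monotone submodular function under an $m$‑element cardinality constraint, but with an \emph{approximate} greedy oracle supplied by the \textsc{NMSCG} subroutine. By Lemma~\ref{lem:F_mon_sub}, $\Feo$ is monotone submodular over $\mathcal{I}$, and the optimum $\xi^* = \{I_1^*,\dots,I_m^*\}$ has $|\xi^*|\le m$ with every $I_j^*$ of size at most $q$. First I would fix the state $\xi_{t-1}$ of the algorithm after $t-1$ iterations and show that the best single feasible intervention makes good progress toward $\xi^*$: by monotonicity $\Feo(\xi_{t-1}\cup\xi^*)\ge\Feo(\xi^*)$, and by submodularity (telescoping) $\Feo(\xi_{t-1}\cup\xi^*)-\Feo(\xi_{t-1})\le\sum_{j=1}^m\big(\Feo(\xi_{t-1}\cup\{I_j^*\})-\Feo(\xi_{t-1})\big)$, so the maximizer $I^*_{t-1}$ of $\Feo^{\xi_{t-1}}$ over interventions of size $\le q$ satisfies $\Feo^{\xi_{t-1}}(I^*_{t-1})-\Feo(\xi_{t-1})\ge\tfrac1m\big(\Feo(\xi^*)-\Feo(\xi_{t-1})\big)$.

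The second step is to invoke the subroutine guarantee. Applying Theorem~\ref{thm:mokhtari} conditionally on $\xi_{t-1}$ with target accuracy $\epsilon':=\epsilon/(em)$, the intervention $I_t=\textsc{Round}(\textsc{NMSCG}(\Feo^{\xi_{t-1}},q))$ added in round $t$ obeys $\E[\Feo^{\xi_{t-1}}(I_t)\mid\xi_{t-1}]\ge\tfrac1e\Feo^{\xi_{t-1}}(I^*_{t-1})-\epsilon'$. Combining with the previous display, taking expectations over the algorithm's internal randomness via the tower property, and writing $\delta_t:=\Feo(\xi^*)-\E[\Feo(\xi_t)]$, I obtain the recursion $\delta_t\le\big(1-\tfrac1{em}\big)\delta_{t-1}+\epsilon'$.

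The third step is to unroll. Since $\delta_0=\Feo(\xi^*)-\Feo(\emptyset)\le\Feo(\xi^*)$, unrolling over $m$ rounds gives $\delta_m\le\big(1-\tfrac1{em}\big)^m\Feo(\xi^*)+\epsilon'\sum_{k=0}^{m-1}\big(1-\tfrac1{em}\big)^k\le e^{-1/e}\Feo(\xi^*)+em\,\epsilon'=e^{-1/e}\Feo(\xi^*)+\epsilon$, using $\big(1-\tfrac1{em}\big)^m\le e^{-1/e}$ and $\sum_{k\ge0}\big(1-\tfrac1{em}\big)^k=em$. Rearranging yields $\E[\Feo(\xi)]\ge(1-e^{-1/e})\Feo(\xi^*)-\epsilon$, as claimed (the \textsc{while} condition may add one further intervention, which by monotonicity only helps). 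For the runtime, each of the $m$ rounds calls Theorem~\ref{thm:mokhtari} at accuracy $\epsilon'=\Theta(\epsilon/m)$, costing $\bigO(p^{5/2}/\epsilon'^3)=\bigO(m^3 p^{5/2}/\epsilon^3)$ evaluations of $R$; summing over $m$ rounds gives $\bigO(m^4 p^{5/2}/\epsilon^3)$.

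I expect the main obstacle to be the bookkeeping of the two error sources — the multiplicative $1/e$ loss from the continuous subroutine and the per‑round additive $\epsilon'$ — through the recursion, and in particular choosing $\epsilon'$ so that the total additive error is exactly $\epsilon$ while the runtime dependence on $m$ stays as stated; this is why the geometric bound $\sum_k\big(1-\tfrac1{em}\big)^k=em$ rather than a crude $m\epsilon'$ is used. A secondary point requiring care is that each guarantee of Theorem~\ref{thm:mokhtari} is itself only in expectation, so it must be chained across rounds by conditioning on $\xi_{t-1}$ and applying the tower property, and that the greedy benchmark $I^*_{t-1}$ ranges only over interventions of size $\le q$, which is exactly the feasible family containing the elements of $\xi^*$, so the submodularity comparison is legitimate.
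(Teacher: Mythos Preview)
Your proposal is correct and follows essentially the same route as the paper's own proof: monotonicity and submodularity of $\Feo$ over $\mathcal{I}$ (Lemma~\ref{lem:F_mon_sub}) bound the optimum against the best single-step marginal gain, Theorem~\ref{thm:mokhtari} turns this into the recursion $\delta_t \le (1-\tfrac{1}{em})\,\delta_{t-1} + \epsilon'$, and unrolling together with $(1-\tfrac{1}{em})^m \le e^{-1/e}$ yields the stated bound. The only cosmetic difference is that you fix the per-round accuracy $\epsilon' = \epsilon/(em)$ upfront and bound the geometric error sum by $em$, whereas the paper runs each \textsc{NMSCG} call at accuracy $\epsilon$, obtains a total additive error $\sigma\epsilon$ with $\sigma = \mathcal{O}(m)$, and then rescales; both arrive at the same $\mathcal{O}(m^4 p^{5/2}/\epsilon^3)$ count of $R$-evaluations.
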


We highlight this is a constant-factor guarantee with respect to the optimal batch of interventions. Our bound requires compute that is low order in $p$ with no dependence on $q$. Without even accounting for computing the possibly exponential sum in the objective, merely enumerating all possible interventions for fixed $q$ is $\mathcal{O}(p^q)$.

\citet{ghassami2018budgeted} give a similar constant-factor guarantee for batches of single-perturbation interventions ($q=1$). Their bound is within $1 - \frac{1}{e}$ of the \emph{optimal single-perturbation batch}. In the supplementary material, we show that the optimal multi-perturbation intervention can orient up to $q$ times more edges than the optimal single perturbation intervention. In Section~\ref{sec:experiments} we experimentally verify the value of multi-perturbation interventions by comparing \scg{} to the algorithm presented in \citet{ghassami2018budgeted}. If we allow for soft interventions, the $1-\frac{1}{e}$ guarantee can also be obtained for multi-perturbation interventions. See the supplementary material for details. 

\subsection{Optimizing the Mutual Information Objective}
\label{subsec:method:MI}

\looseness -1 We now consider an algorithm for maximizing Objective~\ref{obj:infABCDobj}. First, we note that computing the sum over $\mathcal{G}$ and the size of $\xi \cup \xi'$-essential graphs is computationally intractable. The computation of $\abs{\textrm{Ess}^{\xi \cup \xi'}(G)}$ makes \Finf{} a nested sum over DAGs. 
Like \citet{agrawal2019abcd}, we optimize a computationally tractable approximation to \Finf. First, we uniformly sample a multiset of DAGs $\tilde{\mathcal{G}}$ from $\mathcal{G}$ to construct our approximate objective
$$\apFinf(\xi) = - \frac{1}{\abs{\tilde{\mathcal{G}}}} \sum_{G\in \tilde{\mathcal{G}}} \log_2 \abs{\tilde{\textrm{Ess}}^{\xi \cup \xi'}(G)},$$
where $\tilde{\textrm{Ess}}^{\xi \cup \xi'}$ is the submultiset of $\tilde{\mathcal{G}}$ consisting of elements in $\textrm{Ess}^{\xi \cup \xi'}$. 
A submodularity result similar to lemma~\ref{lem:F_mon_sub} can also be proven for \apFinf. 

\begin{lemma}
\label{lem:U_mon_sub}
\apFinf{} is monotone submodular.
\end{lemma}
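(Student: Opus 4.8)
The plan is to establish both monotonicity and submodularity of $\apFinf$ by exploiting the fact that, structurally, $\apFinf$ is the same type of object as $\Feo$: a nonnegative combination of indicator functions that measure whether performing interventions $\xi$ collapses some candidate DAG out of an equivalence class. Concretely, for each sampled DAG $G' \in \tilde{\mathcal{G}}$ and each sampled DAG $G \in \tilde{\mathcal{G}}$, let $\mathbbm{1}(G \notin \tilde{\textrm{Ess}}^{\xi\cup\xi'}(G'))$ be the event that $G$ is separated from $G'$ by the interventions $\xi\cup\xi'$ (together with the prior interventions). Then $\abs{\tilde{\textrm{Ess}}^{\xi\cup\xi'}(G')} = \sum_{G\in\tilde{\mathcal{G}}} \mathbbm{1}(G \in \tilde{\textrm{Ess}}^{\xi\cup\xi'}(G'))$, so $-\log_2$ of this quantity is a monotone concave function applied to a monotone (in $\xi$) decreasing sum of indicators. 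I would first argue the inner quantity: for fixed $G'$, the set of $G$ that remain $\xi\cup\xi'$-Markov-equivalent to $G'$ can only shrink as we add interventions to $\xi$ (adding an intervention can only introduce new conditional-independence distinctions, never remove them — this is the standard fact that $\textrm{Ess}^{\xi_1}(G) \supseteq \textrm{Ess}^{\xi_2}(G)$ when $\xi_1 \subseteq \xi_2$, which underlies the consistency discussion earlier). Hence $\abs{\tilde{\textrm{Ess}}^{\xi\cup\xi'}(G')}$ is nonincreasing in $\xi$, so $-\log_2\abs{\tilde{\textrm{Ess}}^{\xi\cup\xi'}(G')}$ is nondecreasing, and therefore so is the average $\apFinf$; this gives monotonicity.

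For submodularity, I would fix $G'$ and analyze $g_{\xi,G'}(I) := \log_2 \abs{\tilde{\textrm{Ess}}^{\xi'\cup\xi}(G')} - \log_2\abs{\tilde{\textrm{Ess}}^{\xi'\cup\xi\cup\{I\}}(G')}$, the contribution of $G'$ to the marginal gain of adding intervention $I$. Writing $n(\xi) := \abs{\tilde{\textrm{Ess}}^{\xi'\cup\xi}(G')}$ and $s(\xi,I) := \abs{\{G \in \tilde{\textrm{Ess}}^{\xi'\cup\xi}(G') : G \text{ is separated from } G' \text{ by } I\}}$, the marginal gain is $\log_2\frac{n(\xi)}{n(\xi)-s(\xi,I)}$. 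As $\xi$ grows, $n(\xi)$ shrinks and the "survivors available to be cut by $I$" shrink as well; the key monotone-structure fact I need is that $s(\xi,I)$ is itself nonincreasing in $\xi$ (any $G$ cut by $I$ at a larger $\xi$ was already present and cuttable at a smaller $\xi$, since the pool only shrinks), equivalently that the ratio $s(\xi,I)/n(\xi)$ of the surviving pool that $I$ removes is nonincreasing — this is the place that genuinely requires an argument about how essential graphs compose, and it is the main obstacle. Granting it, and using that $t \mapsto \log_2\frac{1}{1-t}$ is increasing on $[0,1)$, the per-$G'$ marginal gain $\log_2\frac{1}{1-s(\xi,I)/n(\xi)}$ is nonincreasing in $\xi$, hence submodular; averaging over $G'\in\tilde{\mathcal{G}}$ preserves submodularity since a nonnegative combination of submodular functions is submodular.

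An alternative and cleaner route — which I would likely present instead, since it sidesteps reasoning about the ratio directly — is to reduce to a known submodular primitive. Observe that for each $G'$, $\log_2\abs{\tilde{\textrm{Ess}}^{\xi'}(G')} - \log_2\abs{\tilde{\textrm{Ess}}^{\xi'\cup\xi}(G')} = -\sum_{k=1}^{K}\log_2\!\big(1 - c_k(\xi)\big)$ telescoped appropriately is awkward; better, use that $h(A) := -\log_2\abs{A}$ composed with set-intersection is not submodular in general but $-\log$ of a cardinality is when the cardinality arises as $\abs{A_0} \prod (\text{conditional survival probabilities})$ and the interventions act as independent "filters" — however independence fails here in general. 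So the honest statement is that the result rests on the composition lemma for $\xi$-essential graphs ($\textrm{Ess}^{\xi_1\cup\xi_2}(G)$ is obtained by intersecting the "separation constraints" of $\xi_1$ and $\xi_2$), and once that monotone-lattice structure is in hand, submodularity of $-\log_2$ of a decreasing, "diminishing-cut" cardinality follows from concavity of $\log$. I expect the proof in the supplementary material to do exactly this: isolate the lattice/intersection structure of $\xi$-essential graphs as a lemma, then invoke concavity of $-\log_2$ and closure of submodularity under nonnegative combination. The main obstacle, to reiterate, is rigorously showing the "diminishing cut" property $s(\xi_1,I) \geq s(\xi_2,I)$ (with appropriate ratio comparison) for $\xi_1 \subseteq \xi_2$, which is the submodular heart of the claim; everything else is closure properties and a one-line concavity argument.
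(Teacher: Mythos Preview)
Your per-$G'$ decomposition does not work: the individual summands $-\log_2\abs{\tilde{\textrm{Ess}}^{\xi\cup\xi'}(G')}$ are \emph{not} submodular in $\xi$, so proving submodularity termwise and then averaging is impossible. A concrete counterexample is the three-node path $1-2-3$, whose MEC has three members $G_1,G_2,G_3$ rooted at nodes $1,2,3$ respectively. With $I=\{1\}$ and $J=\{3\}$ one checks that the block of $G_2$ has sizes $3,2,2,1$ under $\emptyset,\{I\},\{J\},\{I,J\}$. The marginal gain of $I$ for the $G_2$-term is $\log_2(3/2)\approx 0.585$ at $\emptyset$ but $\log_2 2 = 1$ at $\{J\}$, so the diminishing-returns inequality fails for that term. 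In your notation this means the ratio $s(\xi,I)/n(\xi)$ \emph{increases} (from $1/3$ to $1/2$) as $\xi$ grows, so the ``main obstacle'' you flag is not merely hard to prove --- it is false.

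What rescues the lemma is that the \emph{sum} over $G'\in\tilde{\mathcal{G}}$ is (up to an additive constant) a conditional entropy: writing $Y_\xi$ for the $\xi$-MEC label of a uniformly random $G\in\tilde{\mathcal{G}}$, one has $\apFinf(\xi) = -H(G\mid Y_\xi)$, so the marginal gain of $I$ is $H(G\mid Y_\xi)-H(G\mid Y_{\xi\cup\{I\}}) = H(Y_I\mid Y_\xi)$ (since $Y_I$ is a deterministic function of $G$). Submodularity then follows immediately from ``conditioning reduces entropy'', because $Y_{\xi_2}$ refines $Y_{\xi_1}$ when $\xi_1\subseteq\xi_2$. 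This is precisely the mutual-information argument of \citet{agrawal2019abcd} for $\Fmi$, and the paper's proof simply cites that result and observes that $\apFinf$ is its infinite-sample special case. The cancellation across $G'$-terms that makes the entropy argument work is exactly what your termwise approach throws away.
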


We first show that an approach similar to that used by \scg{} does not so easily give a near-optimal guarantee. Similarly to above, define $\apFinf^{\xi}(I) = \apFinf(\xi \cup \{I\})$ for fixed $\xi$.

\begin{proposition}
There exists $\mathcal{G}, \tilde{\mathcal{G}}$ such that $\apFinf^{\xi}$ is not submodular.
\end{proposition}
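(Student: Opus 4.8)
The plan is to exhibit a small explicit counterexample: a ground-truth essential graph $\mathcal{G}$ together with a sampled multiset $\tilde{\mathcal{G}}$ (possibly just taking $\tilde{\mathcal{G}} = \mathcal{G}$ if the MEC is small enough), a fixed prior intervention set $\xi$ (likely $\xi = \emptyset$), two node-sets $I_1 \subseteq I_2$ in the groundset, and a node $v \notin I_2$, such that the diminishing-returns inequality
$$\apFinf^{\xi}(I_1 \cup \{v\}) - \apFinf^{\xi}(I_1) \;<\; \apFinf^{\xi}(I_2 \cup \{v\}) - \apFinf^{\xi}(I_2)$$
fails — i.e. the marginal gain of adding $v$ is strictly larger when the base intervention $I_2$ is bigger. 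Intuitively, since $\apFinf^{\xi}(I) = \apFinf(\xi \cup \{I\})$ ranges over single interventions $I$ of growing size, a larger intervention $I_2$ can orient more edges before the Meek rules are applied, and the extra edge cut by adding $v$ to $I_2$ may trigger a cascade of Meek-rule orientations that collapses the essential graph dramatically, whereas the same node $v$ added to the smaller $I_1$ cuts an edge that happens to be "Meek-isolated" and shrinks the log-essential-graph-size only a little. This is exactly the kind of non-local interaction that breaks submodularity, and it is essentially the reason the ordinary greedy/continuous-greedy machinery of \scg{} does not transfer: here the groundset element is a node, but each choice of $\xi$ worth of nodes forms a single intervention whose informativeness is a highly non-additive function of its members.

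Concretely, I would look for a DAG on roughly $4$–$6$ nodes whose MEC is a manageable size (a tree or near-tree MEC, so $\abs{\mathcal{G}}$ is small and the $\log_2\abs{\tilde{\textrm{Ess}}^{\xi \cup \xi'}(G)}$ terms are easy to tabulate by hand), and engineer a "chain-like" configuration where orienting one more edge via a two-node intervention versus a one-node intervention produces qualitatively different Meek-rule propagation. A natural candidate is a path or a small tree where $I_1 = \emptyset$, $I_2 = \{a,b\}$ for suitably chosen $a,b$, and $v$ is a node such that $\{v\}$ alone orients only the edges incident to $v$, while $\{a,b,v\}$ orients a set of edges that, together under Meek's rules, forces the orientation of a whole subtree. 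Then one computes $\abs{\textrm{Ess}^{\{I\}}(G)}$ for $I \in \{\emptyset,\{v\},\{a,b\},\{a,b,v\}\}$ and every $G$ in the (sampled) MEC, plugs into the four terms, and checks the strict inequality numerically. If $\tilde{\mathcal{G}} = \mathcal{G}$ already works there is nothing more to do; otherwise one picks a specific multiset $\tilde{\mathcal{G}}$ that amplifies the effect, which only makes the construction easier since we have full freedom in choosing the sample.

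The main obstacle is combinatorial bookkeeping rather than conceptual: one must correctly compute the $\xi$-essential graph after each candidate intervention, which means carefully applying the orientation step (orient $i-j$ when exactly one endpoint is in the intervention) followed by a full closure under the four Meek rules, and then counting the number of DAGs consistent with the resulting partially-directed graph. Getting these counts exactly right on a hand-picked example is where errors creep in, so I would double-check the essential-graph sizes by enumerating the consistent DAGs directly. A secondary subtlety is making sure the example respects the groundset constraint (the intervention sizes $\abs{I} \le q$ must be admissible, so I should pick $q \ge 3$ or adjust the sizes of $I_1, I_2$ accordingly), and that $I_1 \subsetneq I_2$ and $v \notin I_2$ as required by the definition of submodularity. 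Once a valid instance is found, the proof is simply the displayed table of objective values and the one-line arithmetic verifying the violated inequality.
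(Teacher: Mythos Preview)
Your approach is essentially the paper's: exhibit a small explicit instance (the paper uses six nodes, $\xi'=\emptyset$, $I_1=\{1,2\}\subset I_2=\{1,2,3\}$, $v=0$, and a hand-picked multiset $\tilde{\mathcal{G}}$ of three DAGs) and verify the inequality numerically.

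One caveat worth flagging: your first instinct is to try $\tilde{\mathcal{G}}=\mathcal{G}$ and only fall back to a proper sub-multiset if that fails. The paper explicitly remarks that it did \emph{not} manage to construct a counterexample with $\tilde{\mathcal{G}}=\mathcal{G}$ and in fact conjectures that $\apFinf^{\xi}$ \emph{is} submodular in that special case. So your ``natural candidate'' search on a small tree MEC with $\tilde{\mathcal{G}}=\mathcal{G}$ is likely to come up empty, and the freedom to choose $\tilde{\mathcal{G}}$ as a specific (non-uniform) multiset is not just a convenience but probably essential to the construction. You should plan to exploit that freedom from the outset rather than treat it as a fallback.
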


Hence we cannot use existing algorithms for submodular optimization to construct near-greedy interventions. We instead take a different approach. Suppose we can reduce $\mathcal{I}$ to some set of interventions $\mathcal{S}$ much smaller than $\mathcal{I}$, such that $\apFinf(\mathcal{S})$ has the maximum possible objective value. Due to lemma~\ref{lem:U_mon_sub}, we can obtain a guarantee by greedily selecting $m$ interventions from $\mathcal{S}$. A method for constructing the set $\mathcal{S}$ comes from {\em separating system} constructions. 

\begin{definition}
A $q$-sparse $\mathcal{G}$-separating system of size $N$ is a set of interventions $\mathcal{S} = \{S_1, S_2..., S_N \}$ such that $\abs{S_i} \leq q$ and for every undirected edge $(i, j) \in \mathcal{G}$ there is an element $S\in \mathcal{S}$ such that exactly one of $i, j$ is in $S$ \cite{shanmugam2015learning}. 
\end{definition}

A separating system of $\mathcal{G}$ completely identifies the true DAG, and hence obtains the maximum possible objective value $0$ without necessarily satisfying the constraints $C_{m,q}$. As an example, in Figure~\ref{fig:over}(b,c) we see 1 and 2--sparse separating systems respectively. 

We will make use of an algorithm $\textrm{\textsc{SEPARATE}}(q, \mathcal{G})$ which efficiently constructs a $q$-sparse separating system of $\mathcal{G}$. \citet{wegener1979separating} and \citet{shanmugam2015learning} give construction methods that are agnostic to the structure of $\mathcal{G}$ (it will identify any DAG with $p$ nodes). \citet{lindgren2018experimental} give a construction method that depends on the structure of $\mathcal{G}$. 

\looseness -1 Since the separating system obtains the maximum objective value, we can greedily select $m$ interventions from this set and obtain a lower bound on the objective value due to submodularity (lemma \ref{lem:U_mon_sub}). 

\begin{theorem}
\label{trm:ss}
\looseness -1 For $q\leq \lfloor{p/2 \rfloor}, m \leq \abs{\mathcal{S}}$, \ssg{}{} outputs $\xi \in C_{m,q}$ with objective value $$\apFinf(\xi) \geq (1-\frac{m}{\lceil p/q \rceil \lceil \log p \rceil}) \apFinf(\emptyset)$$

\looseness -1 in $\mathcal{O}(m \abs{\tilde{\mathcal{G}}} \frac{p}{q} \log p)$ evaluations of $R$, when using $\textrm{\textsc{SEPARATE}}$ as in \citet{shanmugam2015learning}.
\end{theorem}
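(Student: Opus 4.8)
The plan is to combine three ingredients: the separating-system property, the monotone submodularity of \apFinf{} from Lemma~\ref{lem:U_mon_sub}, and a sharpening of the textbook greedy analysis that exploits the fact that \ssg{} selects only from the small set $\mathcal{S}$ rather than from all of $\mathcal{I}$. The starting point is that $\mathcal{S}$ is a maximizer of \apFinf{}: for each $S\in\mathcal{S}$ and each undirected edge $(i,j)$ of $\mathcal{G}$ separated by $S$ (exactly one endpoint in $S$), the first step in computing $R$ orients $i-j$, so after performing all interventions in $\mathcal{S}$ the whole graph is oriented, $\textrm{Ess}^{\mathcal{S}\cup\xi'}(G)=\{G\}$, and hence $\tilde{\textrm{Ess}}^{\mathcal{S}\cup\xi'}(G)$ consists of a single DAG for every $G\in\tilde{\mathcal{G}}$; therefore $\apFinf(\mathcal{S})=0$, which is the maximum possible value since each term $\log_2\abs{\cdot}$ in \apFinf{} is nonnegative and so $\apFinf(\xi)\le 0$ for every $\xi$. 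I would also record that for $q\le\lfloor p/2\rfloor$ the construction of \citet{shanmugam2015learning} returns a system of size $N:=\abs{\mathcal{S}}\le\lceil p/q\rceil\lceil\log p\rceil$, the constraint $q\le\lfloor p/2\rfloor$ being exactly what forces their $\log_{\lceil p/q\rceil}p$ factor down to at most $\log_2 p$.

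For the guarantee itself, note that since $m\le N$, \ssg{} is a well-defined $m$-step run of greedy on the ground set $\mathcal{S}$, producing $\xi=\{I_1,\dots,I_m\}$ with marginal gains $\delta_t=\apFinf(\{I_1,\dots,I_t\})-\apFinf(\{I_1,\dots,I_{t-1}\})$. Monotonicity of \apFinf{} makes the $\delta_t$ nonnegative, and submodularity together with the greedy selection rule makes them non-increasing, $\delta_1\ge\delta_2\ge\cdots$; moreover running greedy for the full $N$ steps exhausts $\mathcal{S}$, so $\sum_{t=1}^{N}\delta_t=\apFinf(\mathcal{S})-\apFinf(\emptyset)=-\apFinf(\emptyset)$. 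Because the sequence is non-increasing, its first $m$ terms carry at least an $m/N$ fraction of the total, $\sum_{t=1}^{m}\delta_t\ge\frac{m}{N}\sum_{t=1}^{N}\delta_t$, i.e.\ $\apFinf(\xi)-\apFinf(\emptyset)\ge-\frac{m}{N}\apFinf(\emptyset)$, which rearranges to $\apFinf(\xi)\ge(1-\frac{m}{N})\apFinf(\emptyset)$. Since $\apFinf(\emptyset)\le 0$ and $N\le\lceil p/q\rceil\lceil\log p\rceil$, replacing $N$ by this upper bound only weakens the right-hand side, giving the claimed $\apFinf(\xi)\ge(1-\tfrac{m}{\lceil p/q\rceil\lceil\log p\rceil})\apFinf(\emptyset)$.

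The runtime is then bookkeeping: there are $m$ greedy iterations, each evaluating \apFinf{} on at most $\abs{\mathcal{S}}=\bigO(\tfrac{p}{q}\log p)$ candidate interventions, and a single evaluation of \apFinf{} costs $\abs{\tilde{\mathcal{G}}}$ evaluations of $R$ (one per sampled DAG, to decide which DAGs of $\tilde{\mathcal{G}}$ survive in the essential graph), for a total of $\bigO(m\abs{\tilde{\mathcal{G}}}\tfrac{p}{q}\log p)$. The main obstacle I anticipate is resisting the temptation to quote the standard $\bigl(1-(1-1/N)^m\bigr)$ greedy guarantee: by Bernoulli's inequality that quantity is at most $m/N$, hence too weak here. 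The stated bound instead hinges on greedy's marginal gains being non-increasing along the run that eventually exhausts $\mathcal{S}$, so that truncating after $m$ steps forfeits at most a $1-m/N$ fraction of the total gain $-\apFinf(\emptyset)$; keeping the direction of every inequality straight is fiddly precisely because \apFinf{} is nonpositive. A secondary subtlety is the clean identity $\apFinf(\mathcal{S})=0$, which tacitly uses that a fully oriented graph leaves exactly one DAG in the (sampled) essential graph.
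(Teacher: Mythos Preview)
Your proof is correct. Both you and the paper start from $\apFinf(\mathcal{S})=0$ and the monotone submodularity of $\apFinf$ (Lemma~\ref{lem:U_mon_sub}), and both arrive at $\apFinf(\xi)\ge(1-m/\abs{\mathcal{S}})\,\apFinf(\emptyset)$ before substituting the size bound on $\mathcal{S}$ from \citet{shanmugam2015learning}; the runtime accounting is identical. The intermediate reasoning differs, however. The paper proceeds by induction on $m$: at each step, submodularity forces some remaining $I\in\mathcal{S}\setminus\xi_m$ to have marginal gain at least $-\apFinf(\xi_m)/(\abs{\mathcal{S}}-m)$ (otherwise adding the remaining $\abs{\mathcal{S}}-m$ elements could not close the gap to $\apFinf(\mathcal{S})=0$), and chaining this recursion yields the bound. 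You instead observe directly that greedy's marginal gains $\delta_1\ge\delta_2\ge\cdots$ are non-increasing (since $I_{t+1}$ was a candidate at step $t$, and submodularity bounds its later gain by its earlier one), so the first $m$ terms of a non-increasing nonnegative sequence carry at least an $m/N$ share of the telescoped total $\sum_{t=1}^{N}\delta_t=-\apFinf(\emptyset)$. Your averaging argument is slightly more elementary and makes the sign-tracking with the nonpositive objective transparent; the paper's inductive argument is closer to the standard greedy-versus-optimum template and would adapt more readily if $\mathcal{S}$ were only a near-maximizer rather than an exact one.
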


Increasing $q$ does not necessarily increase $\apFinf(\xi)$. However, the bound we give becomes more favourable as $q$ increases because the upper bound on $\abs{\mathcal{S}}$ decreases. In practice, we run \ssg{} $\forall q' \leq q$ and pick the intervention set with the highest objective value on \apFinf. 

Note that \ssg{} can also be used with a similar guarantee for an analogous approximation of $\Feo$. 

\section{Experiments}
\label{sec:experiments}

\looseness -1 To evaluate our algorithms we consider three settings. Firstly, randomly generated DAGs, using infinite samples per intervention. Secondly, randomly generated DAGs with linear SEMs, using finite samples per intervention. Finally, subnetworks of GRN models, using infinite samples per intervention. Full details on all of our experiments can be found in the supplementary material. For code to reproduce the experiments, see \url{https://github.com/ssethz/multi-perturbation-ed}. 

\vspace{-2mm}
\looseness - 1 \paragraph{Infinite samples} We evaluate our algorithms using Objective~\ref{obj:edgeorientobj}. We consider selecting a batch of experiments where only the MEC is currently known. We vary the type of random graph and the constraint set $C_{m,q}$. The following methods are compared:

\begin{figure*}[t]
\begin{center}
\minipage{\columnwidth / 3}
\textbf{a}
\centering
\includegraphics[width=\columnwidth ]{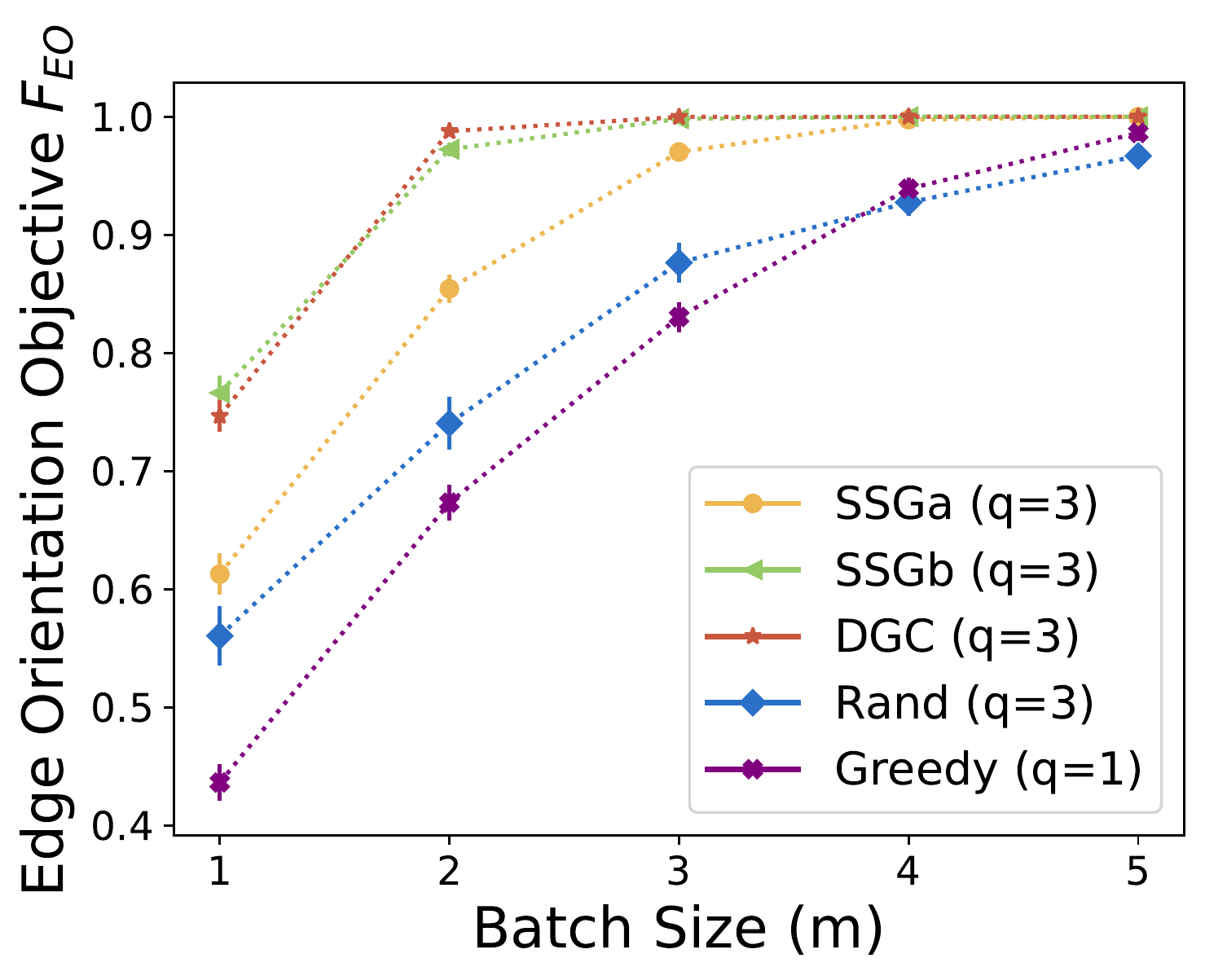}
\endminipage\hfill 
\minipage{\columnwidth / 3}
\textbf{b}
\centering
\includegraphics[width=\columnwidth ]{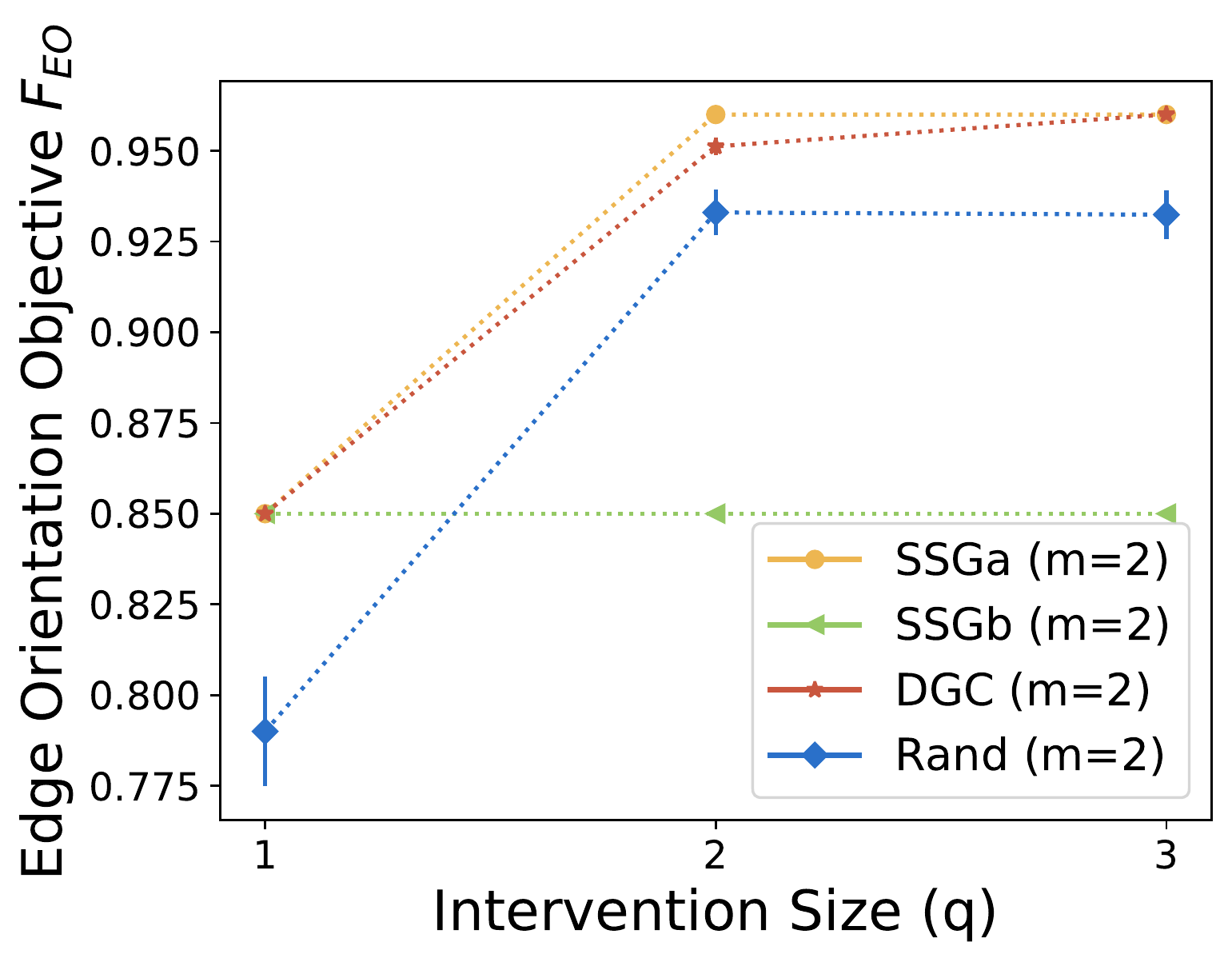}
\endminipage\hfill 
\minipage{\columnwidth / 3}
\textbf{c}
\centering
\includegraphics[width=\columnwidth ]{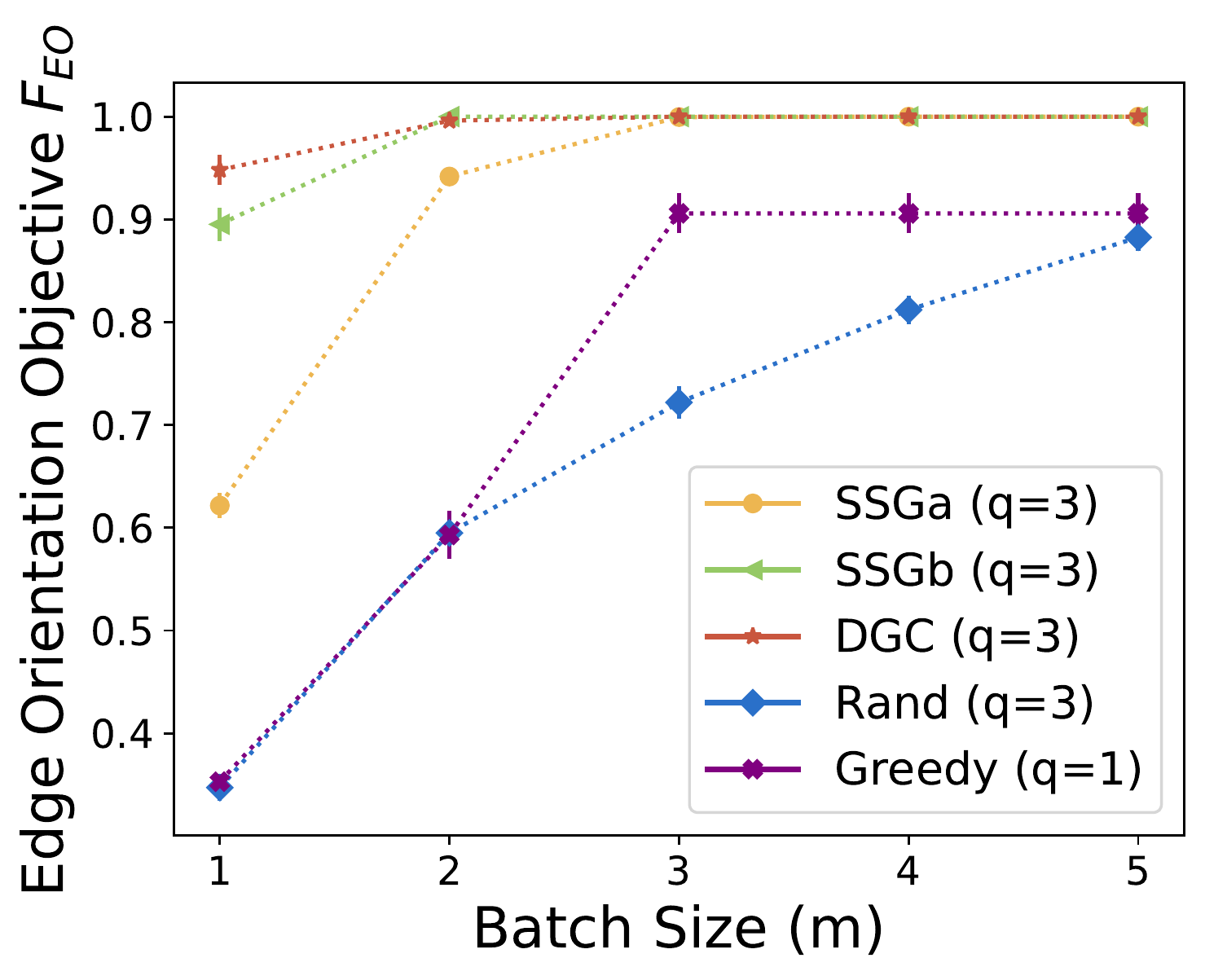}
\endminipage\hfill 
\minipage{\columnwidth/3}
\textbf{d}
\centering
\includegraphics[width=\columnwidth]{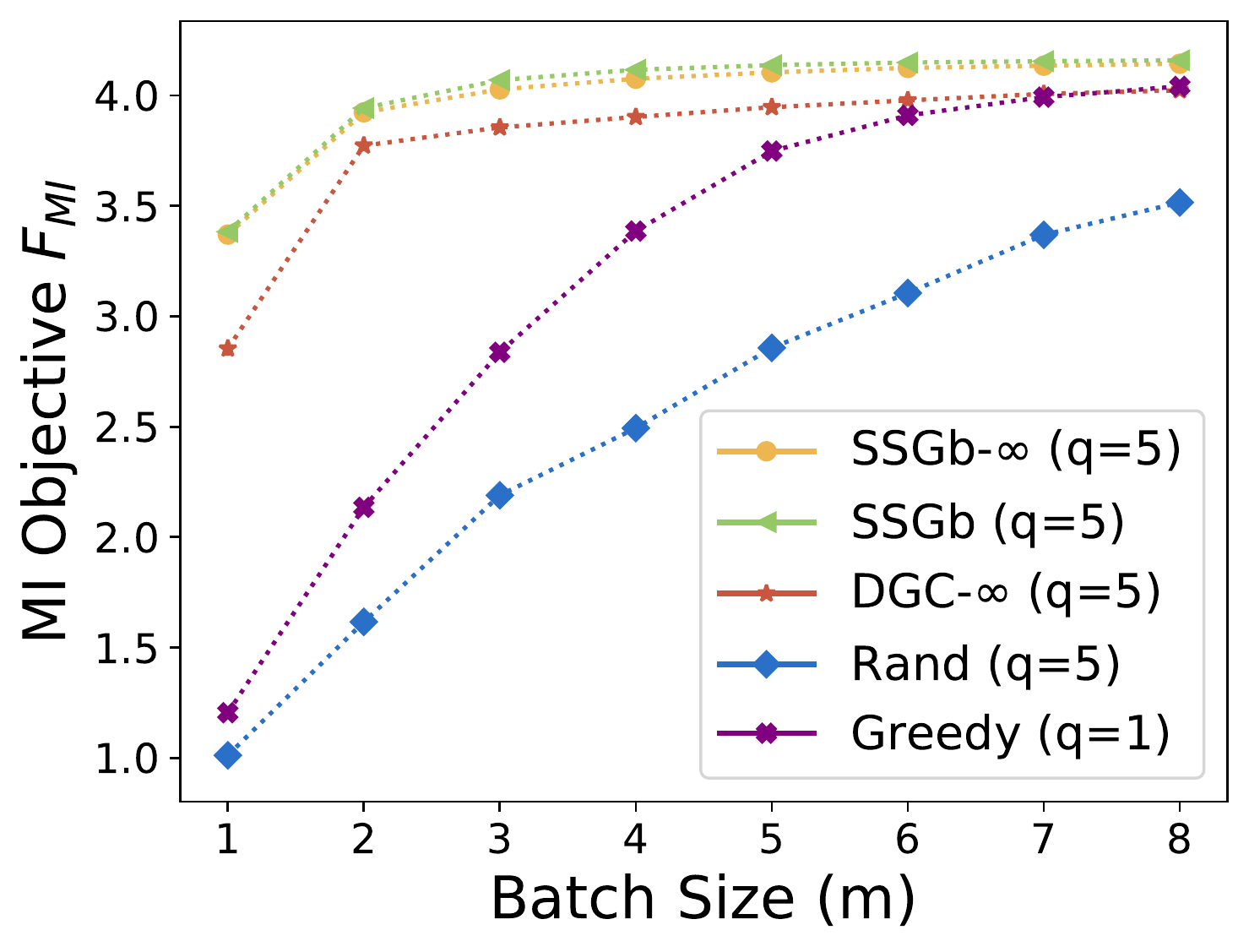}
\endminipage\hfill 
\minipage{\columnwidth/3}
\textbf{e}
\centering
\includegraphics[width=\columnwidth]{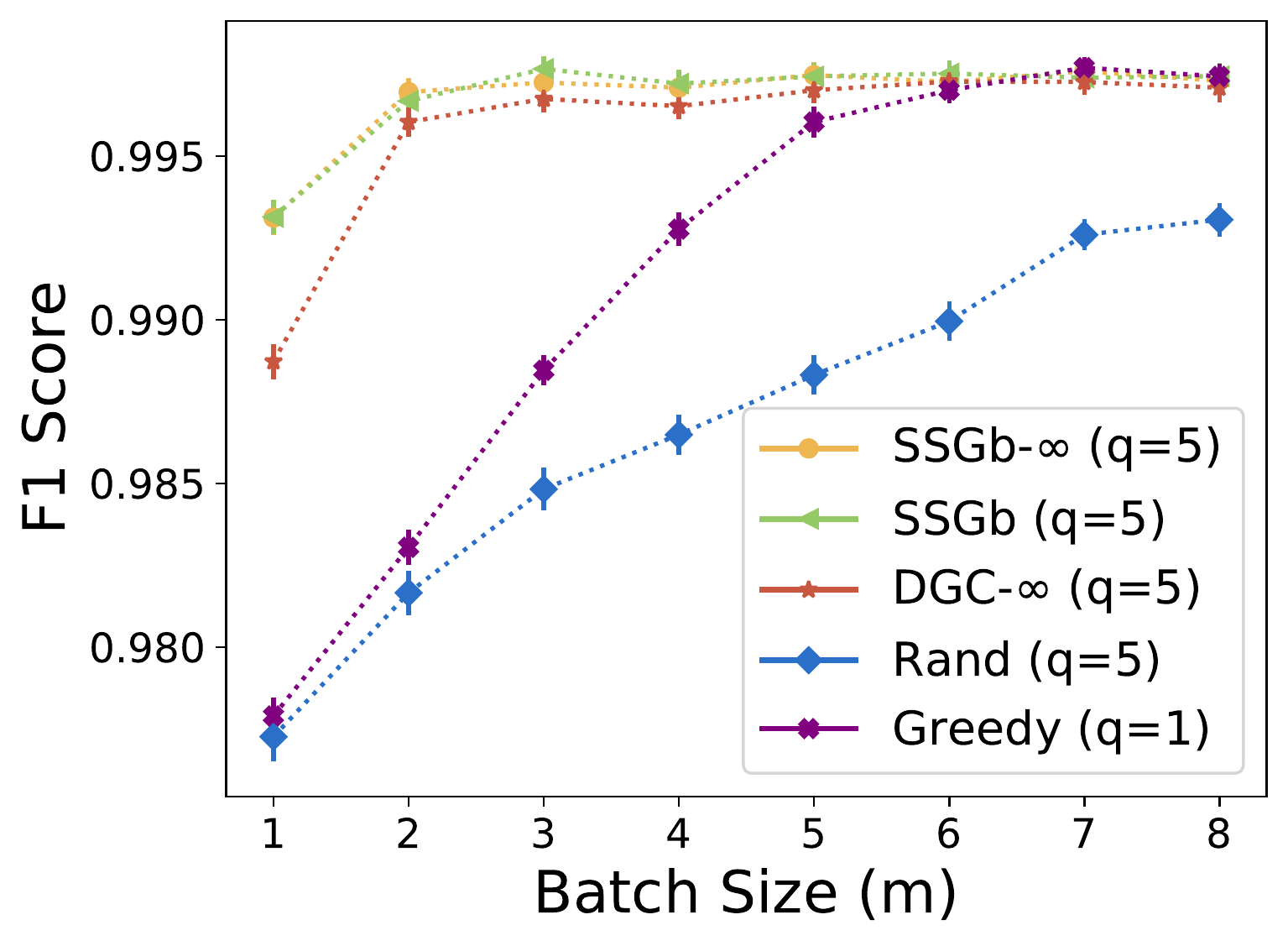}
\endminipage\hfill 
\minipage{\columnwidth/3}
\textbf{f}
\centering
\includegraphics[width=\columnwidth]{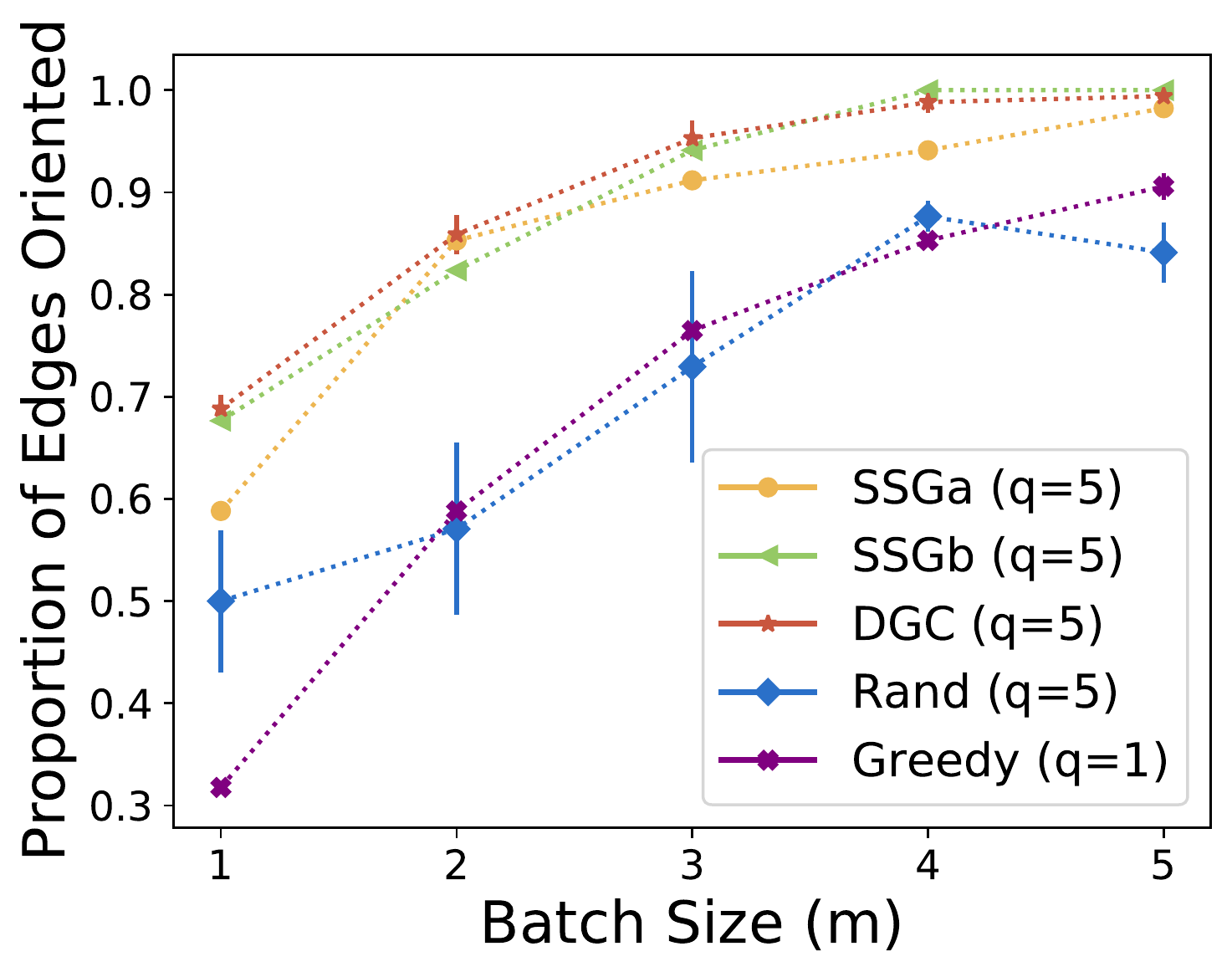}
\endminipage\hfill 

\caption{\looseness -1 a--c and f give infinite sample experiments and d--e give finite samples. \textbf{a)} Our algorithms ($q=3$) orient more edges than random interventions ($q=3$) and greedily chosen $q=1$ interventions for $p=40$, ER($0.1$) graphs. \textbf{b)} On a fully connected graph (p=5, m=2), \ssgb{} does not improve as $q$ increases. \textbf{c)} On a $p=20$ forest of 3 disconnected star graphs, \ssga{} cannot orient the full graph as quickly as our alternative approaches ($q=3$). \textbf{d)} For finite-samples, $p=40$, ER($0.1$) graphs, the proposed methods ($q=5$) achieve greater objective value than greedy $q=1$ interventions. Optimizing the finite sample objective directly yields slightly greater objective value than the infinite sample approximations. \textbf{e)} The F1 scores for predicting the presence of each edge correspond well with the objective in d).
\textbf{f)} For a $p=50$ yeast subnetwork from the DREAM3 challenge, our algorithms ($q=5$) orient more edges in the ground truth DAG than random or $q=1$ greedy algorithms with the same batch size.} 

\label{fig:inf}
\end{center}
\vskip -0.2in
\end{figure*}

\begin{itemize}[leftmargin=*]
    \item \looseness - 1 \rand: a baseline that for $m$ interventions, independently and uniformly at random selects $q$ nodes from those adjacent to at least one undirected edge;
    \item \greedy: greedily selects a single-perturbation intervention as in \citet{ghassami2018budgeted};
    \item \scg: our stochastic optimization approach;
    \item \looseness - 1 \ssga: our greedy approach using the graph agnostic separating systems of \citet{shanmugam2015learning};
    \item \looseness - 1 \ssgb: as above, using the graph-dependent separating system constructor of \citet{lindgren2018experimental}.
\end{itemize}

\looseness -1 Since there are infinite samples per intervention, the exact SEM used to generate data is not relevant. We plot the mean proportion of edges identified and error bars of 1 standard deviation over 100 repeats. Noise between repeats is due to randomness in the graph structure and in the algorithms themselves. 

In Figure \ref{fig:inf} a) we display the results for Erd{\"o}s-Reny{\'i} random graphs  with edge density $0.1$ (ER($0.1$)) and $40$ nodes. To prevent trivial graphs and large runtimes, we only consider graphs with MEC sizes in the range $[20, 200]$. The observations given here were also found for denser Erd{\"o}s-Reny{\'i} graphs and tree graphs, in addition to graphs with less nodes. 

\looseness -1 For all constraint values, all the algorithms improve greatly over \rand{} and \greedy. \ssgb{} outperforms \ssga{}, likely because the graph-sensitive separating system construction tends to return a groundset of more effective interventions. 

\ssgb{} achieves similar objective value to \scg{}. However, \scg{} behaves most robustly when the graph structure is chosen adversarially. For example, consider Figure \ref{fig:inf} b). Here we plot the proportion of identified edges on a $p=5$ fully connected graph. On this graph, the separating system construction of \citet{lindgren2018experimental} will always return the set of all single-node interventions. Therefore, its performance does not improve with $q$, whilst \scg{}'s does.  
An adversarial example for \ssga{} is constructed in Figure \ref{fig:inf} c): an MEC that consists of 3 disconnected star graphs with $7$, $7$ and $6$ nodes. In this case, \scg{} and \ssgb{} can orient most of the graph in a single intervention, whereas \ssga{} will likely not contain such an intervention in the separating system it constructs. 

\vspace{-2mm} 
\paragraph{Finite samples} \looseness -1 We use linear SEMs, with weights generated uniformly in $[-1, -0.25] \cup [0.25, 1]$. Measurement noise is given by the standard normal distribution. The underlying DAGs are generated in the same way as the infinite sample experiments. Before experiment selection, we obtain 800 observational samples of the system. 3 samples are obtained for each intervention selected by our algorithms. Each perturbation fixes the value of a node to $5$. We approximate Objective~\ref{obj:abcdobj} using the methods of \citet{agrawal2019abcd}. In particular, an initial distribution over DAGs is estimated by bootstrapping the observational data and using the techniques of \citet{yang2018characterizing} to infer DAGs. Each DAG in the distribution is weighted proportionally to the likelihood of the observational data given the DAG and the maximum likelihood estimate of the linear model weights. The posterior over DAGs after interventions is computed by re-weighting the existing set of DAGs based on the likelihood of the combined observational and interventional data. To ensure the distribution has support near the true DAG, we include all members of the true DAG's MEC in the initial distribution.  

With finite samples, our methods do not have guarantees but can be adapted into practical algorithms: 
\begin{itemize}[leftmargin=*]
    \item \looseness -1 \greedy: greedily optimize Objective~\ref{obj:abcdobj} with single perturbation interventions (\citet{agrawal2019abcd});
    \item \scg{}-$\infty$: optimizes Objective \ref{obj:edgeorientobj}, with the summation over DAGs being a weighted sum over the DAGs in the initial distribution;
    \item \looseness -1 \ssgb: optimizes \Fmi{}, greedily selecting from the separating system of \citet{lindgren2018experimental};
    \item \looseness -1 \ssgbinf: approximates the objective using Objective \ref{obj:infABCDobj} and optimizes with \ssg{}.  
\end{itemize}
For each algorithm, we record \Fmi{} of the selected interventions, over 200 repeats.  
In Figure~\ref{fig:inf} d) the objective values obtained by each algorithm are shown for varying batch size and $q=5$. \scg-$\infty$ performs worse than \ssgb{} and \ssgbinf{}, perhaps because it is optimizing \Feo{} which is not totally aligned with \Fmi{}. Between \ssgb{} and \ssgbinf{}, there was a small benefit to directly optimizing \Fmi{} as opposed to its infinite sample approximation \Finf{}. Accounting for finite samples may lead to greater improvements when there is heteroscedastic noise or a wider range of weights. 

\looseness -1 Performance on \Fmi{} corresponds closely with performance on a downstream task as shown in Figure~\ref{fig:inf} e). For each algorithm, we compute the posterior over DAGs given the selected interventions. Then, we independently predict the presence of each edge in the true DAG. Figure~\ref{fig:inf} e) plots the average F1 score for each algorithm. In finite samples, our approaches outperform both \rand{} and \greedy{}.

\vspace{-2mm}
\paragraph{DREAM 3 networks} We evaluate our algorithms under infinite samples using subgraphs of GRN models. From the  ``DREAM 3 In Silico Network” challenge \cite{marbach2009generating}, we use the 5 subgraphs with $p=50$ nodes. Here, we present the results for ``Yeast1", which is the graph requiring the most interventions for our methods to orient. Results for the other graphs are in the supplementary material. 

\looseness -1 We compare the same algorithms as considered in the other infinite sample experiments. In Figure~\ref{fig:inf} f) we record the proportion of unknown edges that were oriented by each algorithm. Our methods and \rand{} all have intervention sizes of 5. For each method we perform 5 repeats.
On Yeast1, our methods all perform similarly and outperform \rand{} and \greedy{}. This is found for the other DREAM 3 graphs too, except on one subgraph where \greedy{} performs similarly to our methods. 

\section{Conclusions}
\label{sec:discuss}
\looseness - 1 We presented near-optimal algorithms for causal structure learning through multi-perturbation interventions. Our results make novel use of submodularity properties and separating systems to search over a doubly exponential domain. Empirically, we demonstrated that these algorithms yield significant improvements over random interventions and state-of-the-art single-perturbation algorithms. These methods are particularly relevant in genomics applications, where causal graphs are large but multiple genes can be intervened upon simultaneously. 
\nocite{imoto2007analysis, karp1972reducibility, welsh1967upper, papadimitriou1991optimization, ghassami2019interventional}

\begin{ack}
This research was supported in part by the Swiss National Science Foundation, under NCCR Automation, grant agreement 51NF40 180545. Caroline Uhler was partially supported by NSF (DMS-1651995), ONR (N00014-17-1-2147 and N00014-18-1-2765), IBM, and a Simons Investigator Award.

Thank you to Raj Agrawal for a helpful discussion regarding experiments. 

Experiments were performed on the Leonhard cluster managed by the HPC team at ETH Z\"{u}rich

\end{ack}
\small
\printbibliography

@inproceedings{squires2020,
  title={Permutation-based causal structure learning with unknown intervention targets},
  author={Squires, Chandler and Wang, Yuhao and Uhler, Caroline},
  booktitle={The Thirty-Sixth Conference on Uncertainty in Artificial Intelligence},
  year={2020}
}

@article{mooij2020,
  title={Joint causal inference from multiple contexts},
  author={Mooij, Joris M. and Magliacane, Sara and Claassen, Tom},
  journal={Journal of Machine Learning Research},
  volume={21},
  pages={1--108},
  year={2020}
}

@article{Eberhardt2007,
  title={Interventions and causal inference},
  author={Eberhardt, F. and Scheines, R.},
  journal={Philosophy of Science},
  volume={74},
  number={5},
  pages={981--995},
  year={2007}
}

@inproceedings{agrawal2019abcd,
  title={ABCD-strategy: Budgeted experimental design for targeted causal structure discovery},
  author={Agrawal, Raj and Squires, Chandler and Yang, Karren and Shanmugam, Karthikeyan and Uhler, Caroline},
  booktitle={The 22nd International Conference on Artificial Intelligence and Statistics},
  pages={3400--3409},
  year={2019}
}

@inproceedings{ghassami2018budgeted,
  title={Budgeted experiment design for causal structure learning},
  author={Ghassami, AmirEmad and Salehkaleybar, Saber and Kiyavash, Negar and Bareinboim, Elias},
  booktitle={International Conference on Machine Learning},
  pages={1724--1733},
  year={2018}
}

@inproceedings{meek1995casual,
  title={Casual Inference and Causal Explanation with Background Knowledge},
  author={Meek, Christopher},
  booktitle={Proceedings of the Eleventh Conference on Uncertainty in Artificial Intelligence},
  pages={403--410},
  year={1995},
  organization={Morgan Kaufmann}
}

@book{Spirtes:1607850,
      author        = "Spirtes, Peter and Glymour, Clark and Scheines, Richard",
      title         = "{Causation, prediction, and search}",
      publisher     = "Springer",
      address       = "New York",
      series        = "Lecture Notes in Statistics",
      year          = "1993",
      doi           = "10.1007/978-1-4612-2748-9",
}

@article{gardner2003inferring,
  title={Inferring genetic networks and identifying compound mode of action via expression profiling},
  author={Gardner, Timothy S and Di Bernardo, Diego and Lorenz, David and Collins, James J},
  journal={Science},
  volume={301},
  number={5629},
  pages={102--105},
  year={2003},
  publisher={American Association for the Advancement of Science}
}

@inproceedings{verma1990equivalence,
  title={Equivalence and synthesis of causal models},
  author={Verma, Thomas and Pearl, Judea},
  booktitle={Proceedings of the Sixth Annual Conference on Uncertainty in Artificial Intelligence},
  pages={255--270},
  year={1990},
  organization={Elsevierence Inc.}
}

@article{hauser2012characterization,
  title={Characterization and greedy learning of interventional Markov equivalence classes of directed acyclic graphs},
  author={Hauser, Alain and B{\"u}hlmann, Peter},
  journal={The Journal of Machine Learning Research},
  volume={13},
  number={1},
  pages={2409--2464},
  year={2012},
  publisher={JMLR. org}
}

@inproceedings{wang2017permutation,
  title={Permutation-based causal inference algorithms with interventions},
  author={Wang, Yuhao and Solus, Liam and Yang, Karren and Uhler, Caroline},
  booktitle={Advances in Neural Information Processing Systems},
  year={2017}
}

@inproceedings{yang2018characterizing,
  title={Characterizing and Learning Equivalence Classes of Causal DAGs under Interventions},
  author={Yang, Karren and Katcoff, Abigail and Uhler, Caroline},
  booktitle={International Conference on Machine Learning},
  year={2018}
}

@article{he2015counting,
  title={Counting and exploring sizes of Markov equivalence classes of directed acyclic graphs},
  author={He, Yangbo and Jia, Jinzhu and Yu, Bin},
  journal={The Journal of Machine Learning Research},
  volume={16},
  number={1},
  pages={2589--2609},
  year={2015},
  publisher={JMLR. org}
}

@article{andersson1997characterization,
  title={A characterization of Markov equivalence classes for acyclic digraphs},
  author={Andersson, Steen A and Madigan, David and Perlman, Michael D and others},
  journal={The Annals of Statistics},
  volume={25},
  number={2},
  pages={505--541},
  year={1997},
  publisher={Institute of Mathematical Statistics}
}

@inproceedings{markowetz2005probabilistic,
  title={Probabilistic soft interventions in conditional Gaussian networks},
  author={Markowetz, Florian and Grossmann, Steffen and Spang, Rainer},
  booktitle={Tenth International Workshop on Artificial Intelligence and Statistics},
  pages={214--221},
  year={2005},
  organization={Society for Artificial Intelligence and Statistics}
}

@inproceedings{eberhardt2005number,
  title={On the number of experiments sufficient and in the worst case necessary to identify all causal relations among N variables},
  author={Eberhardt, Frederick and Glymour, Clark and Scheines, Richard},
  booktitle={Proceedings of the Twenty-First Conference on Uncertainty in Artificial Intelligence},
  pages={178--184},
  year={2005}
}

@article{hyttinen2013experiment,
  title={Experiment selection for causal discovery},
  author={Hyttinen, Antti and Eberhardt, Frederick and Hoyer, Patrik O},
  journal={The Journal of Machine Learning Research},
  volume={14},
  number={1},
  pages={3041--3071},
  year={2013},
  publisher={JMLR. org}
}

@inproceedings{kocaoglu2017cost,
  title={Cost-Optimal Learning of Causal Graphs},
  author={Kocaoglu, Murat and Dimakis, Alex and Vishwanath, Sriram},
  booktitle={International Conference on Machine Learning},
  pages={1875--1884},
  year={2017}
}

@inproceedings{lindgren2018experimental,
  title={Experimental design for cost-aware learning of causal graphs},
  author={Lindgren, Erik and Kocaoglu, Murat and Dimakis, Alexandros G and Vishwanath, Sriram},
  booktitle={Advances in Neural Information Processing Systems},
  pages={5279--5289},
  year={2018}
}

@article{shanmugam2015learning,
  title={Learning causal graphs with small interventions},
  author={Shanmugam, Karthikeyan and Kocaoglu, Murat and Dimakis, Alexandros G and Vishwanath, Sriram},
  journal={Advances in Neural Information Processing Systems},
  volume={28},
  pages={3195--3203},
  year={2015}
}

@article{gamella2020active,
  title={Active Invariant Causal Prediction: Experiment Selection through Stability},
  author={Gamella, Juan L and Heinze-Deml, Christina},
  journal={arXiv preprint arXiv:2006.05690},
  year={2020}
}

@article{hassani2019stochastic,
  title={Stochastic conditional gradient++},
  author={Hassani, Hamed and Karbasi, Amin and Mokhtari, Aryan and Shen, Zebang},
  journal={arXiv preprint arXiv:1902.06992},
  year={2019}
}

@article{mokhtari2020stochastic,
  title={Stochastic conditional gradient methods: From convex minimization to submodular maximization},
  author={Mokhtari, Aryan and Hassani, Hamed and Karbasi, Amin},
  journal={Journal of Machine Learning Research},
  volume={21},
  number={105},
  pages={1--49},
  year={2020}
}

@incollection{krause2014submodular,
  title={Submodular Function Maximization},
  author={Krause, Andreas and Golovin, Daniel},
  booktitle={Tractability: Practical Approaches to Hard Problems},
  pages={71--104},
  year={2014},
  publisher={Cambridge University Press}
}

@misc{wienobst2020polynomialtime,
      title={Polynomial-Time Algorithms for Counting and Sampling Markov Equivalent DAGs}, 
      author={Marcel Wienöbst and Max Bannach and Maciej Liśkiewicz},
      year={2020},
      eprint={2012.09679},
      archivePrefix={arXiv},
      primaryClass={cs.LG}
}

@article{wegener1979separating,
  title={On separating systems whose elements are sets of at most k elements},
  author={Wegener, Ingo},
  journal={Discrete Mathematics},
  volume={28},
  number={2},
  pages={219--222},
  year={1979},
  publisher={Elsevier}
}

@article{calinescu2011maximizing,
  title={Maximizing a monotone submodular function subject to a matroid constraint},
  author={Calinescu, Gruia and Chekuri, Chandra and Pal, Martin and Vondr{\'a}k, Jan},
  journal={SIAM Journal on Computing},
  volume={40},
  number={6},
  pages={1740--1766},
  year={2011},
  publisher={SIAM}
}

@article{adamson2016multiplexed,
  title={A multiplexed single-cell CRISPR screening platform enables systematic dissection of the unfolded protein response},
  author={Adamson, Britt and Norman, Thomas M and Jost, Marco and Cho, Min Y and Nu{\~n}ez, James K and Chen, Yuwen and Villalta, Jacqueline E and Gilbert, Luke A and Horlbeck, Max A and Hein, Marco Y and others},
  journal={Cell},
  volume={167},
  number={7},
  pages={1867--1882},
  year={2016},
  publisher={Elsevier}
}

@article{dixit2016perturb,
  title={Perturb-Seq: dissecting molecular circuits with scalable single-cell RNA profiling of pooled genetic screens},
  author={Dixit, Atray and Parnas, Oren and Li, Biyu and Chen, Jenny and Fulco, Charles P and Jerby-Arnon, Livnat and Marjanovic, Nemanja D and Dionne, Danielle and Burks, Tyler and Raychowdhury, Raktima and others},
  journal={Cell},
  volume={167},
  number={7},
  pages={1853--1866},
  year={2016},
  publisher={Elsevier}
}

@book{pearl2009causality,
  title={Causality},
  author={Pearl, Judea},
  year={2009},
  publisher={Cambridge university press}
}

@book{peters2017elements,
  title={Elements of causal inference: foundations and learning algorithms},
  author={Peters, Jonas and Janzing, Dominik and Sch{\"o}lkopf, Bernhard},
  year={2017},
  publisher={The MIT Press}
}

@incollection{imoto2007analysis,
  title={Analysis of gene networks for drug target discovery and validation},
  author={Imoto, Seiya and Tamada, Yoshinori and Savoie, Christopher J and Miyanoaa, Satoru},
  booktitle={Target Discovery and Validation Reviews and Protocols},
  pages={33--56},
  year={2007},
  publisher={Springer}
}

@inproceedings{acharya2018learning,
 author = {Acharya, Jayadev and Bhattacharyya, Arnab and Daskalakis, Constantinos and Kandasamy, Saravanan},
 booktitle = {Advances in Neural Information Processing Systems},
 editor = {S. Bengio and H. Wallach and H. Larochelle and K. Grauman and N. Cesa-Bianchi and R. Garnett},
 pages = {},
 publisher = {Curran Associates, Inc.},
 title = {Learning and Testing Causal Models with Interventions},
 volume = {31},
 year = {2018}
}

@article{marbach2009generating,
  title={Generating realistic in silico gene networks for performance assessment of reverse engineering methods},
  author={Marbach, Daniel and Schaffter, Thomas and Mattiussi, Claudio and Floreano, Dario},
  journal={Journal of computational biology},
  volume={16},
  number={2},
  pages={229--239},
  year={2009},
  publisher={Mary Ann Liebert, Inc. 2 Madison Avenue Larchmont, NY 10538 USA}
}

@article{ghassami2019interventional,
  title={Interventional Experiment Design for Causal Structure Learning},
  author={Ghassami, AmirEmad and Salehkaleybar, Saber and Kiyavash, Negar},
  journal={arXiv preprint arXiv:1910.05651},
  year={2019}
}

@article{welsh1967upper,
  title={An upper bound for the chromatic number of a graph and its application to timetabling problems},
  author={Welsh, Dominic JA and Powell, Martin B},
  journal={The Computer Journal},
  volume={10},
  number={1},
  pages={85--86},
  year={1967},
  publisher={Oxford University Press}
}

@article{papadimitriou1991optimization,
  title={Optimization, approximation, and complexity classes},
  author={Papadimitriou, Christos H and Yannakakis, Mihalis},
  journal={Journal of computer and system sciences},
  volume={43},
  number={3},
  pages={425--440},
  year={1991},
  publisher={Elsevier}
}

@incollection{karp1972reducibility,
  title={Reducibility among combinatorial problems},
  author={Karp, Richard M},
  booktitle={Complexity of computer computations},
  pages={85--103},
  year={1972},
  publisher={Springer}
}

@InProceedings{ahmaditeshnizi20a, title = {LazyIter: A Fast Algorithm for Counting {M}arkov Equivalent {DAG}s and Designing Experiments}, author = {Ahmaditeshnizi, Ali and Salehkaleybar, Saber and Kiyavash, Negar}, booktitle = { International Conference on Machine Learning}, year = {2020}}

@article{gross20,
    author = {Gross, Torsten and Blüthgen, Nils},
    title = "{Identifiability and experimental design in perturbation studies}",
    journal = {Bioinformatics},
    volume = {36},
    number = {Supplement 1},
    pages = {i482-i489},
    year = {2020},
    month = {07},
    issn = {1367-4803},
    doi = {10.1093/bioinformatics/btaa404}
}

\normalsize
\appendix
\renewcommand{\thesection}{\Alph{section}}
\renewcommand{\thesubsection}{A.\arabic{subsection}}

\newpage

\section*{Appendix}

\subsection{Potential Negative Societal Impacts}

We propose algorithms for experimental design when learning causal structures. The most obvious application is in designing scientific experiments to learn gene regulatory networks. It is possible that the ability to gather increasingly detailed information about gene networks could be used by malicious actors. However, to the authors' knowledge there are no examples of such use with existing related methods. On the other hand, there is longstanding scientific interest in learning such networks and potentially beneficial applications in drug design \cite{imoto2007analysis}. 

\subsection{The Power of Multi-Perturbation Interventions}
We can construct a more general example than the one given in Figure~\ref{fig:over}, to demonstrate that the optimal $q$-node intervention can learn up to $q$-times more edges than the optimal single node intervention. Consider a graph with MEC that is a forest of undirected star graphs, each with an equal number of nodes. The optimal single-node intervention can intervene on the center node in one of these stars, and entirely orient that star. The optimal $q$-node intervention can achieve this for $q$ of the stars. 

This example illustrates the upper bound on the number of additional edges that can be oriented by the optimal $q$-node intervention compared to the optimal single node intervention. It can be seen that this is a tight upper bound from the submodularity of $\Feo^{\xi}$ in lemma~2. 

\subsection{Deriving \Finf{}}
\citet{agrawal2019abcd} derive this infinite sample objective for the case of having gathered only infinite observational data before doing experiments. For the case of having infinite samples from observational data and some interventions, we follow a similar argument. 

For evaluating $U_{M.I}(y, \xi, D) = H(G \mid D) - H(G \mid D, y, \xi)$ where $H$ is the entropy, we only consider the second term. The first term does not depend on the interventions we do so is irrelevant for optimizing the objective. 

Given $D$ consisting of existing interventions $\xi'$ and observational data, the true DAG is already recovered up to it's $\xi'$-MEC. After obtaining infinite samples from each intervention in intervention set $\xi$, we recover the true DAG up to its $\xi' \cup \xi$-MEC. Therefore $H(G \mid D, y, \xi) = \log_2(\abs{Ess^{\xi \cup \xi'}(G)})$ when the true DAG is $G$. $Ess^{\xi \cup \xi'}(G)$ is the essential graph obtained after interventions $\xi$. Our prior distribution over DAGs is uniform over the $\xi'$-MEC of the true DAG. Averaging over these possibilities, the final objective is given by Objective~\ref{obj:infABCDobj}.

\subsection{Background on the Meek Rules}
\begin{figure*}[t]
\vskip 0.2in
\minipage{0.49\textwidth}
\centering
\begin{tikzpicture}[scale = 0.9, roundnode/.style={circle, draw=black!100, fill=black!100, thick, minimum size=2mm},
emptynode/.style={circle, draw=black!0, fill=black!0, thick, minimum size=2mm},
]

\node[roundnode] (d0) at (0, 7.5) {};
\node[roundnode] (d1) at (0, 6) {};
\node[roundnode] (d2) at (1.5, 6) {};
\node[roundnode] (d3) at (3.5, 7.5) {};
\node[roundnode] (d4) at (3.5, 6) {};
\node[roundnode] (d5) at (5, 6) {};

\node[emptynode]  (d6) at (2.5, 6.75) {$\implies$};
\node[emptynode]  (d7) at (2.5, 7.5) {R1};

\draw[->-, thick] (d0) -- (d1);
\draw[-, thick] (d1) -- (d2);
\draw[->-, thick] (d3) -- (d4);
\draw[->-, thick] (d4) -- (d5);
\end{tikzpicture}
\endminipage\hfill \vline
\minipage{0.49\textwidth}
\centering
\begin{tikzpicture}[scale = 0.9, roundnode/.style={circle, draw=black!100, fill=black!100, thick, minimum size=2mm},
emptynode/.style={circle, draw=black!0, fill=black!0, thick, minimum size=2mm},
]

\node[roundnode] (d0) at (0, 7.5) {};
\node[roundnode] (d1) at (0, 6) {};
\node[roundnode] (d2) at (1.5, 6) {};
\node[roundnode] (d3) at (3.5, 7.5) {};
\node[roundnode] (d4) at (3.5, 6) {};
\node[roundnode] (d5) at (5, 6) {};

\node[emptynode]  (d6) at (2.5, 6.75) {$\implies$};
\node[emptynode]  (d7) at (2.5, 7.5) {R2};

\draw[->-, thick] (d0) -- (d1);
\draw[->-, thick] (d1) -- (d2);
\draw[-, thick] (d2) -- (d0);
\draw[->-, thick] (d3) -- (d4);
\draw[->-, thick] (d4) -- (d5);
\draw[->-, thick] (d3) -- (d5);
\end{tikzpicture}
\endminipage\hfill 
\vskip 0.2in
\hrulefill
\vskip 0.2in
\minipage{0.49\textwidth}
\centering
\begin{tikzpicture}[scale = 0.9, roundnode/.style={circle, draw=black!100, fill=black!100, thick, minimum size=2mm},
emptynode/.style={circle, draw=black!0, fill=black!0, thick, minimum size=2mm},
]

\node[roundnode] (d0) at (0, 7.5) {};
\node[roundnode] (d1) at (0, 6) {};
\node[roundnode] (d2) at (1.5, 6) {};
\node[roundnode] (d3) at (1.5, 7.5) {};
\node[roundnode] (d4) at (3.5, 7.5) {};
\node[roundnode] (d5) at (3.5, 6) {};
\node[roundnode] (d6) at (5, 6) {};
\node[roundnode] (d7) at (5, 7.5) {};

\node[emptynode]  (d8) at (2.5, 6.75) {$\implies$};
\node[emptynode]  (d9) at (2.5, 7.5) {R3};

\draw[-, thick] (d0) -- (d1);
\draw[-, thick] (d1) -- (d2);
\draw[-, thick] (d0) -- (d3);
\draw[-, thick] (d2) -- (d0);
\draw[->-, thick] (d3) -- (d2);
\draw[->-, thick] (d1) -- (d2);

\draw[-, thick] (d4) -- (d5);
\draw[-, thick] (d5) -- (d6);
\draw[-, thick] (d4) -- (d7);
\draw[->-, thick] (d4) -- (d6);
\draw[->-, thick] (d7) -- (d6);
\draw[->-, thick] (d5) -- (d6);
\end{tikzpicture}
\endminipage\hfill \vline
\minipage{0.49\textwidth}
\centering
\begin{tikzpicture}[scale = 0.9, roundnode/.style={circle, draw=black!100, fill=black!100, thick, minimum size=2mm},
emptynode/.style={circle, draw=black!0, fill=black!0, thick, minimum size=2mm},
]

\node[roundnode] (d0) at (0, 7.5) {};
\node[roundnode] (d1) at (0, 6) {};
\node[roundnode] (d2) at (1.5, 6) {};
\node[roundnode] (d3) at (1.5, 7.5) {};
\node[roundnode] (d4) at (3.5, 7.5) {};
\node[roundnode] (d5) at (3.5, 6) {};
\node[roundnode] (d6) at (5, 6) {};
\node[roundnode] (d7) at (5, 7.5) {};

\node[emptynode]  (d8) at (2.5, 6.75) {$\implies$};

\node[emptynode]  (d9) at (2.5, 7.5) {R4};

\draw[->-, thick] (d0) -- (d1);
\draw[-, thick] (d1) -- (d2);
\draw[-, thick] (d0) -- (d3);
\draw[-, dotted, thick] (d1) -- (d3);
\draw[-, thick] (d3) -- (d2);
\draw[->-, thick] (d1) -- (d2);

\draw[->-, thick] (d4) -- (d5);
\draw[-, thick] (d5) -- (d6);
\draw[-, thick] (d4) -- (d7);
\draw[-, dotted, thick] (d5) -- (d7);
\draw[->-, thick] (d7) -- (d6);
\draw[->-, thick] (d5) -- (d6);
\end{tikzpicture}
\endminipage\hfill
\begin{center}
\caption{The 4 Meek rules. When a pattern on the left of the implication occurs, edges are oriented according to the pattern on the right of the implication. A dashed line means the direction of the edge may or may not be already identified. The name of each Meek rule is given above the implication sign. R0 refers to the step of orienting edges before Meek rules are applied.}
\label{fig:meek}
\end{center}
\vskip -0.2in
\end{figure*}
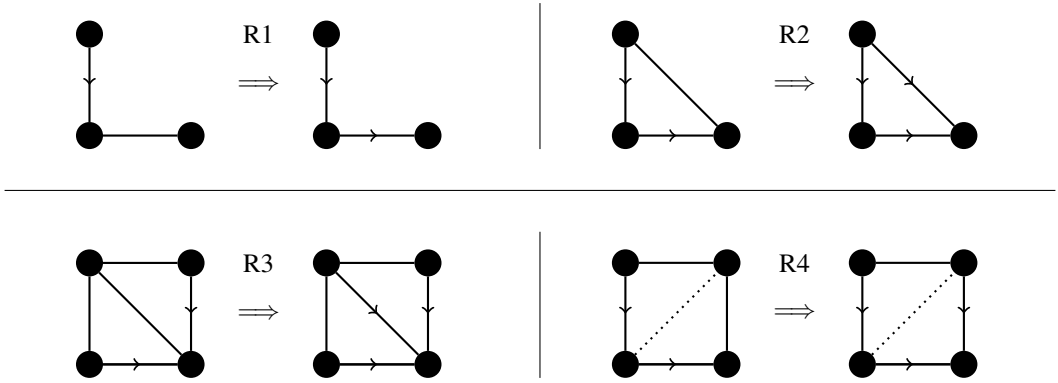

After performing interventions, the Meek rules can be used to orient additional edges. The rules are given in Figure~\ref{fig:meek}. The Meek rules are continually applied until none of the left side patterns appear in the partially directed graph.

\subsection{\Feo{} is consistent}

We use the same definition of {\emph{budgeted batch consistency}} introduced in \citet{agrawal2019abcd}. 

\begin{definition}[\citet{agrawal2019abcd}]
Assume our goal is to identify the true DAG $G^*$. Let us have constraints for the $b$th batch ($0 \leq b \leq B$) of experiments $C^b_{m,q}$ ($q \geq 1$). 
Objective $F$ is budgeted batch consistent if maximizing it in every batch implies

$$\Prob(G \mid D_B) \rightarrow \mathbbm{1}(G = G^*)$$
asymptotically as $m,B \rightarrow \infty$, where $D_B$ is the combined data obtained from all batches of experiments and the original dataset. 
\end{definition}

We show that our definition of \Feo{} satisfies budgeted batch consistency. Then we explain how the slight difference in the definition of \Feo{} given in \citet{agrawal2019abcd} leads to the authors concluding that it is not budgeted batch consistent. 

\begin{proposition}
\label{pro:consistent}
\Feo{} is budgeted batch consistent. 
\end{proposition}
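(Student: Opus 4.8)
The plan is to show that maximizing $\Feo$ in every batch drives the essential graph down to a single DAG asymptotically, which immediately gives the posterior concentration $\Prob(G\mid D_B)\to\mathbbm{1}(G=G^*)$ required by budgeted batch consistency. The key structural fact is that $\Feo$ is evaluated using the \emph{current} (most up-to-date) essential graph $\textrm{Ess}^{\xi'}(G)$ as the reference point for each batch, rather than the original MEC --- this is precisely the modification over the \citet{agrawal2019abcd} definition, and it is what makes consistency work.

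First I would set up the argument: fix the true DAG $G^*$, and let $\textrm{Ess}^{(b)}$ denote the essential graph after $b$ batches. As long as $\textrm{Ess}^{(b)}$ contains an undirected edge, I claim the maximizer of $\Feo$ in batch $b+1$ strictly reduces the essential graph size (with respect to $G^*$), i.e. orients at least one previously-undirected edge. To see this, note that any single undirected edge $i - j$ in $\textrm{Ess}^{(b)}$ can be oriented by the single-node intervention $\{i\}$ (since $i$ is in the intervention and $j$ is not, R0 orients $i-j$), so the intervention set consisting of any such singleton achieves $\Feo > 0$ relative to the current essential graph; hence the optimal $\xi$ for batch $b+1$ also achieves positive value, meaning it orients at least one new edge when the true DAG is $G^*$ (and, by a counting/averaging argument over $\mathcal{G}$, $\Feo(\xi)>0$ forces $|R(\xi,G)|>0$ for at least a positive fraction of DAGs, but we only need it for $G^*$ --- actually more carefully, we need that the maximizing $\xi$ orients an edge for $G^*$ specifically, which follows because $R$ depends on the orientations that are forced no matter which $G\in\mathcal{G}$ is true, together with the fact that the edge gets oriented consistently; I would make this precise using that the orientation of $i-j$ under intervention $\{i\}$ is determined once we know $G^*$, and $\Feo$ being maximized selects interventions that orient edges on average, so with $m\to\infty$ and by adding enough singletons we cover all remaining undirected edges). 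The cleaner route: since the constraint allows $m$ interventions per batch and $m\to\infty$, and since a separating system of the current essential graph has bounded size (at most $\lceil p/q\rceil\lceil\log p\rceil$), for $m$ large enough the optimal batch can include a full separating system and thus fully orients the graph in a single batch; maximizing $\Feo$ therefore achieves $R(\xi,G)$ equal to all remaining undirected edges for every $G$, so $\textrm{Ess}^{(b+1)}$ is fully directed and equals $G^*$.

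Then I would conclude: once $\textrm{Ess}^{(B)} = \{G^*\}$, the posterior over DAGs given $D_B$ (which in the infinite-sample setting is uniform over the current essential graph, per the derivation in Appendix~A.3) is exactly $\mathbbm{1}(G=G^*)$, giving the required convergence. I would also include a short remark explaining why the \citet{agrawal2019abcd} version fails: there, $\mathcal{G}$ is frozen to the original MEC in every batch, so $\Feo$ never "sees" the edges already oriented by previous batches; the objective can then be maximized by an intervention set that re-orients already-known edges rather than new ones, so it never forces full identification --- whereas updating the reference essential graph each batch removes this degeneracy.

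The main obstacle I anticipate is the precise claim that \emph{maximizing} $\Feo$ (as opposed to merely being able to choose a good $\xi$) forces progress: I need to rule out the pathological possibility that the maximizing batch orients edges for many $G\in\mathcal{G}$ but somehow not for $G^*$. The clean way around this is the separating-system observation above --- for $m$ at least the separating system size, every edge of the current essential graph is oriented for \emph{every} $G$ in it simultaneously, so $G^*$ is not special and the argument goes through uniformly; since we take $m\to\infty$ this eventually holds, and that is all budgeted batch consistency requires.
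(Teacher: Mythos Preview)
Your separating-system route is correct: once $m$ exceeds the size of a $q$-sparse separating system of the current essential graph, such a system is feasible, it attains the maximal value of $\Feo$ (every undirected edge oriented for every $G$), and since each summand $|R(\xi,G)|$ is bounded above by the number of undirected edges, any maximizer must also orient every edge for every $G$, including $G^*$. That suffices under $m,B\to\infty$.

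The paper argues differently, using $B\to\infty$ rather than $m\to\infty$: it shows that \emph{every} batch (for any $m\ge 1$) strictly orients a new edge of $G^*$ whenever the current essential graph is not fully directed, and then finiteness of the edge set finishes. The point you circled around but did not nail down is exactly what the paper invokes to close the ``does the maximizer help $G^*$ specifically?'' gap: if $\Feo(\xi)>0$ then some edge must be oriented by the first step R0 (otherwise applying Meek rules to the already Meek-complete essential graph yields nothing), and the \emph{set} of edges oriented by R0 depends only on $\xi$ and on which edges are undirected in the essential graph, not on the particular $G$. Hence $|R(\xi,G)|>0$ for one $G$ forces $|R(\xi,G^*)|>0$ as well. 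This per-batch progress argument is sharper than yours---it does not require $m$ to clear a separating-system threshold---whereas your argument is conceptually simpler and avoids reasoning about the internals of $R$. Both are valid; you may want to include the R0-is-$G$-independent observation, since it resolves the obstacle you flagged without appealing to $m\to\infty$.
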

\begin{proof}
Since each intervention offers an infinite number of samples, we can reason directly about the subsequent orienting of edges due to the obtained samples. Let the set of interventions after $b$ batches be $\xi_{b}$. We simply need to prove that as $k\rightarrow \infty$, $\xi_{b}$ will identify the orientation of every edge in the true DAG. Equivalently, we show that if $\xi_{b}$ has not oriented every edge (the current essential graph has size greater than $1$), $\xi_{b+1}$ will orient additional edges in the true DAG. There are only a finite number of edges to be oriented, so this means  $\xi_{\infty}$ will fully identify the true DAG. 

If $\xi_{b}$ does not orient every edge in the true DAG, Objective~\ref{obj:edgeorientobj} will have a maximum in the next batch of greater than $0$. Moreover, to obtain objective greater than $0$, the interventions selected in batch $b+1$  must orient edges not oriented by $\xi_{b}$. This is because after each batch, the objective is updated to be with reference to the essential graph of $G$ under interventions in $\xi_{b}$. To orient unidentified edges in any of the possible DAGs (those in the current $\xi_{b}$-MEC), we must orient at least one edge in the true DAG after obtaining samples. This can be seen from the first step of computing function $R$. One can also see that if an edge can be oriented, it can always be oriented by selecting a single unique intervention of size $1$ and thus the constraints $C_{m,q}$ can always be satisfied. 
\end{proof}

The key difference between this argument and the one given by \citet{agrawal2019abcd} is that they maintain a static objective function between batches, so the same set of interventions is selected every round.

\subsection{Proof of Lemma~1}

We'll work with the notation of $\xi_1 \subset \xi_2$ are sets of interventions, and $I$ is an intervention. We'll write $R(\xi, G) = M(A(\xi, G), G)$. Here, $A$ carries out the first step of orienting edges based on one of the nodes in that edge being intervened on (we refer to this as R0). $M$ carries out the Meek rules given the orientations of edges in $A$. Note that $R, M, A$ implicitly depend on the interventions carried out in previous batches $\xi'$, since this determines what edges might already be oriented in $G$ (whether we are orienting the MEC or some essential graph). 

We'll first show that function $R$ has a monotonicity-like property: adding an intervention only adds to the set of oriented edges. 
\begin{proposition}
\label{prop:mono}
Monotonicity-like property of $R$: $R(\xi_1, G) \subseteq R(\xi_2, G)$ for all $G$, $\xi_1 \subseteq \xi_2$. 
\end{proposition}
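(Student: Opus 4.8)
The plan is to exploit the decomposition $R(\xi,G) = M(A(\xi,G),G)$ and establish monotonicity of the two pieces separately. First I would handle $A$. If an undirected edge $i-j$ of $\mathrm{Ess}^{\xi'}(G)$ is oriented by the R0 step under $\xi_1$, then there is some $I \in \xi_1$ with exactly one of $i,j$ in $I$; since $\xi_1 \subseteq \xi_2$ we have $I \in \xi_2$, so R0 orients the same edge under $\xi_2$, and necessarily in the same direction, namely its true direction in $G$ (this is what a hard intervention on one endpoint reveals in the infinite-sample regime). Hence the directed-edge set of $A(\xi_1,G)$ is contained in that of $A(\xi_2,G)$, and both are orientations consistent with $G$ (every directed edge agrees with $G$, since the directed edges of $\mathrm{Ess}^{\xi'}(G)$ are common to the $\xi'$-MEC, which contains $G$, and R0 adds only true directions).

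Second, I would prove an abstract monotonicity statement for the Meek closure: if $P_1,P_2$ are partially directed graphs on the skeleton of $G$, both consistent with the edge directions of $G$, and the directed edges of $P_1$ are a subset of those of $P_2$, then the directed edges of $M(P_1,G)$ are a subset of those of $M(P_2,G)$. Granting this, the proposition is immediate: $A(\xi_1,G)$ and $A(\xi_2,G)$ satisfy the hypotheses by the previous paragraph, so $R(\xi_1,G) = M(A(\xi_1,G),G) \subseteq M(A(\xi_2,G),G) = R(\xi_2,G)$.

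To prove the Meek-closure statement, write the computation of $M(P_1,G)$ as a finite sequence $P_1 = Q_0, Q_1, \dots, Q_k = M(P_1,G)$ where each $Q_{t+1}$ arises from $Q_t$ by a single application of one of the four Meek rules, and induct on $t$ to show that every directed edge of $Q_t$ is a directed edge of $M(P_2,G)$. The base case is the hypothesis $P_1 \subseteq P_2 \subseteq M(P_2,G)$. For the inductive step, suppose the rule applied to $Q_t$ orients an edge $e$ as $u\to v$; soundness of the Meek rules (they derive only orientations forced in every consistent DAG, and never introduce a new v-structure or a directed cycle) guarantees $u\to v$ is the direction of $e$ in $G$, and that $M(P_2,G)$ is itself $G$-consistent. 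If $e$ is already directed in $M(P_2,G)$, then by $G$-consistency it is directed $u\to v$, and we are done. Otherwise $e$ is undirected in $M(P_2,G)$, and I would derive a contradiction with $M(P_2,G)$ being a Meek fixpoint: the directed edges of the firing pattern all lie in $\mathrm{dir}(Q_t)\subseteq \mathrm{dir}(M(P_2,G))$ by the inductive hypothesis, and the skeleton (non-)adjacency conditions are unchanged; for R1 and R2 the only other precondition is that $e$ itself is undirected, which holds, so the very same rule fires in $M(P_2,G)$.

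The main obstacle is precisely the remaining case: for R3 and R4, a pattern edge other than $e$ that is required to be undirected may already be oriented in $M(P_2,G)$, so one cannot simply replay the rule. I would resolve this by a short case analysis on the possible orientation of that edge, using $G$-consistency and the no-new-v-structure/no-cycle property to show that in each case either R1 or R2 becomes applicable to orient $e$ in $M(P_2,G)$, again contradicting the fixpoint property. An alternative that avoids the case analysis entirely is to invoke the characterization of $M(P,G)$ as the set of edges common to all DAGs consistent with $P$ (Meek's theorem, valid here since an R0-augmented interventional essential graph is a pattern together with background knowledge consistent with $G$): enlarging the directed edges of $P$ only shrinks the family of consistent DAGs, so the set of common edges can only grow, giving monotonicity of $M$ directly.
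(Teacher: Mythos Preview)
Your proposal is correct and uses the same decomposition $R = M \circ A$ as the paper, which dispatches the proposition in two lines: $A(\xi_1,G) \subseteq A(\xi_2,G)$ by definition of $A$, and then soundness plus order-independence of the Meek rules gives $M(A(\xi_1,G)) \subseteq M(A(\xi_2,G))$. Your detailed induction on rule applications and your alternative via Meek's completeness characterization are both valid ways to unpack that second step, which the paper simply leaves as a citation to Meek (1995) and Ghassami et al.\ (2018).
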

\begin{proof}
The same argument is given in \citet{ghassami2018budgeted}. By the definition of $A$, $A(\xi_1, G) \subseteq A(\xi_2, G)$. The Meek rules are sound and order-independent \cite{meek1995casual}, and therefore $M(A(\xi_1, G)) \subseteq M(A(\xi_2, G))$
\end{proof}
From this we can see that $\Feo$ is also monotonic. 

To prove lemma~\ref{lem:F_mon_sub}, we swill first prove some propositions regarding the marginal change in $R$ when adding a new intervention. 

\begin{proposition}
\label{prop:triangles}
Consider vertices $v_a$, $v_b$, $v_c$. Consider $\bar{G}$, the partially directed graph obtained after doing intervention set $\xi$ and then applying the Meek rules exhaustively. If $v_a\rightarrow v_b \in \bar{G}$, and $v_b-v_c \in \bar{G}$, then $v_a \rightarrow v_c \in \bar{G}$. 
\end{proposition}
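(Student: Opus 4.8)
The plan is to argue by contradiction using the fact that the Meek rules have been applied exhaustively to $\bar G$. Suppose $v_a \to v_b \in \bar G$ and $v_b - v_c \in \bar G$ but $v_a \to v_c \notin \bar G$. There are three cases for the status of the pair $\{v_a, v_c\}$ in $\bar G$: (i) $v_c \to v_a \in \bar G$; (ii) $v_a - v_c \in \bar G$ (undirected); (iii) there is no edge between $v_a$ and $v_c$ at all. I want to rule out all three and thereby conclude that the only remaining possibility is $v_a \to v_c \in \bar G$.

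First I would handle case (ii): if $v_a - v_c$ were undirected while $v_a \to v_b$ and $v_b - v_c$ are present, then the configuration $v_a \to v_b$, $v_b - v_c$, $v_a - v_c$ is exactly the left-hand pattern of Meek rule R2 (orient $v_a - v_c$ as $v_a \to v_c$) — or, depending on the exact triangle labelling in Figure~\ref{fig:meek}, of R1 applied after noting $v_a\to v_b - v_c$ forces $v_b \to v_c$. Either way some Meek rule still fires, contradicting exhaustive application. Case (i), $v_c \to v_a$: combined with $v_a \to v_b$ this gives a directed path $v_c \to v_a \to v_b$, and together with the edge $v_b - v_c$ this is the R2 pattern forcing $v_c \to v_b$; but we also have $v_b - v_c$ undirected, a contradiction — so $v_c \to v_a$ is impossible. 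The key subtlety here is that all of this reasoning presumes $\bar G$ arises as an essential graph of a DAG (equivalently, that $\bar G$ has a consistent DAG extension with the same v-structures), so that an undirected edge genuinely means "could go either way"; this is guaranteed because $\bar G$ is obtained from a valid essential graph by R0 (orienting real edges of $G$) followed by sound Meek rules, so $G$ itself is always a consistent completion.

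The main obstacle is case (iii): there is no edge between $v_a$ and $v_c$ in $\bar G$ (hence none in $G$). Here I would invoke that $G$ is a DAG extending $\bar G$ in which $v_b - v_c$ is oriented somehow. If $v_b \to v_c$ in $G$, then $v_a \to v_b \to v_c$ with $v_a, v_c$ nonadjacent makes $(v_a, v_c)$ a collider at... no — $v_a \to v_b \leftarrow$? not a collider; rather $v_a \to v_b \to v_c$ is an unshielded triple that is \emph{not} a collider, which is fine. But consider instead $v_c \to v_b$ in $G$: then $v_a \to v_b \leftarrow v_c$ is a v-structure (collider) at $v_b$ on the nonadjacent pair $\{v_a, v_c\}$, so it must already be oriented as such in the essential graph — contradicting that $v_b - v_c$ is undirected in $\bar G$. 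Hence in $G$ we must have $v_b \to v_c$. Now I need to promote this to an orientation in $\bar G$: since $v_a \to v_b$ is directed in $\bar G$ and $v_b - v_c$ undirected with $v_a \not\sim v_c$, Meek rule R1 (the pattern $v_a \to v_b - v_c$ with $v_a, v_c$ nonadjacent forces $v_b \to v_c$) fires, again contradicting exhaustiveness. So case (iii) would actually have forced $v_b \to v_c \in \bar G$, and then I still need $v_a \to v_c$; but with $v_a \not\sim v_c$ this is simply false as stated — which tells me that the intended reading of the proposition is precisely that the nonadjacent case cannot occur, i.e.\ $v_c$ must be adjacent to $v_a$, and the earlier cases then pin the direction. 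I would therefore restructure the write-up to first establish $v_a \sim v_c$ (via the v-structure argument just given, ruling out non-adjacency because it triggers R1 to orient $v_b-v_c$) and then dispatch cases (i)–(ii) by R2/R1 as above.
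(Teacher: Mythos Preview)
Your handling of case~(iii) (non-adjacency forces R1 to fire on $v_b-v_c$) and case~(i) ($v_c\to v_a$ together with $v_a\to v_b$ and $v_b-v_c$ triggers R2) is fine, and matches the opening paragraph of the paper's proof. The gap is in case~(ii), the undirected triangle $v_a\to v_b$, $v_b-v_c$, $v_a-v_c$. You assert that this configuration ``is exactly the left-hand pattern of Meek rule R2'' or else that ``R1 applied after noting $v_a\to v_b - v_c$ forces $v_b\to v_c$.'' Neither is true: R2 needs both $v_a\to v_b$ and $v_b\to v_c$ directed before it orients $v_a-v_c$, and here $v_b-v_c$ is still undirected; R1 needs $v_a$ and $v_c$ \emph{non}-adjacent, and in this case they are adjacent. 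So no Meek rule fires directly, and exhaustiveness of the Meek closure does \emph{not} by itself forbid this pattern at the level of a single rule application.

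This undirected-triangle case is exactly the substantive content of the proposition, and the paper dispatches it by an infinite-descent argument rather than a one-step rule application. One takes a topological ordering of the true DAG, lets $v_b$ be the lowest vertex for which the pattern $v_a\to v_b$, $v_b-v_c$, $v_a-v_c$ occurs, and then case-splits on the \emph{reason} $v_a\to v_b$ was oriented: it was either a v-structure, an R0 orientation from some $I\in\xi$, or the output of R1--R4. In each case one either orients one of $v_a-v_c$, $v_b-v_c$ (contradiction), or produces the same triangle pattern at a vertex strictly lower than $v_b$ in the ordering (contradiction by minimality). The cases for v-structures and R1--R3 are already in \citet{meek1995casual}; the paper supplies the two new cases R0 and R4 that arise in the interventional setting. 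Your proposal does not engage with this at all, so as written it does not close case~(ii).
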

\begin{proof}
If $v_a \rightarrow v_b$, and $v_b - v_c$, we must have that $v_a$ and $v_c$ are adjacent, else $v_b \rightarrow v_c$ by R1. We cannot have $v_c \rightarrow v_a$ since this would identify $v_b -v_c$ by R2. Hence we have either that $v_a-v_c$ or $v_a \rightarrow v_c$. 

Suppose for contradiction that, after applying all Meek rules, for some nodes $v_a,v_b,v_c$ we have $v_a \rightarrow v_b$, $v_b - v_c$ and $v_a-v_c$. We will gain a contradiction by an infinite descent. Any DAG can be associated with some permutation of its nodes that specifies a topological ordering, with the lowest ranked node being the root. Suppose that $B$ is the lowest ranked node in the topological ordering given by the true DAG (closest to the root) such that the supposed pattern holds. For all ways in which $v_a\rightarrow v_b$ could have been identified, we will show that either in fact $v_a\rightarrow v_c$ or find a vertex lower than $v_b$ in the topological ordering that fits into an identical pattern. 

Some cases are covered in \citet{meek1995casual} when proving a similar result (lemma 1). 

Suppose $v_a\rightarrow v_b$ is known by being a collider (identified before any interventions take place). This is handled in \citet{meek1995casual}. 

Suppose $v_a\rightarrow v_b$ is known by R1, R2, R3. These cases are all handled by \citet{meek1995casual}.

Suppose $v_a\rightarrow v_b$ is learnt directly by an intervention (rule R0). If it is identfied in this way, there must exist an intervention $I$ such that exactly one of $v_a, v_b \in I$. However, in either case regardless of whether $v_c \in I$ or not, we identify one of $v_a-v_c$ or $v_b-v_c$ by R0. Hence the pattern cannot occur if $v_a\rightarrow v_b$ is identified by R0.

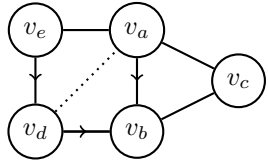
\begin{figure}[t]
\minipage{\textwidth}
\centering
\begin{tikzpicture}[scale = 0.9, roundnode/.style={circle, draw=black!100, fill=black!0, thick, minimum size=2mm},
emptynode/.style={circle, draw=black!0, fill=black!0, thick, minimum size=2mm},
]

\node[roundnode] (d0) at (0, 7.5) {$v_e$};
\node[roundnode] (d1) at (0, 6) {$v_d$};
\node[roundnode] (d2) at (1.5, 6) {$v_b$};
\node[roundnode] (d3) at (1.5, 7.5) {$v_a$};
\node[roundnode] (d4) at (3, 6.75) {$v_c$};

\draw[->-, thick] (d0) -- (d1);
\draw[-, thick] (d1) -- (d2);
\draw[-, thick] (d0) -- (d3);
\draw[-, dotted, thick] (d1) -- (d3);
\draw[->-, thick] (d3) -- (d2);
\draw[->-, thick] (d1) -- (d2);
\draw[-, thick] (d2) -- (d4);
\draw[-, thick] (d3) -- (d4);
\end{tikzpicture}
\endminipage\hfill

\caption{The pattern when an edge $v_a\rightarrow v_b$ is identified by R4 and $v_a-v_c$, $v_b-v_c$}.
\label{fig:R4_case_triangle}
\end{figure}

Suppose $v_a\rightarrow v_b$ is learnt directly by R4 as in Figure~\ref{fig:R4_case_triangle}. Consider the pattern given in Figure~\ref{fig:R4_case_triangle} where $v_e$ and $v_d$ are the other edges in the R4 pattern. Now if $v_a-v_d$ is undirected, then $AED$ is the same pattern as $v_a,v_b,v_c$ but with $v_d < v_b$ in the topological ordering (a contradiction). If $v_a\rightarrow v_d$, then $v_a,v_d,v_b$ gives the same setup as if we discovered $v_a\rightarrow v_b$ through R2. Similarly, if $v_d\rightarrow v_a$, then $v_e\rightarrow v_a$ by R2 and then we have the same setup as if we discovered $v_a\rightarrow v_b$ by R1. An alternative R4 pattern can also orient $v_a\rightarrow v_b$, however it involves node $v_c$ being part of the pattern. In this case, we must have oriented $v_c\rightarrow v_b$, a contradiction. 
\end{proof}

Proposition~\ref{prop:triangles} allows us to prove two propositions more directly related to our final result.

\begin{proposition}
\label{prop:R1}
$R(\xi_2 \cup \{I\}) \setminus R(\xi_2)  \subseteq R(\xi_1 \cup \{I\}) \setminus R(\xi_1)$
\end{proposition}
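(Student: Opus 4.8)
The plan is to prove the containment $R(\xi_2 \cup \{I\}) \setminus R(\xi_2) \subseteq R(\xi_1 \cup \{I\}) \setminus R(\xi_1)$ edge by edge, for arbitrary $\xi_1 \subseteq \xi_2$ and arbitrary intervention $I$. So fix an edge $e$ with $e \in R(\xi_2 \cup \{I\})$ and $e \notin R(\xi_2)$; I want to show $e \in R(\xi_1 \cup \{I\})$ and $e \notin R(\xi_1)$. The second part is immediate: by the monotonicity-like Proposition~\ref{prop:mono}, $R(\xi_1) \subseteq R(\xi_2)$, so $e \notin R(\xi_2)$ forces $e \notin R(\xi_1)$. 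The real content is the first part, and this is where the submodularity-style structure bites.

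First I would reduce to tracking a derivation. The graph $R(\xi_2 \cup \{I\})$ is obtained from the R0 orientations $A(\xi_2 \cup \{I\}, G)$ by applying the Meek rules to a fixed point. Since $e \notin R(\xi_2)$ but $e \in R(\xi_2 \cup \{I\})$, the orientation of $e$ genuinely uses $I$: in any derivation of the orientation of $e$ in the larger graph, at least one step must involve an R0 orientation that comes from $I$ and not from $\xi_2$ (hence not from $\xi_1$ either). The aim is to show that the \emph{same} derivation goes through when we start instead from $A(\xi_1 \cup \{I\}, G)$. The obstacle is that $A(\xi_1 \cup \{I\}, G)$ is a \emph{subset} of $A(\xi_2 \cup \{I\}, G)$ — we have strictly fewer R0-oriented edges to work with — so a derivation valid over the larger base need not be valid over the smaller base; some intermediate edge the derivation relied on might not yet be oriented. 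Handling this is the main difficulty.

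The key idea to get around this is to run an induction on the Meek-rule derivation of $e$'s orientation in $R(\xi_2 \cup \{I\})$, showing the invariant: \emph{every} edge orientation $u \to v$ that appears along the derivation and that is \emph{not} already in $R(\xi_2)$ also appears in $R(\xi_1 \cup \{I\})$. (Edges already in $R(\xi_2)$ are harmless to us — wait, no; they may not be in $R(\xi_1 \cup \{I\})$, which is exactly the subtlety.) So the invariant actually needs to be stated carefully: I would prove that each oriented edge appearing in the derivation lies in $R(\xi_1 \cup \{I\}) \cup R(\xi_1)$... but $R(\xi_1) \subseteq R(\xi_1 \cup \{I\})$, so this simplifies, and the real claim is just: every edge oriented in deriving $e$ over base $\xi_2 \cup \{I\}$ is also in $R(\xi_1 \cup \{I\})$. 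For the base case, an R0 orientation in $A(\xi_2 \cup \{I\}, G)$ either comes from $I$ — then it is in $A(\xi_1 \cup \{I\}, G) \subseteq R(\xi_1 \cup \{I\})$ — or it comes from $\xi_2$. The latter case is the crux: such an edge need not be in $A(\xi_1 \cup \{I\}, G)$. This is precisely where Proposition~\ref{prop:triangles} (and the triangle lemma it feeds) must be invoked: an edge oriented by $\xi_2$ alone but \emph{used} in a derivation that ultimately orients an $I$-dependent edge $e$ can be re-derived over $\xi_1 \cup \{I\}$, because the orientation of $e$ over $\xi_2$ alone was already possible — contradicting $e \notin R(\xi_2)$ — unless the $\xi_2$-only edges in the derivation can themselves be "rerouted" through $I$ using the triangle structure. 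I expect the clean way to organize this is: assume for contradiction that $e \notin R(\xi_1 \cup \{I\})$, take a minimal-length derivation of $e$ over $\xi_2 \cup \{I\}$, locate the first oriented edge in it that fails to lie in $R(\xi_1 \cup \{I\})$, and use Proposition~\ref{prop:triangles} together with soundness and order-independence of the Meek rules to derive a contradiction — either producing $e \in R(\xi_2)$ or shortening the derivation. The bookkeeping of the case analysis over which Meek rule (R0--R4) produced the offending edge, mirroring the structure of the proof of Proposition~\ref{prop:triangles}, is the part I expect to be genuinely laborious, and is the main obstacle.
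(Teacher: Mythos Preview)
Your high-level outline --- contradiction, some minimality principle, then a Meek-rule case split --- matches the paper, and you correctly isolate the only nontrivial half (showing $e \in R(\xi_1 \cup \{I\})$). But the well-ordering you propose and the invariant you try to carry are both off, and this is where the argument would stall.

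You first state the right invariant --- ``every edge in the derivation that is \emph{not already in $R(\xi_2)$} lies in $R(\xi_1\cup\{I\})$'' --- and then talk yourself out of it because premise edges that \emph{are} in $R(\xi_2)$ need not lie in $R(\xi_1\cup\{I\})$. That worry is real for the too-strong invariant you retreat to, but it is not a problem for the original one. The paper works with exactly your first formulation: set $E^{\dagger}=\{\,e : e\in R(\xi_2\cup\{I\}),\ e\notin R(\xi_2),\ e\notin R(\xi_1\cup\{I\})\,\}$ and take a \emph{minimal} element. The crucial move you are missing is that, in the case analysis on which Meek rule oriented this minimal $e$, one argues that the \emph{relevant premise edge is itself not in $R(\xi_2)$} (otherwise the rule would already fire under $\xi_2$ alone and force $e\in R(\xi_2)$). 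That premise therefore lands back in $E^{\dagger}$, and minimality gives the contradiction. So the ``$\xi_2$-only edges'' you fear never appear as the decisive premises; there is no ``rerouting through $I$'' to be done. Proposition~\ref{prop:triangles} is used only in the R2 case, precisely to show that \emph{neither} of the two premise edges can lie in $R(\xi_2)$.

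Second, your well-ordering (derivation length, or ``first failing edge in a fixed derivation'') does not interact well with the case analysis: the premise edge you locate in $E^{\dagger}$ need not have strictly smaller derivation depth in your chosen derivation. The paper instead orders edges $v_1\to v_2$ by the position of $v_2$ in a topological order of $G$ (ties broken by $v_1$), which is derivation-independent and makes each Meek-rule case produce a strictly smaller element of $E^{\dagger}$. Once you adopt this ordering and the ``premises are also new'' observation, the case analysis over R0--R4 is short; R2 is the only case that genuinely needs Proposition~\ref{prop:triangles}.
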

\begin{proof}
Here we'll take $R$ to also include all edges oriented before the interventions, since this doesn't change the outcome of the set difference operation above. We also drop $G$ from the notation since we work with a fixed true graph. This is just done out of convenience for the proof.

Take edge $e \in R(\xi_2 \cup \{I\})$ and $e \notin R(\xi_2)$. By the monotonicity-like property, $R(\xi_1) \subseteq R(\xi_2)$, so $e \notin R(\xi_1)$. Thus we just need to prove that for all such $e$, we have $e \in R(\xi_1 \cup \{I\})$. 

Assume for contradiction there is some nonempty set $E^{\dagger}$ of edges such that $\forall e \in E^{\dagger}$, $e \in R(\xi_2 \cup \{I\})$ and $e \notin R(\xi_2)$, but $e \notin R(\xi_1 \cup \{I\})$. We can specify an ordering over these edges. Order the edges (written $v_1\rightarrow v_2$) such that: the edges are in increasing order upon the position of $v_2$ in the topological ordering of graph $G$ (lower is closer to the root). Settle all ties by decreasing order on the position of $v_1$ in the topological ordering. We will now show that if there exists some $e$ that is the lowest ordered element in $E^{\dagger}$, we can always either create a contradiction or find some alternative edge in $E^{\dagger}$ with lower position in the ordering (itself a contradiction). 

If $e=v_1\rightarrow v_2$ is discovered in $R(\xi_2 \cup \{I\})$ by R0, we know $I$ must intervene on one of $v_1$, or $v_2$ but not the other. Hence $e\in R(\xi_1 \cup \{I\})$. 

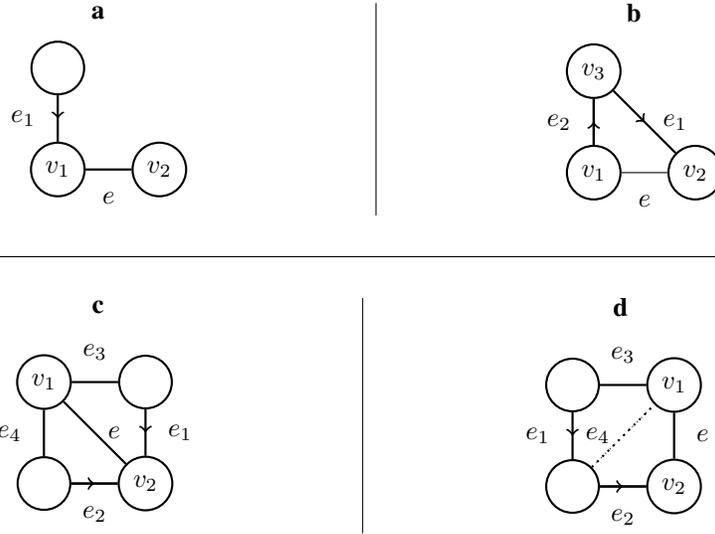
\begin{figure}[t]
\vskip 0.2in
\minipage{0.49\textwidth}
\centering
\textbf{a}
\vskip 0.1in
\begin{tikzpicture}[scale = 0.9, roundnode/.style={circle, draw=black!100, fill=black!0, thick, minimum size=7mm},
emptynode/.style={circle, draw=black!0, fill=black!0, thick, minimum size=2mm},
]

\node[roundnode] (d0) at (0, 7.5) {$ $};
\node[roundnode] (d1) at (0, 6) {$v_1$};
\node[roundnode] (d2) at (1.5, 6) {$v_2$};

\draw[->-, thick] (d0) -- (d1);
\draw[-, thick] (d1) -- (d2);

\draw (d0) -- (d1) node [midway, left=5pt, fill=white] {$e_1$};
\draw (d1) -- (d2) node [midway, below=5pt, fill=white] {$e$};
\end{tikzpicture}
\endminipage\hfill \vline
\minipage{0.49\textwidth}
\centering
\textbf{b}
\vskip 0.1in
\begin{tikzpicture}[scale = 0.9, roundnode/.style={circle, draw=black!100, fill=black!0, thick, minimum size=7mm},
emptynode/.style={circle, draw=black!0, fill=black!0, thick, minimum size=2mm},
]

\node[roundnode] (d0) at (0, 7.5) {$v_3$};
\node[roundnode] (d1) at (0, 6) {$v_1$};
\node[roundnode] (d2) at (1.5, 6) {$v_2$};

\draw[->-, thick] (d1) -- (d0);
\draw[->-, thick] (d0) -- (d2);
\draw[-, thick] (d1) -- (d0);

\draw (d0) -- (d1) node [midway, left=5pt, fill=white] {$e_2$};
\draw (d0) -- (d2) node [midway, above=3.5pt, right=3.5pt, fill=none] {$e_1$};
\draw (d1) -- (d2) node [midway, below=5pt, fill=white] {$e$};
\end{tikzpicture}
\endminipage\hfill 
\vskip 0.2in
\hrulefill
\vskip 0.2in
\minipage{0.49\textwidth}
\centering
\textbf{c}
\vskip 0.1in
\begin{tikzpicture}[scale = 0.9, roundnode/.style={circle, draw=black!100, fill=black!0, thick, minimum size=7mm},
emptynode/.style={circle, draw=black!0, fill=black!0, thick, minimum size=2mm},
]

\node[roundnode] (d0) at (0, 7.5) {$v_1$};
\node[roundnode] (d1) at (0, 6) {};
\node[roundnode] (d2) at (1.5, 6) {$v_2$};
\node[roundnode] (d3) at (1.5, 7.5) {};

\draw[-, thick] (d0) -- (d1);
\draw[-, thick] (d1) -- (d2);
\draw[-, thick] (d0) -- (d3);
\draw[-, thick] (d2) -- (d0);
\draw[->-, thick] (d3) -- (d2);
\draw[->-, thick] (d1) -- (d2);

\draw (d0) -- (d2) node [midway, above=3.5pt, right=1.5pt, fill=none] {$e$};
\draw (d0) -- (d1) node [midway, left=5pt, fill=white] {$e_4$};
\draw (d2) -- (d1) node [midway, below=5pt, fill=white] {$e_2$};
\draw (d0) -- (d3) node [midway, above=5pt, fill=white] {$e_3$};
\draw (d2) -- (d3) node [midway, right=5pt, fill=white] {$e_1$};

\end{tikzpicture}
\endminipage\hfill \vline
\minipage{0.49\textwidth}
\centering
\textbf{d}
\vskip 0.1in
\begin{tikzpicture}[scale = 0.9, roundnode/.style={circle, draw=black!100, fill=black!0, thick, minimum size=7mm},
emptynode/.style={circle, draw=black!0, fill=black!0, thick, minimum size=2mm},
]

\node[roundnode] (d0) at (0, 7.5) {};
\node[roundnode] (d1) at (0, 6) {};
\node[roundnode] (d2) at (1.5, 6) {$v_2$};
\node[roundnode] (d3) at (1.5, 7.5) {$v_1$};

\draw[->-, thick] (d0) -- (d1);
\draw[-, thick] (d1) -- (d2);
\draw[-, thick] (d0) -- (d3);
\draw[-, dotted, thick] (d1) -- (d3);
\draw[-, thick] (d3) -- (d2);
\draw[->-, thick] (d1) -- (d2);

\draw[dotted] (d1) -- (d3) node [midway, above=3.5pt, left=1.5pt, fill=none] {$e_4$};
\draw (d0) -- (d1) node [midway, left=5pt, fill=white] {$e_1$};
\draw (d2) -- (d1) node [midway, below=5pt, fill=white] {$e_2$};
\draw (d0) -- (d3) node [midway, above=5pt, fill=white] {$e_3$};
\draw (d2) -- (d3) node [midway, right=5pt, fill=white] {$e$};

\end{tikzpicture}
\endminipage\hfill
\begin{centering}
\caption{The Meek rules with labels on edges and nodes. \textbf{a)} R1, \textbf{b)} R2, \textbf{c)} R3, \textbf{d)} R4.}
\end{centering}
\label{fig:meek_hand}
\end{figure}

The diagram in Figure~\ref{fig:meek_hand} is given for following the other cases. Notation refers to the names of nodes and edges given on the diagrams. 

If $e=v_1\rightarrow v_2$ is discovered in $R(\xi_2 \cup \{I\})$ by R1, we must have $e_1 \in R(\xi_2 \cup \{I\})$. We must also have that $e_1 \notin  R(\xi_2)$ else $e \in R(\xi_2)$ by R1. The same must be true of $e'$ not being in  $R(\xi_1 \cup \{I\})$, so $e_1 \in E^{\dagger}$. But $e_1$ will have lower ordering since $v_1$ is below $v_2$ in the topological ordering. 

If $e=v_1\rightarrow v_2$ is discovered in $R(\xi_2 \cup \{I\})$ by R2, we must have some pair $e_1, e_2 \in R(\xi_2 \cup \{I\})$ such that $e_1 = v_3 \rightarrow v_2$ and $e_2 = v_1 \rightarrow v_3$. One of these edges cannot be in $R(\xi_2)$. In fact, neither can be in $R(\xi_2)$ due to proposition~\ref{prop:triangles}. This is because assuming only one is identified, based on proposition~\ref{prop:triangles}, another edge is either identified which leads to orienting $e$ or another edge is incorrectly oriented in the true DAG. The same holds for $R(\xi_1 \cup \{I\})$, but then we've found an edge $e_2 \in E^{\dagger}$ that is lower in the ordering than $e$.

If $e=v_1\rightarrow v_2$ is discovered in $R(\xi_2 \cup \{I\})$ by R3, we must have $e_1, e_2 \in R(\xi_2 \cup \{I\})$ and $e_3, e_4 \in G$ but not necessarily in $R(\xi_2 \cup \{I\})$.  $e_1$ and $e_2$ form an unshielded collider and are identified before intervening, so $e \in R(\xi_1 \cup \{I\})$ by R3. 

If $e=v_1\rightarrow v_2$ is discovered in $R(\xi_2 \cup \{I\})$ by R4, we must have $e_1, e_2 \in R(\xi_2 \cup \{I\})$ and $e_3, e_4 \in G$ but not necessarily in $R(\xi_2 \cup \{I\})$. At least one of $e_1, e_2 \notin R(\xi_2)$. Suppose only $e_1$ is in, then $e_2$ is in by $R1$. Therefore $e_1$ is not in $R(\xi_2)$ or $R(\xi_1 \cup \{I\})$, but this is lower in the ordering than $e$.
\end{proof}

\begin{proposition}
\label{prop:R2}
$R(\xi_1) \setminus R(\xi_1 \cup \{I\})  \subseteq R(\xi_2) \setminus R(\xi_2 \cup \{I\})$
\end{proposition}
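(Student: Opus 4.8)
The plan is to observe that this proposition is an immediate consequence of the monotonicity-like property of $R$ established in Proposition~\ref{prop:mono}; indeed, both sides of the claimed inclusion turn out to be empty.

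First I would note that Proposition~\ref{prop:mono} applies to any pair of nested intervention sets, and in particular to the pair $\xi \subseteq \xi \cup \{I\}$ (the containment being immediate, since adjoining $I$ only enlarges $\xi$). Applying it with $\xi_1 \mapsto \xi$ and $\xi_2 \mapsto \xi \cup \{I\}$ yields $R(\xi, G) \subseteq R(\xi \cup \{I\}, G)$ for every true DAG $G$ and every intervention $I$. As in the proof of Proposition~\ref{prop:R1}, I would fix $G$ throughout and suppress it from the notation.

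Specializing to $\xi = \xi_1$ gives $R(\xi_1) \subseteq R(\xi_1 \cup \{I\})$, hence $R(\xi_1) \setminus R(\xi_1 \cup \{I\}) = \emptyset$. Since the empty set is contained in every set, in particular in $R(\xi_2) \setminus R(\xi_2 \cup \{I\})$, the desired inclusion holds. For completeness one may also record that $R(\xi_2) \setminus R(\xi_2 \cup \{I\}) = \emptyset$ by the same argument, so the two sides are in fact equal (and both empty).

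There is essentially no obstacle here; the only point requiring any care is checking that the hypothesis of Proposition~\ref{prop:mono} is met by the pair $(\xi, \xi \cup \{I\})$, which is trivial. The purpose of this proposition, in tandem with Proposition~\ref{prop:R1}, is to control the marginal gains $\abs{R(\xi \cup \{I\})} - \abs{R(\xi)} = \abs{R(\xi \cup \{I\}) \setminus R(\xi)}$ and thereby derive the submodularity asserted in Lemma~\ref{lem:F_mon_sub}; the vanishing of the set differences above simply records that there is no ``loss'' term to account for when comparing the marginal contribution of $I$ over $\xi_1$ with that over $\xi_2$.
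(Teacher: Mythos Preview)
The proposal is correct and takes essentially the same approach as the paper: both observe that by the monotonicity-like property of $R$ (Proposition~\ref{prop:mono}), each side of the inclusion is empty, so the containment is trivial. Your write-up simply spells out in more detail what the paper states in one line.
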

\begin{proof}
Follows by monotonicity of $R$, both sides are the empty set. 
\end{proof}

We can rewrite $F_{EO} = \sum_{G\in \mathcal{G}} g(R(\xi, G))$ where $g$ is the weighted coverage function. $g$ is a monotonic function of the set of edges in $R$. 

\begin{equation*}
\begin{split}
g(R(\xi_1 \cup \{I\})) - g(R(\xi_1)) &\overset{\mathrm{(i)}}{=}
g(R(\xi_1 \cup \{I\}) \setminus R(\xi_1)) \\
&\;\;- g(R(\xi_1) \setminus R(\xi_1 \cup \{I\})) \\
&\overset{\mathrm{(ii)}}{\geq} g(R(\xi_2 \cup \{I\}) \setminus R(\xi_2)) \\
&\;\;- g(R(\xi_2) \setminus R(\xi_2 \cup \{I\})) \\
&= g(R(\xi_2\cup \{I\})) - g(R(\xi_2)).
\end{split}
\end{equation*}
Step (i) is a property of the weighted coverage function. Step (ii) comes from propositions \ref{prop:R1} and \ref{prop:R2} and the monotonicity of $g$. This shows that definition~2 holds for $g(R(\xi, G))$ as a function of $\xi$ for all $G$. Since the sum of submodular functions is submodular, this implies lemma~\ref{lem:F_mon_sub}. 

\subsection{Proof of Lemma~2}
We can see that a monotonicity property like proposition~\ref{prop:mono} does not hold in this case. The intervention $[p]$, for example, orients no edges. Nevertheless, we follow the same approach to prove submodularity of $\Feo^{\xi}$.

For this we consider $I_1 \subset I_2$ and consider adding node $v \notin I_2$ to these interventions. 

For notational simplicity, $R(I, G)$ will now denote all edges oriented after intervention set $\xi$ (fixed) and intervention $I$ on true graph $G$. We will drop $G$ from the notation in cases where the graph is fixed. 

\begin{proposition}
\label{prop:R_inner1}
$R(I_2 \cup \{v\}) \setminus R(I_2)  \subseteq R(I_1 \cup \{v\}) \setminus R(I_1)$
\end{proposition}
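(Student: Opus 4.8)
The plan is to reuse, almost verbatim, the infinite-descent argument from the proof of Proposition~\ref{prop:R1}, adapted to the situation where we enlarge a single intervention (the node set) rather than an intervention set. Fix the true graph $G$ and suppress it from the notation. Take an edge $e = v_1 \to v_2 \in R(I_2 \cup \{v\}) \setminus R(I_2)$; we must show both $e \in R(I_1 \cup \{v\})$ and $e \notin R(I_1)$. As in Proposition~\ref{prop:R1}, order the relevant edges increasingly by the topological position of the head $v_2$ in $G$, breaking ties by \emph{decreasing} position of the tail $v_1$, and aim to show that a minimal offending edge forces a strictly smaller one (or a direct contradiction).

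For the inclusion $e \in R(I_1 \cup \{v\})$ I would case on the rule that orients $e$ inside $R(I_2 \cup \{v\})$, mirroring the five cases R0--R4 of Proposition~\ref{prop:R1} together with the triangle fact of Proposition~\ref{prop:triangles}. The R0 case is where the inner setting is actually easier: if $e$ is R0-oriented under $I_2 \cup \{v\}$, exactly one endpoint of $e$ lies in $I_2 \cup \{v\}$; that endpoint cannot be in $I_2$ (otherwise $e$ would already be R0-oriented under $I_2$, contradicting $e \notin R(I_2)$), so it is $v$ itself, and since $I_1 \subseteq I_2$ the other endpoint also avoids $I_1$, so $e$ is R0-oriented under $I_1 \cup \{v\}$. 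The R1--R4 cases are the same as in Proposition~\ref{prop:R1}: the edges triggering a Meek rule are either unshielded colliders or edges oriented before any intervention (hence also present under $I_1 \cup \{v\}$), or they are themselves offending edges of strictly smaller rank, contradicting minimality. The R4 pattern (as in Figure~\ref{fig:R4_case_triangle}) is, as usual, the fiddliest and must be split by how its auxiliary edge incident to $v_1$ is oriented.

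The real obstacle, and the genuine departure from Proposition~\ref{prop:R1}, is the claim $e \notin R(I_1)$. There this was immediate from the monotonicity-like Proposition~\ref{prop:mono}, but that property fails for interventions viewed as node sets: enlarging an intervention can destroy an R0 orientation (for instance the edge $i - j$ is R0-oriented by $\{i\}$ but not by $\{i,j\}$), so $R(I_1) \subseteq R(I_2)$ is false in general. I would instead prove the auxiliary containment $R(I_1) \cap R(I_2 \cup \{v\}) \subseteq R(I_2)$; then $e \notin R(I_1)$ follows, since $e \in R(I_1)$ would put $e$ in $R(I_2)$, contradicting $e \notin R(I_2)$. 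This auxiliary containment is itself proved by the same minimal-edge descent: the R0 orientations available under $I_1$ but not under $I_2$ are exactly the skeleton edges between $I_1$ and $I_2 \setminus I_1$, while $I_2$ already R0-orients every skeleton edge crossing the cut it defines; one then checks, rule by rule, that any Meek derivation of an edge of $R(I_1) \cap R(I_2 \cup \{v\})$ can be replayed inside $R(I_2)$. Carrying out this replay cleanly — keeping track of which oriented edges persist as one moves between the three Meek closures — is where the work concentrates; the rest is a mechanical transcription of the earlier proofs. Once Proposition~\ref{prop:R_inner1} and its companion $R(I_1) \setminus R(I_1 \cup \{v\}) \subseteq R(I_2) \setminus R(I_2 \cup \{v\})$ are established, the non-monotone submodularity of $\Feo^{\xi}$ follows from the same monotone/modular computation with the weighted coverage function $g$ that ends the proof of Lemma~\ref{lem:F_mon_sub}.
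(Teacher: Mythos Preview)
Your treatment of the first inclusion ($e \in R(I_1 \cup \{v\})$) matches the paper's almost exactly, including the R0 analysis and the appeal to the infinite-descent machinery of Proposition~\ref{prop:R1}.

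For the second inclusion ($e \notin R(I_1)$), the paper takes a different and more concrete route than your proposed replay argument. Rather than attacking the containment $R(I_1) \cap R(I_2 \cup \{v\}) \subseteq R(I_2)$ by replaying derivations, the paper extracts a combinatorial invariant. Assuming for contradiction $e \in R(I_1)$, it first examines the Meek derivation tree of $e$ inside $R(I_1)$ and argues that \emph{both endpoints of $e$ must lie in $I_2$}: the R0 leaves that are oriented under $I_1$ but not under $I_2$ are exactly those with both endpoints in $I_2$, and a rule-by-rule check shows that the property ``both endpoints in $I_2$'' propagates from child to parent up the tree unless the parent is already oriented under $I_2$. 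Second, it examines the derivation tree of $e$ inside $R(I_2 \cup \{v\})$ and shows, again rule by rule, that ``both endpoints in $I_2$'' propagates from parent to child \emph{down} this tree, forcing a root-to-leaf path of edges all of whose endpoints lie in $I_2$. But such a leaf cannot be R0-oriented by $I_2 \cup \{v\}$ (since $v \notin I_2$), contradicting the existence of this derivation.

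Your replay sketch is not obviously wrong, but it is underspecified at the crucial step: an edge $e'$ used in the $R(I_1)$-derivation of $e$ need not lie in $R(I_2 \cup \{v\})$, so your minimal-edge descent does not automatically apply to $e'$, and you have not explained how to splice the two separate derivations (one in $R(I_1)$, one in $R(I_2 \cup \{v\})$) into a single one in $R(I_2)$. The paper sidesteps this entirely by tracking a property (endpoint membership in $I_2$) that can be pushed through each derivation independently, rather than trying to merge them.
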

\begin{proof}
We want to show two things. First we want to show that if $e \in R(I_2 \cup \{v\})$ and $e \notin R(I_2)$, then $e \in R(I_1 \cup \{v\})$. This is shown with an identical technique to the one in proposition~\ref{prop:R1}. This is because the Meek rules are the same in both cases. The only difference is the case when $e=v_1 \rightarrow v_2$ is discovered by R0. In this case, $v$ must be either $v_1$ or $v_2$ and neither of $v_1$ or $v_2$ can be in $I_2$. However therefore neither are in $I_1$ and hence $e \in R(I_1 \cup \{v\})$.

The second thing we need to show is that if $e \in R(I_2 \cup \{v\})$ and $e \notin R(I_2)$, then $e \notin R(I_1)$. Suppose for contradiction that there exists some such $e \in R(I_1)$. We'll proceed in two steps. First we'll show that in order to avoid a contradiction, we must have that $I_2$ intervenes on both vertices in $e$. Second we'll show that if we intervene on both vertices in $e$ for $I_2$, we cannot have that $e \in R(I_2\cup \{v\})$ and $e\notin R(I_2)$.

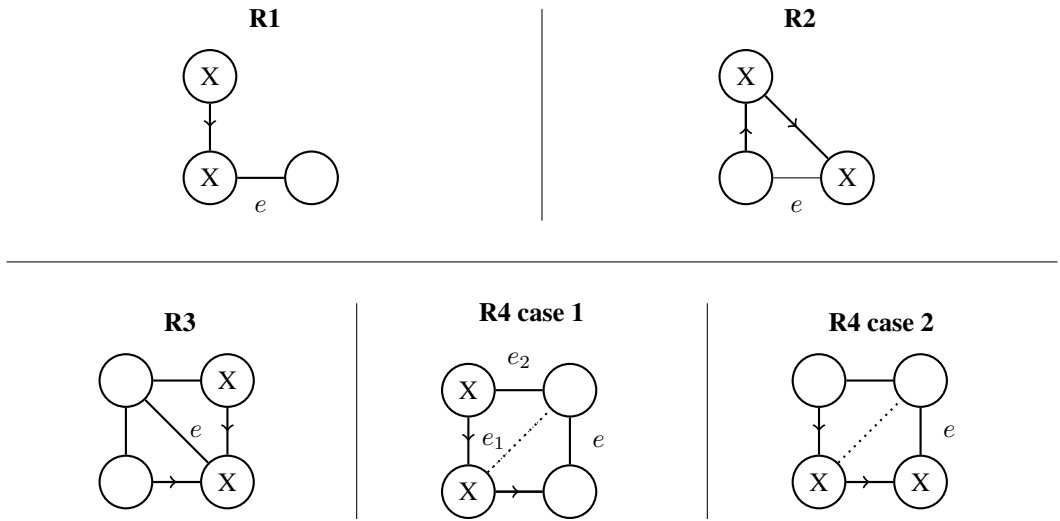
\begin{figure}[t]
\vskip 0.2in
\minipage{0.49\textwidth}
\centering
\textbf{R1}
\vskip 0.1in
\begin{tikzpicture}[scale = 0.9, roundnode/.style={circle, draw=black!100, fill=black!0, thick, minimum size=7mm},
emptynode/.style={circle, draw=black!0, fill=black!0, thick, minimum size=2mm},
]

\node[roundnode] (d0) at (0, 7.5) {X};
\node[roundnode] (d1) at (0, 6) {X};
\node[roundnode] (d2) at (1.5, 6) { };

\draw[->-, thick] (d0) -- (d1);
\draw[-, thick] (d1) -- (d2);

\draw (d1) -- (d2) node [midway, below=5pt, fill=white] {$e$};
\end{tikzpicture}
\endminipage\hfill \vline
\minipage{0.49\textwidth}
\centering
\textbf{R2}
\vskip 0.1in
\begin{tikzpicture}[scale = 0.9, roundnode/.style={circle, draw=black!100, fill=black!0, thick, minimum size=7mm},
emptynode/.style={circle, draw=black!0, fill=black!0, thick, minimum size=2mm},
]

\node[roundnode] (d0) at (0, 7.5) {X};
\node[roundnode] (d1) at (0, 6) {};
\node[roundnode] (d2) at (1.5, 6) {X};

\draw[->-, thick] (d1) -- (d0);
\draw[->-, thick] (d0) -- (d2);
\draw[-, thick] (d1) -- (d0);

\draw (d1) -- (d2) node [midway, below=5pt, fill=white] {$e$};
\end{tikzpicture}
\endminipage\hfill 
\vskip 0.2in
\hrulefill
\vskip 0.2in
\minipage{0.33\textwidth}
\centering
\textbf{R3}
\vskip 0.1in
\begin{tikzpicture}[scale = 0.9, roundnode/.style={circle, draw=black!100, fill=black!0, thick, minimum size=7mm},
emptynode/.style={circle, draw=black!0, fill=black!0, thick, minimum size=2mm},
]

\node[roundnode] (d0) at (0, 7.5) {};
\node[roundnode] (d1) at (0, 6) {};
\node[roundnode] (d2) at (1.5, 6) {X};
\node[roundnode] (d3) at (1.5, 7.5) {X};

\draw[-, thick] (d0) -- (d1);
\draw[-, thick] (d1) -- (d2);
\draw[-, thick] (d0) -- (d3);
\draw[-, thick] (d2) -- (d0);
\draw[->-, thick] (d3) -- (d2);
\draw[->-, thick] (d1) -- (d2);

\draw (d0) -- (d2) node [midway, above=3.5pt, right=1.5pt, fill=none] {$e$};

\end{tikzpicture}
\endminipage\hfill \vline
\minipage{0.33\textwidth}
\centering
\textbf{R4 case 1}
\vskip 0.1in
\begin{tikzpicture}[scale = 0.9, roundnode/.style={circle, draw=black!100, fill=black!0, thick, minimum size=7mm},
emptynode/.style={circle, draw=black!0, fill=black!0, thick, minimum size=2mm},
]

\node[roundnode] (d0) at (0, 7.5) {X};
\node[roundnode] (d1) at (0, 6) {X};
\node[roundnode] (d2) at (1.5, 6) {};
\node[roundnode] (d3) at (1.5, 7.5) {};

\draw[->-, thick] (d0) -- (d1);
\draw[-, thick] (d1) -- (d2);
\draw[-, thick] (d0) -- (d3);
\draw[-, dotted, thick] (d1) -- (d3);
\draw[-, thick] (d3) -- (d2);
\draw[->-, thick] (d1) -- (d2);

\draw[dotted] (d1) -- (d3) node [midway, above=3.5pt, left=1.5pt, fill=none] {$e_1$};
\draw (d3) -- (d0) node [midway, above=5pt, fill=white] {$e_2$};
\draw (d2) -- (d3) node [midway, right=5pt, fill=white] {$e$};
\end{tikzpicture}
\endminipage\hfill \vline
\minipage{0.33\textwidth}
\centering
\textbf{R4 case 2}
\vskip 0.1in
\begin{tikzpicture}[scale = 0.9, roundnode/.style={circle, draw=black!100, fill=black!0, thick, minimum size=7mm},
emptynode/.style={circle, draw=black!0, fill=black!0, thick, minimum size=2mm},
]

\node[roundnode] (d0) at (0, 7.5) {};
\node[roundnode] (d1) at (0, 6) {X};
\node[roundnode] (d2) at (1.5, 6) {X};
\node[roundnode] (d3) at (1.5, 7.5) {};

\draw[->-, thick] (d0) -- (d1);
\draw[-, thick] (d1) -- (d2);
\draw[-, thick] (d0) -- (d3);
\draw[-, dotted, thick] (d1) -- (d3);
\draw[-, thick] (d3) -- (d2);
\draw[->-, thick] (d1) -- (d2);

\draw (d2) -- (d3) node [midway, right=5pt, fill=white] {$e$};

\end{tikzpicture}
\endminipage\hfill
\caption{The pattern of each Meek rule as in the first part of proposition~\ref{prop:R_inner1}. An $X$ denotes that an intervention occured at that node. In our tree representation, if a child edge has both vertices intervened on then the parent is identified anyway, unless both vertices of the parent are intervened on.}
\label{fig:orienting_up_tree}
\end{figure}

We can represent the identification of edge $e$ in $R(I_1)$ by a directed tree diagram. The root in the diagram is $e$, and the children of each node in the diagram are the directed edges involved in the Meek rule that identifies the parent edge. Each node can have either one or two children (since each Meek Rule depends on up to two specific edges being directed). Leaf nodes must have been identified by R0. Since $e \in R(I_1)$ and $e \notin R(I_2)$, there must be leaf nodes in the diagram that are not identified by R0 using intervention $I_2$. Since $I_1 \subset I_2$, this means that for these leaf edges, $I_2$ must contain both nodes. However we now show that, for the structure of all Meek rules, in our tree diagram if a child edge has both vertices intervened on then the parent is identified anyway, unless both vertices of the parent are intervened on. This is shown pictorially in Figure~\ref{fig:orienting_up_tree}. R4 case 1 requires some extra explanation. $e_1$ is identified and must be oriented towards the node in $e$ to prevent identifying $e$ by R2. Hence $e_2$ must direct into $e$ to prevent a cycle. Then $e$ is learnt by R1. 
Given all this, we can see that to prevent identification of $e$ by $I_2$, there must be a path from a leaf edge to the root $e$ where all the edges in the path have both vertices in the edge intervened on in $I_2$. Hence $e$ has both vertices contained in $I_2$. 

\begin{figure}[t]
\vskip 0.2in
\minipage{0.33\textwidth}
\centering
\textbf{R1}
\vskip 0.1in
\begin{tikzpicture}[scale = 0.9, roundnode/.style={circle, draw=black!100, fill=black!0, thick, minimum size=7mm},
emptynode/.style={circle, draw=black!0, fill=black!0, thick, minimum size=2mm},
]

\node[roundnode] (d0) at (0, 7.5) { };
\node[roundnode] (d1) at (0, 6) {X};
\node[roundnode] (d2) at (1.5, 6) {X};

\draw[->-, thick] (d0) -- (d1);
\draw[-, thick] (d1) -- (d2);

\draw (d1) -- (d2) node [midway, below=5pt, fill=white] {$e$};
\end{tikzpicture}
\endminipage\hfill \vline
\minipage{0.33\textwidth}
\centering
\textbf{R2}
\vskip 0.1in
\begin{tikzpicture}[scale = 0.9, roundnode/.style={circle, draw=black!100, fill=black!0, thick, minimum size=7mm},
emptynode/.style={circle, draw=black!0, fill=black!0, thick, minimum size=2mm},
]

\node[roundnode] (d0) at (0, 7.5) {};
\node[roundnode] (d1) at (0, 6) {X};
\node[roundnode] (d2) at (1.5, 6) {X};

\draw[->-, thick] (d1) -- (d0);
\draw[->-, thick] (d0) -- (d2);
\draw[-, thick] (d1) -- (d0);

\draw (d1) -- (d2) node [midway, below=5pt, fill=white] {$e$};
\end{tikzpicture}
\endminipage\hfill \vline
\minipage{0.33\textwidth}
\centering
\textbf{R3}
\vskip 0.1in
\begin{tikzpicture}[scale = 0.9, roundnode/.style={circle, draw=black!100, fill=black!0, thick, minimum size=7mm},
emptynode/.style={circle, draw=black!0, fill=black!0, thick, minimum size=2mm},
]

\node[roundnode] (d0) at (0, 7.5) {X};
\node[roundnode] (d1) at (0, 6) {};
\node[roundnode] (d2) at (1.5, 6) {X};
\node[roundnode] (d3) at (1.5, 7.5) {};

\draw[-, thick] (d0) -- (d1);
\draw[-, thick] (d1) -- (d2);
\draw[-, thick] (d0) -- (d3);
\draw[-, thick] (d2) -- (d0);
\draw[->-, thick] (d3) -- (d2);
\draw[->-, thick] (d1) -- (d2);

\draw (d0) -- (d2) node [midway, above=3.5pt, right=1.5pt, fill=none] {$e$};

\end{tikzpicture}
\endminipage\hfill 
\vskip 0.2in
\hrulefill
\vskip 0.2in
\minipage{0.33\textwidth}
\centering
\textbf{R4 case 1}
\vskip 0.1in
\begin{tikzpicture}[scale = 0.9, roundnode/.style={circle, draw=black!100, fill=black!0, thick, minimum size=7mm},
emptynode/.style={circle, draw=black!0, fill=black!0, thick, minimum size=2mm},
]

\node[roundnode] (d0) at (0, 7.5) {};
\node[roundnode] (d1) at (0, 6) {};
\node[roundnode] (d2) at (1.5, 6) {X};
\node[roundnode] (d3) at (1.5, 7.5) {X};

\draw[->-, thick] (d0) -- (d1);
\draw[-, thick] (d1) -- (d2);
\draw[-, thick] (d0) -- (d3);
\draw[-, dotted, thick] (d1) -- (d3);
\draw[-, thick] (d3) -- (d2);
\draw[->-, thick] (d1) -- (d2);

\draw[dotted] (d1) -- (d3) node [midway, above=3.5pt, left=1.5pt, fill=none] {$e_1$};
\draw (d3) -- (d0) node [midway, above=5pt, fill=white] {$e_2$};
\draw (d2) -- (d3) node [midway, right=5pt, fill=white] {$e$};
\draw (d0) -- (d1) node [midway, left=5pt, fill=white] {$e_3$};
\end{tikzpicture}
\endminipage\hfill \vline
\minipage{0.33\textwidth}
\centering
\textbf{R4 case 2}
\vskip 0.1in
\begin{tikzpicture}[scale = 0.9, roundnode/.style={circle, draw=black!100, fill=black!0, thick, minimum size=7mm},
emptynode/.style={circle, draw=black!0, fill=black!0, thick, minimum size=2mm},
]

\node[roundnode] (d0) at (0, 7.5) {X};
\node[roundnode] (d1) at (0, 6) {};
\node[roundnode] (d2) at (1.5, 6) {X};
\node[roundnode] (d3) at (1.5, 7.5) {X};

\draw[->-, thick] (d0) -- (d1);
\draw[-, thick] (d1) -- (d2);
\draw[-, thick] (d0) -- (d3);
\draw[-, dotted, thick] (d1) -- (d3);
\draw[-, thick] (d3) -- (d2);
\draw[->-, thick] (d1) -- (d2);

\draw (d2) -- (d3) node [midway, right=5pt, fill=white] {$e$};
\draw (d0) -- (d1) node [midway, left=5pt, fill=white] {$e_3$};
\draw (d2) -- (d1) node [midway, below=5pt, fill=white] {$e_4$};
\end{tikzpicture}
\endminipage\hfill \vline
\minipage{0.33\textwidth}
\centering
\textbf{R4 case 3}
\vskip 0.1in
\begin{tikzpicture}[scale = 0.9, roundnode/.style={circle, draw=black!100, fill=black!0, thick, minimum size=7mm},
emptynode/.style={circle, draw=black!0, fill=black!0, thick, minimum size=2mm},
]

\node[roundnode] (d0) at (0, 7.5) {};
\node[roundnode] (d1) at (0, 6) {X};
\node[roundnode] (d2) at (1.5, 6) {X};
\node[roundnode] (d3) at (1.5, 7.5) {X};

\draw[->-, thick] (d0) -- (d1);
\draw[-, thick] (d1) -- (d2);
\draw[-, thick] (d0) -- (d3);
\draw[-, dotted, thick] (d1) -- (d3);
\draw[-, thick] (d3) -- (d2);
\draw[->-, thick] (d1) -- (d2);

\draw[dotted] (d1) -- (d3) node [midway, above=3.5pt, left=1.5pt, fill=none] {$e_1$};
\draw (d3) -- (d0) node [midway, above=5pt, fill=white] {$e_2$};
\draw (d2) -- (d3) node [midway, right=5pt, fill=white] {$e$};
\draw (d0) -- (d1) node [midway, left=5pt, fill=white] {$e_3$};
\draw (d2) -- (d1) node [midway, below=5pt, fill=white] {$e_4$};
\end{tikzpicture}
\endminipage\hfill
\caption{The pattern of each Meek rule as in the second part of the proof of proposition~\ref{prop:R_inner1}. An $X$ denotes that an intervention occured at that node. In our tree diagrams, if both vertices of an edge are intervened on, to prevent the identification of both child edges (and hence the edge itself by the corresponding Meek rule), it must be that both children have both of their vertices intervened on.}
\label{fig:orienting_down_tree}
\end{figure}
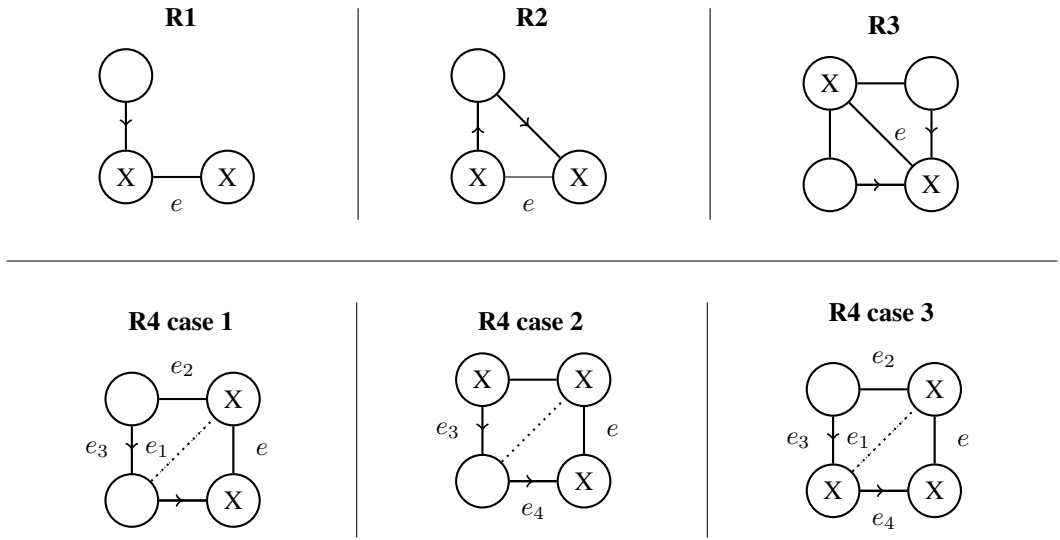

Now consider a different tree diagram for how $e$ is oriented in $R(I_2\cup \{v\})$. Clearly $v$ must be included as a vertex in at least one of the leaf edges, else the same pattern could allow us to orient $e$ using $I_2$. We show that if both vertices of an edge are intervened on, to prevent the identification of both child edges and hence the edge itself by the  Meek rules, it must be that both children have both of their vertices intervened on in $I_2$. This is shown in Figure~\ref{fig:orienting_down_tree}. 

In Figure~\ref{fig:orienting_down_tree}, \textbf{R1}, \textbf{R2} and \textbf{R3} are clear. For \textbf{R4 case 1}, $e_2$ must orient right to left (or get $e$ by R1). Also, $e_1$ must go towards the top right or we get $e$ by R2. Thus $e_3$ must point upwards by R2 which is a contradiction, since we know in the R4 pattern $e_3$ points downwards. Intervening on more edges to avoid this leads us to cases 2 and 3. For \textbf{R4 case 2}, we learn $e_3$ and $e_4$ and hence orient $e$. For \textbf{R4 case 3}, $e_2$ must point left else we get $e$ by R1. We know $e_3$ points down and it is oriented by R0. Then, $e_4$ goes left to right by R1. $E_1$ then points towards the bottom left by $R_2$ and hence $e$ is oriented by R2. 

Hence, there must be some path from the root $e$ to a leaf edge such that all members of the path have both of their vertices intervened on in $I_2$ and hence $I_2\cup\{v\}$. If $v\in e$, we have a contradiction since $v \notin I_2$. If $v \notin e$, then the leaf edge in our tree representation containing $v$ is not identified using R0 with $I_2+v$ either. Hence, this tree cannot possibly represent the sequence of Meek rules that lead to orienting $e$. Thus if $e \in R(I_2 \cup \{v\})$ and $e \notin R(I_2)$, then $e \notin R(I_1)$.
\end{proof}

\begin{proposition}
\label{prop:R_inner2}
$R(I_1) \setminus R(I_1 \cup \{v\}) \subseteq R(I_2) \setminus R(I_2\cup\{v\})$
\end{proposition}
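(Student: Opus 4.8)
The plan is to follow the template of the proof of Proposition~\ref{prop:R_inner1}: that proposition controls the edges \emph{gained} when $v$ is added to an intervention, and this one controls the edges \emph{lost}, and the two together give the submodularity of $\Feo^{\xi}$. Fix the true graph $G$ and write, as before, $R(I)=M(A(I))$, where $A$ performs the initial (R0) orientations and $M$ is the Meek closure, which is monotone in its seed set of directed edges. Since $v\notin I_2\supseteq I_1$, the seeds differ only in edges incident to $v$: passing from $A(I_1)$ to $A(I_1\cup\{v\})$ \emph{deletes} the R0 orientation of every undirected $(u,v)$ with $u\in I_1$ (call this set $L$) and \emph{adds} the R0 orientation of every undirected $(v,u)$ with $u\notin I_1$ (call it $N$), so $A(I_1\cup\{v\})=(A(I_1)\setminus L)\cup N$, and likewise at $I_2$ with $L\subseteq L'$ and $N'\subseteq N$. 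Because $A(I_1)\setminus L\subseteq A(I_1\cup\{v\})$, monotonicity of $M$ gives $M(A(I_1)\setminus L)\subseteq R(I_1\cup\{v\})$, so any $e\in R(I_1)\setminus R(I_1\cup\{v\})$ satisfies $e\notin M(A(I_1)\setminus L)$: it is precisely the deletion of the edges of $L$ from the $A(I_1)$-seed that destroys $e$. (This set of lost edges can be genuinely nonempty, unlike its Lemma-1 analogue Proposition~\ref{prop:R2}, so a real argument is required.)

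Fix such an $e$; we must show $e\in R(I_2)$ and $e\notin R(I_2\cup\{v\})$. The disjunction ``$e\in R(I_2)$ \emph{or} $e\notin R(I_2\cup\{v\})$'' is immediate from Proposition~\ref{prop:R_inner1}: if both failed then $e\in R(I_2\cup\{v\})\setminus R(I_2)\subseteq R(I_1\cup\{v\})\setminus R(I_1)$, contradicting $e\notin R(I_1\cup\{v\})$. To upgrade this to a conjunction I would re-run the derivation-tree/topological-order induction used in the proofs of Propositions~\ref{prop:R1} and~\ref{prop:R_inner1}. Take a derivation tree $T$ for $e$ from $A(I_1)$ (leaves oriented by R0 or pre-oriented, internal edges oriented by one of R1--R4 from their one or two children). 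Since deleting $L$ destroys $e$, $T$ must use a leaf in $L$, i.e.\ an edge $(u,v)$ with $u\in I_1$; note that any such leaf has exactly one endpoint in $I_1$ but \emph{both} endpoints in $I_1\cup\{v\}$. Propagating this up $T$ through the Meek rules --- exactly the case analysis already carried out in Figures~\ref{fig:orienting_up_tree} and~\ref{fig:orienting_down_tree} (a child with both endpoints in the intervention forces either the parent to be orientable by the remaining orientations or the parent too to have both endpoints in the intervention) --- yields that both endpoints of $e$ lie in $I_1\cup\{v\}$, and there is a path in $T$ from $e$ down to an $L$-leaf along which every edge has both endpoints in $I_1\cup\{v\}$. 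Split on whether $v$ is an endpoint of $e$. If it is, then $e=(a,v)$ with $a\in I_1$; since $v\notin I_2$, edge $e$ has exactly one endpoint in $I_2$, hence $e\in R(I_2)$ by R0, and the disjunction then forces $e\notin R(I_2\cup\{v\})$ --- one checks directly, by the ``down the tree'' argument of Figure~\ref{fig:orienting_down_tree}, that the obstruction blocking re-derivation of $e$ at level $I_1\cup\{v\}$ (both endpoints of $e$ intervened, and the R0 orientations newly created by the nodes of $I_2\setminus I_1$, none of which is $v$, supply no new derivation) persists at level $I_2\cup\{v\}$. If $v$ is not an endpoint of $e$, then both endpoints of $e$, and in fact the whole obstructing path of edges, lie in $I_1\subseteq I_2$; this path witnesses $e\in R(I_2)$ and simultaneously blocks $e\in R(I_2\cup\{v\})$ by the same both-endpoints-intervened propagation, now run at levels $I_2$ and $I_2\cup\{v\}$.

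The main obstacle, as in every proof in this section, is the bookkeeping for the fourth Meek rule R4: one must verify that whenever the critical edge on the path is produced by R4, either its two auxiliary edges are still present, or --- if one of them has become directed because both of its endpoints are intervened on --- R4 is superseded by a short chain of R1/R2 orientations that produces the same edge or else pushes the ``defect'' strictly closer to the root of $T$ in the topological order, so that the descent terminates. This second case ($v$ not an endpoint of $e$) is the genuine crux; the first case and the disjunction dispose of everything else. Once the dichotomy is established, the proposition follows, and combining it with Proposition~\ref{prop:R_inner1} exactly as in the proof of Lemma~\ref{lem:F_mon_sub} --- using $g(A_1)+g(B_2)\ge g(A_2)+g(B_1)$ for the monotone weighted-coverage function $g$, with $A_i=R(I_i\cup\{v\})\setminus R(I_i)$ and $B_i=R(I_i)\setminus R(I_i\cup\{v\})$, then summing over $G$ with the weights $a(G)$ --- gives the non-monotone submodularity of $\Feo^{\xi}$ of Lemma~2.
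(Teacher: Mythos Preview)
Your plan misses a one-line reduction that the paper exploits: the initial orientation step $A$ is \emph{complement-symmetric}, i.e.\ $A(I)=A(I^{C})$ (an edge has exactly one endpoint in $I$ iff it has exactly one endpoint in $I^{C}$), hence $R(I)=R(I^{C})$. With this, Proposition~\ref{prop:R_inner2} follows immediately from Proposition~\ref{prop:R_inner1}: if $e\in R(I_1)\setminus R(I_1\cup\{v\})$ then $e\in R(I_1^{C})\setminus R(I_1^{C}\setminus\{v\})$; since $I_2^{C}\setminus\{v\}\subseteq I_1^{C}\setminus\{v\}$ and $v$ lies in neither, Proposition~\ref{prop:R_inner1} applied to this pair gives $e\in R(I_2^{C})\setminus R(I_2^{C}\setminus\{v\})=R(I_2)\setminus R(I_2\cup\{v\})$. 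No derivation trees, no Meek case analysis.

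Your direct attack, by contrast, has real gaps. In the case where $v$ is not an endpoint of $e$ you assert that ``the whole obstructing path of edges lie in $I_1\subseteq I_2$'', but the path terminates at an $L$-leaf $(u,v)$ with $v\notin I_1$, so this is false on its face; and even if the endpoints along the path did all lie in $I_2$, a path of edges whose endpoints are all intervened on is exactly the configuration in which R0 orients \emph{none} of them, so it certainly does not ``witness $e\in R(I_2)$''. In the other case ($v$ an endpoint of $e$) you correctly get $e\in R(I_2)$ by R0, but the disjunction from Proposition~\ref{prop:R_inner1} cannot then be leveraged to conclude $e\notin R(I_2\cup\{v\})$, and the sentence ``the obstruction \ldots\ persists at level $I_2\cup\{v\}$'' is precisely the claim to be proved --- the new seeds contributed by $I_2\setminus I_1$ at level $I_2\cup\{v\}$ could, a priori, re-derive $e$ through a completely different Meek chain not involving your tree $T$. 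The machinery of Figures~\ref{fig:orienting_up_tree}--\ref{fig:orienting_down_tree} was built for a different direction of comparison and does not transfer without reworking. The symmetry $R(I)=R(I^{C})$ sidesteps all of this.
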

\begin{proof}
As in the proof of lemma~\ref{lem:F_mon_sub}, we can write $R(I) = M(A(I))$, where $A$ returns edges oriented directly by the intervention and $M$ returns these in addition to any oriented due to Meek rules. 

By symmetry in the definition of $A$, we can see that $A(I^C) = A(I)$ and hence $R(I^C) = R(I)$. Take some edge $e \in R(I_1) \setminus R(I_1 \cup \{v\})$. Then by symmetry we have $e \in R(I_1^C) \setminus R(I_1^C \setminus \{v\})$. Then since $I_1^C \setminus \{v\} \supseteq \bar{I_2} - \{v\}$, by proposition~\ref{prop:R_inner1} we must have $e \in R(I_2^C) \setminus R(I_2^C \setminus \{v\})$. Again by symmetry we then have $e \in R(I_2) \setminus R(I_2 \cup \{v\})$. 
\end{proof}

We can conclude that $\Feo^\xi$ is submodular in an identical way to how we did in proving lemma~\ref{lem:F_mon_sub}: by combining propositions~\ref{prop:R_inner1} and ~\ref{prop:R_inner2}.

\subsection{Proof of Theorem~2}
At each iteration of selecting an intervention, Theorem~\ref{thm:mokhtari} lower bounds how close the marginal gain compared to the greedy intervention is. Let $\xi$  be the set of interventions \scg{} selects. Let $\xi^*$ be the optimal batch of interventions, and ${I^*}_i$ be the ith intervention in this set. Due to lemma~\ref{lem:F_mon_sub} (monotonicity), $\xi^*$ contains exactly $m$ interventions. $\{\xi_i\}_{i\geq 0}$ is the entire intervention set after each greedy selection. Define marginal improvement $\Delta(I | \xi) = \Feo(I \cup \xi) - \Feo(\xi)$. The following holds for all $i$:
\begin{equation*}
\begin{split}
\Feo(\xi^*) &\leq \Feo(\xi^* \cup \xi_i) \\
& \leq \Feo(\xi_i) + \sum_{j=1}^k \Delta({I^*}_j \mid \xi_i \cup \{{I^*}_1 ,..., {I^*}_{j-1} \}) \\
& \leq \Feo(\xi_i) + \sum_{I \in \xi^*} \Delta(I \mid \xi_i) \\
& \leq \Feo(\xi_i) +  \sum_{I \in \xi^*} e \E\left[ \Feo(\xi_{i+1}) - \Feo(\xi_i) \right] + e\epsilon \\
& \leq \Feo(\xi_i) +   em \E\left[ \Feo(\xi_{i+1}) - \Feo(\xi_i) \right] + em\epsilon
\end{split}
\end{equation*}
The first line is due to monotonicity. The second is a telescoping sum. The third is due to submodularity (lemma~\ref{lem:F_mon_sub}). The fourth is a result of Theorem~\ref{thm:mokhtari}, since what we write is lower bounded by the greedy choice, which by definition has greater marginal improvement than any other intervention. The expectation here is over noise in selecting the $(i+1)$th intervention. The final line just notes that there are $m$ elements in the sum.

Now define $\delta_i = \Feo(\xi^*) - \Feo(\xi_i)$. We rearrange the above to get

$$\delta_i \leq em( \delta_i - \E[\delta_{i+1}] + \epsilon),$$

where again the expectation is over selection of the $(i+1)$th intervention. 

Now we telescope this inequality, subbing in $i+1 = m$ to obtain our final result.

\begin{equation*}
\begin{split}
\E[\delta_{m}] &\leq \left(1 - \frac{1}{em} \right) \delta_{m-1} + \epsilon \\
&\leq \left(1 - \frac{1}{em} \right)^m \delta_0 + \sum_{j=0}^m \left(1 - \frac{1}{em} \right)^j \epsilon \\
&\leq e^{-1/e} \delta_0 + \sum_{j=0}^m e^{-\frac{j}{em}} \epsilon \\
\end{split}
\end{equation*}

The third line uses $1-x \leq e^{-x}$. The final term on the right hand side of the last line reduces to $\sigma \epsilon$, where 

$$\sigma = \frac{e^{-1/e} - e^{1/(em)}}{1-e^{1/(em)}}.$$

By noting that $\delta_0 = \Feo(\xi^*)$, we get

$$\E[\Feo(\xi)] \geq \left( 1 - e^{-\frac{1}{e}}\right) \Feo(\xi^*) - \epsilon \sigma.$$

Given Theorem ~\ref{thm:mokhtari} and that we select $m$ greedy interventions, this requires $\bigO \left( m p^{5/2}/\epsilon^3 \right)$ calls to $R$. We remove the dependence on $m$ in $\sigma$ into the runtime by noting that $\sigma = \mathcal{O}(m)$ and hence that if we do $\bigO \left( m^4 p^{5/2}/\epsilon^3 \right)$ calls to the Meek rules, we get 

$$\E[\Feo(\xi)] \geq \left( 1 - e^{-\frac{1}{e}}\right) \Feo(\xi^*) - \epsilon.$$

\subsection{Proof of Lemma~3}
\citet{agrawal2019abcd} prove monotone submodularity of \Fmi{}. \apFinf{} is the special case of this objective in the limit of infinite samples per intervention.

\subsection{Proof of Proposition~1}

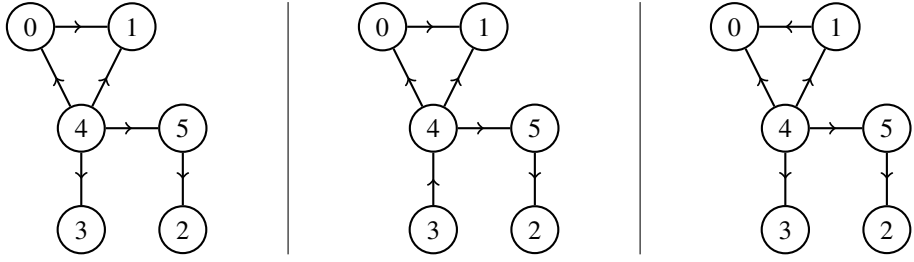
\begin{figure*}[t]
\vskip 0.2in
\minipage{0.33\textwidth}
\centering
\begin{tikzpicture}[scale = 0.9, roundnode/.style={circle, draw=black!100, fill=black!100, thick, minimum size=2mm},
rootnode/.style={circle, draw=black!100, fill=black!0, thick, minimum size=2mm},
]
\node[rootnode] (d0) at (0, 7.5) {0};
\node[rootnode] (d1) at (1.5, 7.5) {1};
\node[rootnode] (d2) at (2.25, 4.5) {2};
\node[rootnode] (d3) at (0.75, 4.5) {3};
\node[rootnode] (d4) at (0.75, 6) {4};
\node[rootnode] (d5) at (2.25, 6) {5};

\draw[->-, thick] (d0) -- (d1);
\draw[->-, thick] (d4) -- (d1);
\draw[->-, thick] (d4) -- (d0);
\draw[->-, thick] (d4) -- (d3);
\draw[->-, thick] (d4) -- (d5);
\draw[->-, thick] (d5) -- (d2);
\end{tikzpicture}
\endminipage\hfill \vline
\minipage{0.33\textwidth}
\centering
\begin{tikzpicture}[scale = 0.9, roundnode/.style={circle, draw=black!100, fill=black!100, thick, minimum size=2mm},
rootnode/.style={circle, draw=black!100, fill=black!0, thick, minimum size=2mm},
]
\node[rootnode] (d0) at (0, 7.5) {0};
\node[rootnode] (d1) at (1.5, 7.5) {1};
\node[rootnode] (d2) at (2.25, 4.5) {2};
\node[rootnode] (d3) at (0.75, 4.5) {3};
\node[rootnode] (d4) at (0.75, 6) {4};
\node[rootnode] (d5) at (2.25, 6) {5};

\draw[->-, thick] (d0) -- (d1);
\draw[->-, thick] (d4) -- (d1);
\draw[->-, thick] (d4) -- (d0);
\draw[->-, thick] (d3) -- (d4);
\draw[->-, thick] (d4) -- (d5);
\draw[->-, thick] (d5) -- (d2);
\end{tikzpicture}
\endminipage\hfill \vline
\minipage{0.33\textwidth}
\centering
\begin{tikzpicture}[scale = 0.9, roundnode/.style={circle, draw=black!100, fill=black!100, thick, minimum size=2mm},
rootnode/.style={circle, draw=black!100, fill=black!0, thick, minimum size=2mm},
]
\node[rootnode] (d0) at (0, 7.5) {0};
\node[rootnode] (d1) at (1.5, 7.5) {1};
\node[rootnode] (d2) at (2.25, 4.5) {2};
\node[rootnode] (d3) at (0.75, 4.5) {3};
\node[rootnode] (d4) at (0.75, 6) {4};
\node[rootnode] (d5) at (2.25, 6) {5};

\draw[->-, thick] (d1) -- (d0);
\draw[->-, thick] (d4) -- (d1);
\draw[->-, thick] (d4) -- (d0);
\draw[->-, thick] (d4) -- (d3);
\draw[->-, thick] (d4) -- (d5);
\draw[->-, thick] (d5) -- (d2);
\end{tikzpicture}
\endminipage\hfill
\begin{center}
\caption{The 3 DAGS in $\tilde{\mathcal{G}}$ for the counterexample in proving Proposition~1.}
\label{fig:prop}
\end{center}
\vskip -0.2in
\end{figure*}

To show that $\apFinf^{\xi}$ is in general not submodular, we need to give a specific example of a constraint set, and set of DAGs $\tilde{\mathcal{G}}$. We let $m=1, k=4$. The set $\tilde{\mathcal{G}}$ is given in Figure~\ref{fig:prop}. We consider the set of interventions before carrying out experiments, $\xi'$ to be the empty set. To break the definition of submodularity as given in definition~2 we need to define an existing intervention to add, $I_2$, and a subset of this, $I_1$. For this example we choose $I_2 = [1,2,3]$ with nodes numbered as in Figure~\ref{fig:prop}. $I_1 = [1,2]$. We also need to choose a perturbation to add to each intervention, and we choose node $0$. 

The computation of the objective is carried out in \emph{proposition.py} in the accompanying code. 

Note that for the special case $\tilde{\mathcal{G}} = \mathcal{G}$, we have not constructed a counterexample. In fact, in this case we suspect that $\apFinf^{\xi}$ is submodular, but don't have a proof. If true, this may suggest that an algorithm similar to \scg{} could be used to maximize Objective~\ref{obj:infABCDobj} directly without approximating the MEC with a bag of DAGs. An additional difficulty for this objective, however, is the second potentially exponential sum required to compute essential graph sizes embedded within the logarithm.

\subsection{Proof of Theorem~3}
We know that $\apFinf(\mathcal{S}) = 0$, because the graph is then fully identified, meaning $\abs{\tilde{\textrm{Ess}}^{\xi \cup \xi'}(G)}$ is 1 for all $G$.  We also know that $\min(\apFinf) = \apFinf(\emptyset)$. The submodularity of $\apFinf$ over groundset $\mathcal{S} \subset \mathcal{I}$ (lemma~3) along with this boundedness of the function is used to get the final bound. \\

Say we greedily select $m$ members of $\mathcal{S}$ to construct $\xi_m$. We prove by induction that $Q(m) = \left(\apFinf(\xi_m) \geq  \left(1-\frac{m}{\abs{\mathcal{S}}}\right) \apFinf(\emptyset) \right)$ is true for all $m$ where $0 \leq m \leq \abs{S}$. \\

The base case $Q(0)$ is trivial. Now assume that $Q(m)$ is true. Since $\apFinf$ is submodular over groundset $\mathcal{S}$, it satisfies the diminishing returns property of definition~2. Therefore it must be the case that $\exists I \in \mathcal{S} \setminus \xi_m$ such that $\apFinf(\{I\} \cup \xi_m) \geq \frac{1}{\abs{\mathcal{S}} - m} \apFinf(\xi_m)$. Note that if this was not the case, because of submodularity, adding all of the remaining interventions in $S$ in sequence would give $\apFinf(\mathcal{S}) < 0$ which would be a contradiction. Therefore \\

\begin{equation*}
\begin{split}
\apFinf(\xi_{m+1}) &\geq \frac{1}{\abs{\mathcal{S}} - m} \apFinf(\xi_m) \\
&\geq \frac{1}{\abs{\mathcal{S}} - m} \left(1-\frac{m}{\abs{\mathcal{S}}}\right) \apFinf(\emptyset)\\ 
&=  \left(1-\frac{m+1}{\abs{\mathcal{S}}}\right) \apFinf(\emptyset)
\end{split}
\end{equation*}

which completes the induction. The final result follows by applying the lower-bound on $\abs{\mathcal{S}}$ given in \citet{shanmugam2015learning}. A bound for the graph-sensitive separating system in \citet{lindgren2018experimental} can also be obtained by plugging in their lower-bound on $\abs{\mathcal{S}}$.

The runtime can be seen by observing that for each intervention, we compare $\abs{\mathcal{S}} = \mathcal{O}(\frac{p}{q} \log p)$ possible interventions. For each we evaluate $R$ a total of $\mathcal{O}(\abs{\tilde{\mathcal{G}}})$ times to compute \apFinf{}. Computing $R$ for each $G\in \mathcal{G}$ is sufficient to compute \apFinf{} because $R$ outputs all of the oriented edges in a graph given an intervention and hence can determine if a graph is in a certain interventional MEC. Thus, the overall runtime is $\mathcal{O}(m \abs{\tilde{\mathcal{G}}} \frac{p}{q} \log p)$ evaluations of $R$. The construction of the separating system itself is efficient compared to the computation of the Meek rules required to evaluate $R$ \cite{shanmugam2015learning}. 

\subsection{Extension to Soft Interventions}

Our results can also be used to develop algorithms for the soft intervention setting. A hard intervention makes the value of a variable independent of its parents. However, a soft intervention modifies a variable's value whilst maintaining the dependence on its parents. A simple example of a soft intervention is adding a constant value to the intervened node. In a GRN, a soft intervention might correspond to a gene knockdown, where a gene's expression is reduced but not set to $0$. 

\begin{definition}[Soft intervention]
A soft intervention $I$ on nodes $X_T$ for all $i \in I$, adds the intervention variables $W_i$ as an extra direct cause of $X_i$.  
\end{definition}

The soft intervention setting is greatly simplified by the following result in \citet{ghassami2019interventional}.

\begin{lemma}[\citet{ghassami2019interventional}]
In the infinite sample setting, a set of $k$ soft interventions of size $1$ is equivalent to targeting the same $k$ nodes in a single soft intervention. By equivalent, we mean it identifies the same information regarding the true DAG.
\end{lemma}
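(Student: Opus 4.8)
The plan is to reduce the statement to a comparison of two interventional Markov equivalence classes and show they coincide for every ground-truth DAG $G$. Write $T=\{i_1,\ldots,i_k\}$. Following the soft-intervention model of the excerpt, performing a soft intervention on a target set $S$ is informationally equivalent to appending, for each $i\in S$, an exogenous source node $W_i$ with the single edge $W_i\to i$, leaving the rest of $G$ intact, and observing the joint law of $X$ together with the $W_i$; call the resulting DAG $G^{+S}$. With infinitely many samples and the (interventional) faithfulness assumption the paper already makes, a soft intervention on $S$ identifies $G$ exactly up to the Markov equivalence class of $G^{+S}$. Hence the single soft intervention on $T$ identifies $G$ up to the class of $G^{+T}$, whereas the batch of singletons $\{\{i_1\},\ldots,\{i_k\}\}$ identifies $G$ up to the set of $G'$ that are observationally Markov equivalent to $G$ \emph{and} satisfy ${G'}^{+i_\ell}\equiv G^{+i_\ell}$ for every $\ell$. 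So it suffices to prove these two descriptions pick out the same set of DAGs.

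The core step is the Verma--Pearl computation for the augmented graphs. Since each $W_i$ has degree one, the skeleton of $G^{+S}$ is the skeleton of $G$ with the extra edges $W_i - i$ for $i\in S$, and the v-structures of $G^{+S}$ are those of $G$ together with exactly one new v-structure $W_i\to i\leftarrow a$ for each $i\in S$ and each $G$-parent $a$ of $i$. Hence $G_1^{+T}\equiv G_2^{+T}$ holds iff $G_1$ and $G_2$ have the same skeleton, the same v-structures, and the same parent set among the $G$-neighbours of every $i\in T$. The same three conditions characterise when $G_1$ and $G_2$ cannot be told apart by the $k$ singleton interventions: observational Markov equivalence contributes ``same skeleton and same v-structures'', and $G_1^{+i_\ell}\equiv G_2^{+i_\ell}$ (given observational equivalence) contributes precisely ``same $G$-parents of $i_\ell$''. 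The two equivalence classes therefore coincide, i.e.\ $\textrm{Ess}^{\{T\}}(G)=\textrm{Ess}^{\{\{i_1\},\ldots,\{i_k\}\}}(G)$ for every $G$; in particular $\Feo$ and $\Finf$ take the same value on the two batches, which is the asserted ``identifies the same information''. Equivalently, in the language of the paper: a soft intervention on a node resolves the orientation of every edge incident to that node --- including edges joining two intervention targets, via the cross-effects of their source variables --- so both batches produce the same set of R0-oriented edges, and hence the same Meek closure.

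I expect the only non-routine points to be the two facts I invoke rather than derive. The first is that, in the infinite-sample limit, a soft-interventional distribution is faithful to its augmented DAG, so that identifiability is exactly ``up to the augmented Markov equivalence class''; this is just the paper's faithfulness assumption carried over to the interventional regime. The second, which is the real content, is the degree-one observation about the $W_i$: it is what guarantees that running all $k$ source variables simultaneously contributes nothing beyond the per-node orientations --- no conditioning on one $W_i$ can open or block any trail among the original variables and another $W_j$, since $W_i$ is a parentless, single-neighbour node and hence a non-collider on every path. Making that precise (or, more cheaply, deferring to the proof in \citet{ghassami2019interventional}) completes the argument.
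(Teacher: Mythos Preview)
The paper does not prove this lemma; it simply quotes it from \citet{ghassami2019interventional} and uses it as a black box. So there is no ``paper's own proof'' to compare against here.

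That said, your argument is sound and is in fact the standard way this result is established. The augmented-DAG viewpoint (adding a fresh source $W_i\to i$ for each soft target) is exactly how soft-interventional equivalence is characterised in the cited work, and your Verma--Pearl computation is correct: because each $W_i$ has degree one and is non-adjacent to every other $W_j$, the v-structures of $G^{+T}$ are precisely the original v-structures together with $\{W_i\to i\leftarrow a:\,i\in T,\ a\in\mathrm{Pa}_G(i)\}$, with no cross-terms between the $W_i$. Hence $G_1^{+T}\equiv G_2^{+T}$ reduces to ``same skeleton, same v-structures, and same in-neighbourhood at every $i\in T$'', which is exactly the conjunction of the conditions $G_1^{+i_\ell}\equiv G_2^{+i_\ell}$ over $\ell$. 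Your closing remark is also the right intuition for why soft differs from hard here: even when both endpoints of an edge are targeted, the edge is still oriented via the new collider at the child, whereas a hard intervention on both endpoints severs the edge in both interventional graphs and learns nothing about it.

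The only point worth tightening is the faithfulness step: you should state explicitly that you are assuming interventional faithfulness of the soft-interventional distribution to $G^{+S}$ (equivalently, that the added $W_i$ are non-degenerate), since ordinary faithfulness of $P$ to $G$ does not by itself guarantee this. With that made explicit, the proof is complete.
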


Therefore, the problem of selecting up to $m$ interventions with size at most $q$ is equivalent to selecting $mq$ single node interventions in the hard intervention case. In \citet{ghassami2018budgeted}, the authors show that for this setting a greedy algorithm can achieve a constant factor guarantee for the objective \Feo{}, where the constant factor will be $1- \frac{1}{e}$ instead of $1-\frac{1}{e^{1/e}}$. An almost identical analysis to that done in Theorem~\ref{thm:eo} can show that this guarantee is also achieved for optimizing \apFinf{}. 

\subsection{Further Experiment Details}

\begin{figure*}[t]
\vskip 0.2in
\begin{center}
\minipage{\columnwidth/3}
\textbf{a}
\centering
\includegraphics[width=\columnwidth]{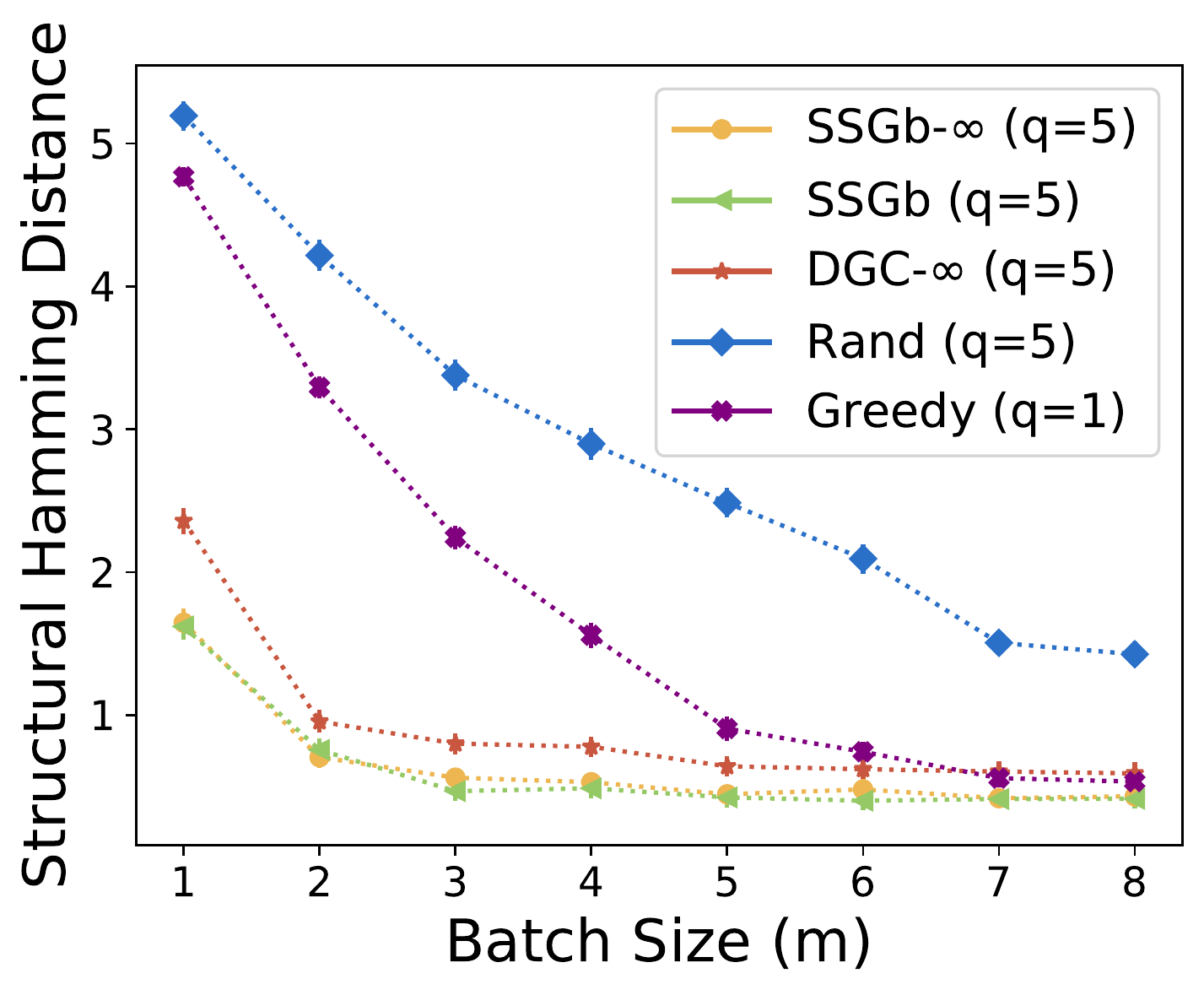}
\endminipage\hfill 
\minipage{\columnwidth/3}
\textbf{b}
\centering
\includegraphics[width=\columnwidth]{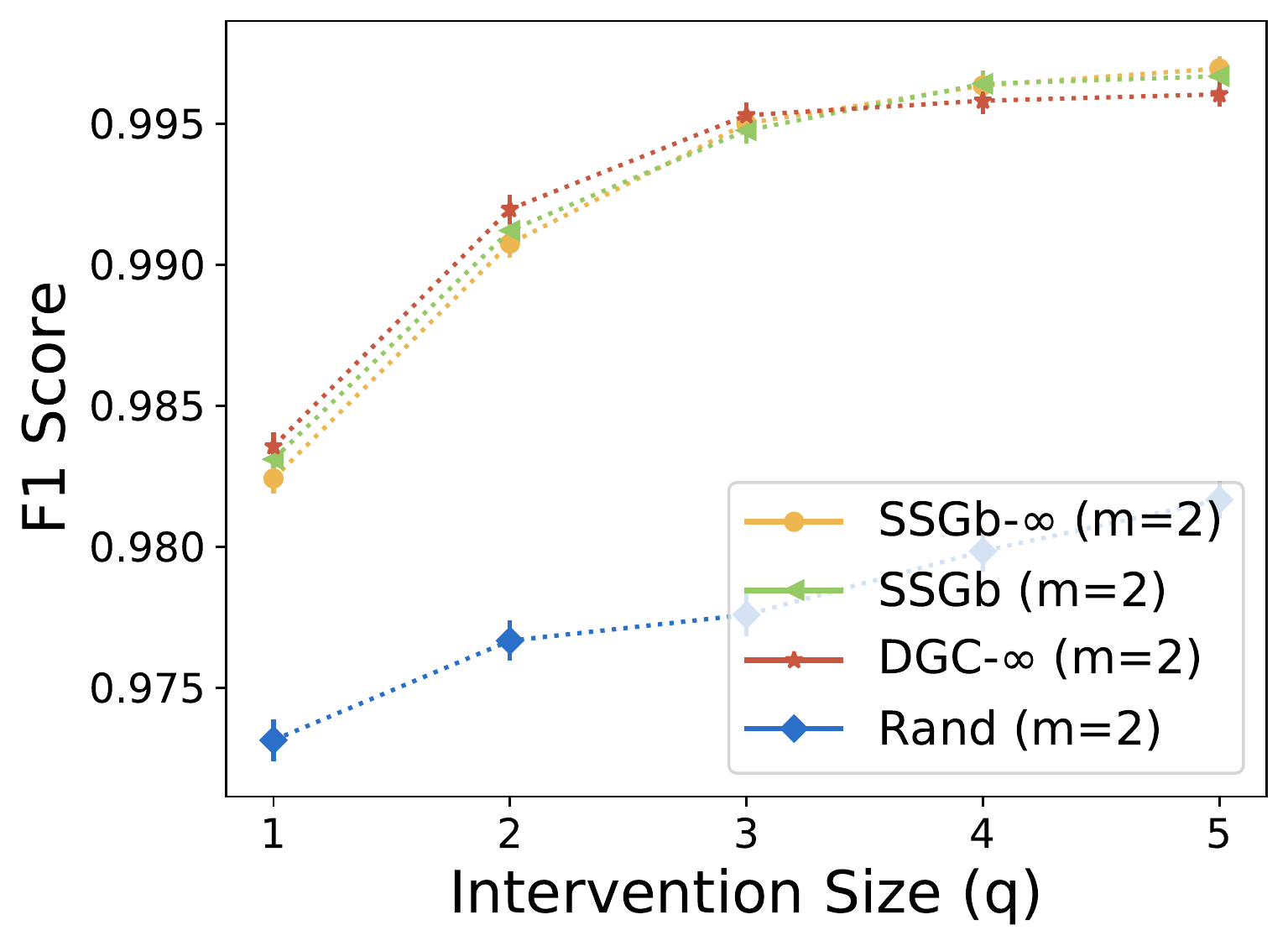}
\endminipage\hfill 
\minipage{\columnwidth/3}
\textbf{c}
\centering
\includegraphics[width=\columnwidth]{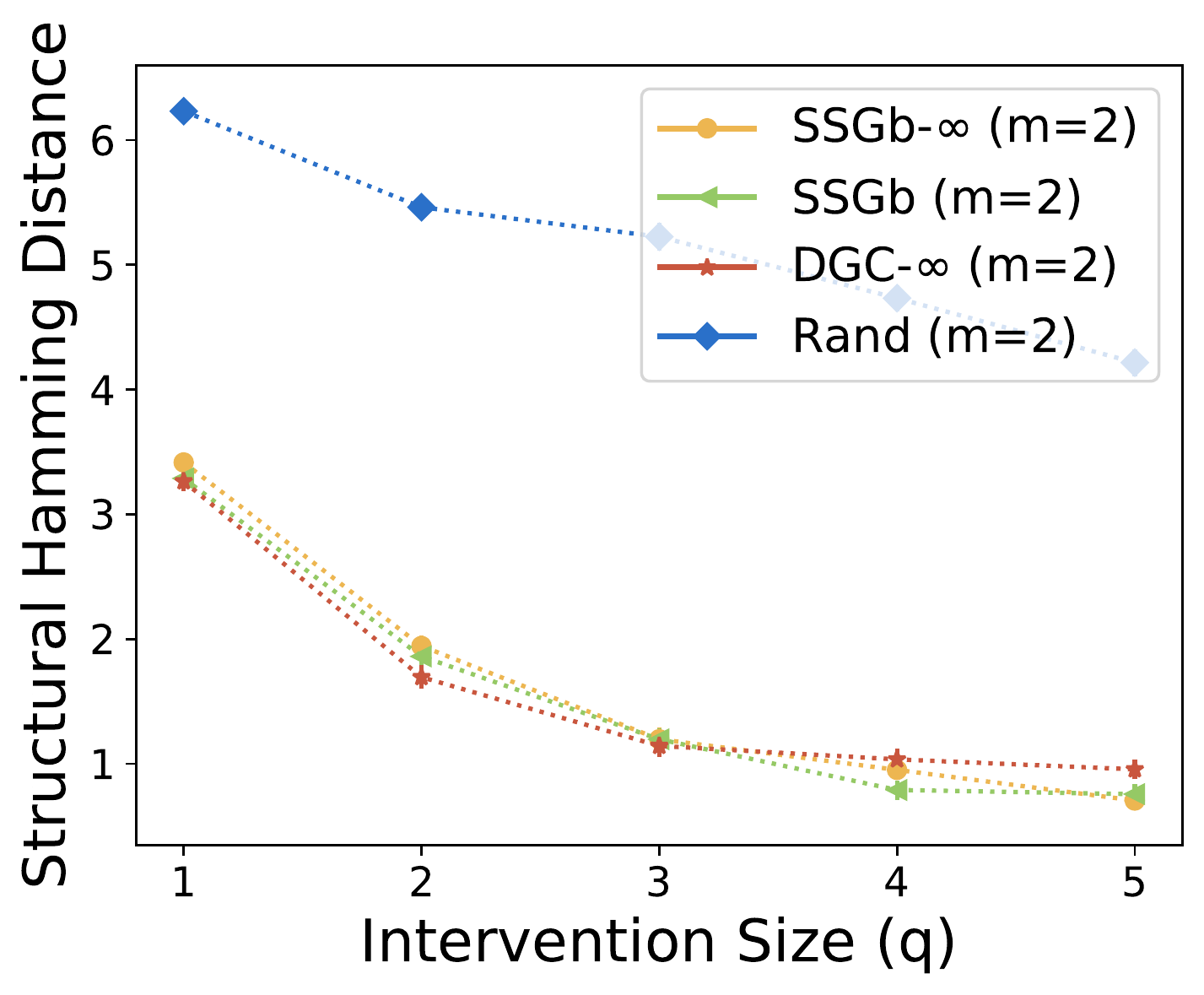}
\endminipage\hfill 

\caption{\textbf{a)} We measure performance on the finite sample task (same as in Figure~\ref{fig:inf}(e)) with fixed intervention size ($q=5$) using the SHD metric. The rank ordered performance of methods matches the outcome when we used the F1 metric. \textbf{b}, \textbf{c)} Here we consider the same experiment as (a) but plot F1 and SHD score respectively for changing intervention size with fixed batch size ($m=2$). In both cases our algorithms clearly outperform a random approach. 
} 
\label{fig:finite_sup}
\end{center}
\vskip -0.2in
\end{figure*}

All algorithms related to \ssg{} make use of lazy evaluation for speed-up, as in \citet{ghassami2018budgeted}. In greedy selection, lazy evaluation skips interventions which, based on previous evaluations and submodularity of the objective, could not be the greedy option. This results in no change in the selected interventions but reduces the number of necessary comparisons. 

When constructing graph agnostic separating systems according to the method of \citet{shanmugam2015learning}, we compute them exactly as specified since the algorithm is fast. For the graph sensitive construction method of \citet{lindgren2018experimental}, we use approximate algorithms for constructing a minimal vertex covering and a minimal graph-coloring, since both subroutines are NP-hard \cite{karp1972reducibility}. For graph-coloring, we use the Welsh-Powell algorithm which is near-optimal for graphs of bounded degree \cite{welsh1967upper}. For vertex cover, we use a 2-approximation algorithm based on greedily finding a maximal matching \cite{papadimitriou1991optimization}. 

For infinite sample experiments, when approximating the objective for use in our algorithms we use a multiset of 40 DAGs uniformly sampled from the MEC with replacement. However, for evaluation we use all DAGs in the MEC.

For \scg{}, when using Pipage rounding, we round $10$ times and select the intervention with greatest approximate objective value. 

For finite-sample experiments, the approximate prior over DAGs consists of 100 DAGs uniformly sampled from the MEC of the true DAG, or the MEC itself if the MEC has size less than 100. In the latter case, the rest of the 100 DAGs are given by bootstrapping the observational data and using the techniques of \citet{yang2018characterizing} to infer DAGs. 
In evaluating the finite-sample objective, there is some variance since we observe samples with noise. For evaluating methods on Objective~\ref{obj:abcdobj} as in Figure~\ref{fig:inf}(d), we average over 10 repeats. For \ssgb{}, which makes use of the objective for greedily selecting interventions, we approximate the objective by 10 repeats also. 
In Figure~\ref{fig:inf}(e) we plot F1 scores for predicting the presence of edges in the true DAG. We compute the probability of each directed edge being present by 

$$\Prob(u\rightarrow v) = \sum_{G \in \tilde{\mathcal{G}}} \hat{\Prob}(G) \mathbbm{1}((u \rightarrow v) \in G)$$

where $\tilde{\mathcal{G}}$ is the set of bootstrapped DAGs and $\hat{\Prob}$ is the posterior after collecting samples from interventions. A predicted graph is then estimated by thresholding the edge probabilities. We select the threshold that gives the graph with maximum F1 score, and then plot the F1 score. 

Similarly, we compute the weighted average (across the posterior) of structural hamming distances (SHDs) between graphs in the set of bootstrapped DAGs and the true DAG. This gives results similar to those of when we plot F1 score, as shown in Figure~\ref{fig:finite_sup}. Methods selecting the most informative interventions will have lower mean SHD because they will decrease the posterior probability of graphs distant from the true DAG. 

In Figure~\ref{fig:runtimes}(a-b) we compare the average runtimes of our algorithms running on the same hardware in a computing cluster. We do this for both the infinite and finite sample case. We see that although the \scg{} algorithm is slower than alternative approach on infinite samples, in finite samples it has a faster runtime than approaches that use the true finite sample objective. We also note that \ssgb{} has faster runtimes than \ssga{}, likely because the graph agnostic separating system construction returns larger separating systems. Whilst we aimed to maximize performance in terms of selecting the most informative experiments, these methods can be made faster at the expense of achieving lower objective values. For \ssg{} we could take fewer gradient steps, and for \scg{} we could use a smaller set of sampled DAGs to approximate the objective. 

In Figure~\ref{fig:runtimes}(c) we include plot performance in finite samples of the graph agnostic separating system for \ssg{} and see that its performance is lower than our alternative approaches. 

\begin{figure*}[t]
\vskip 0.2in
\begin{center}
\minipage{\columnwidth/3}
\textbf{a}
\centering
\includegraphics[width=\columnwidth]{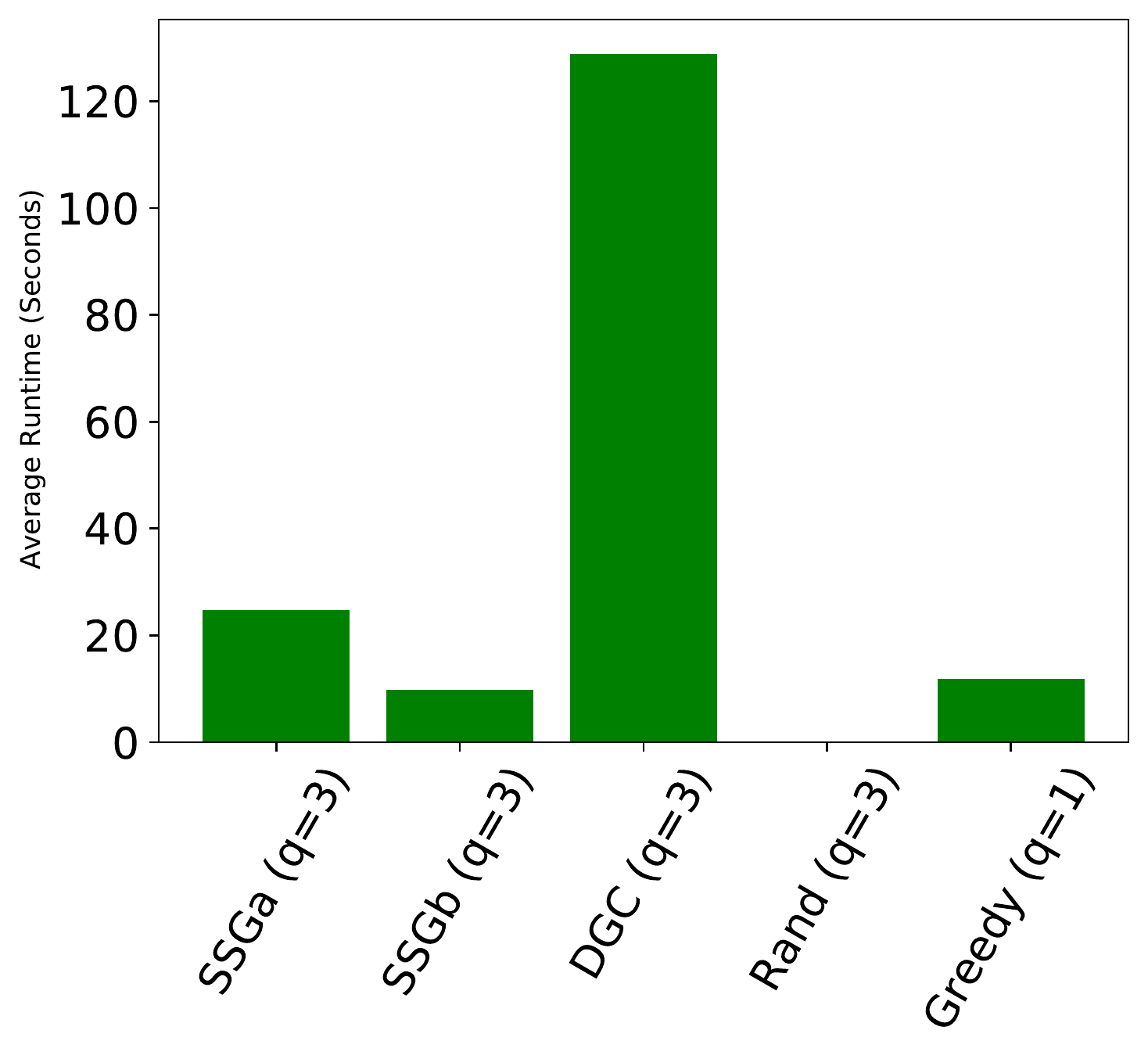}
\endminipage\hfill 
\minipage{\columnwidth/3}
\textbf{b}
\centering
\includegraphics[width=0.95\columnwidth]{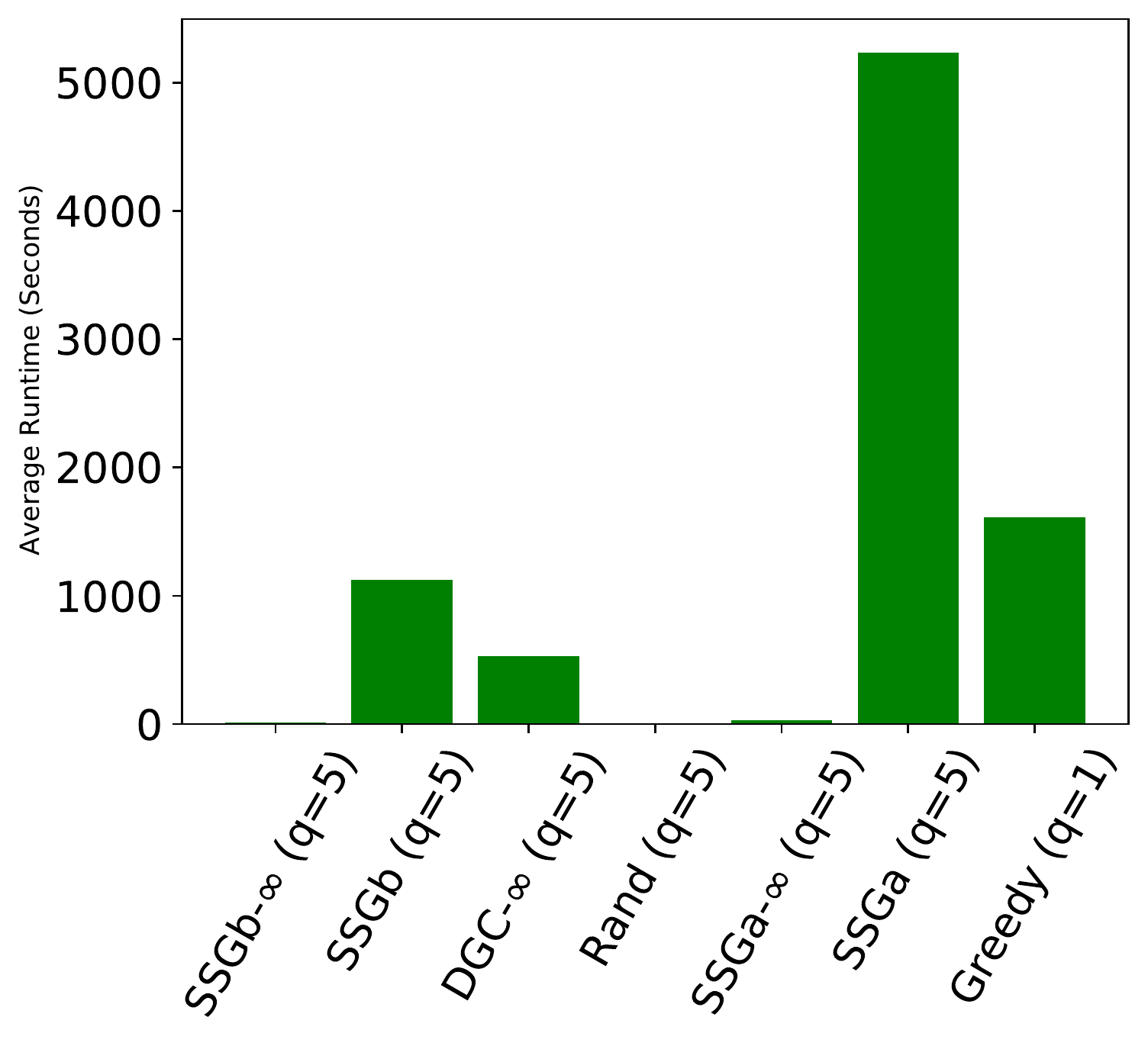}
\endminipage\hfill 
\minipage{\columnwidth/3}
\textbf{c}
\centering
\includegraphics[width=\columnwidth]{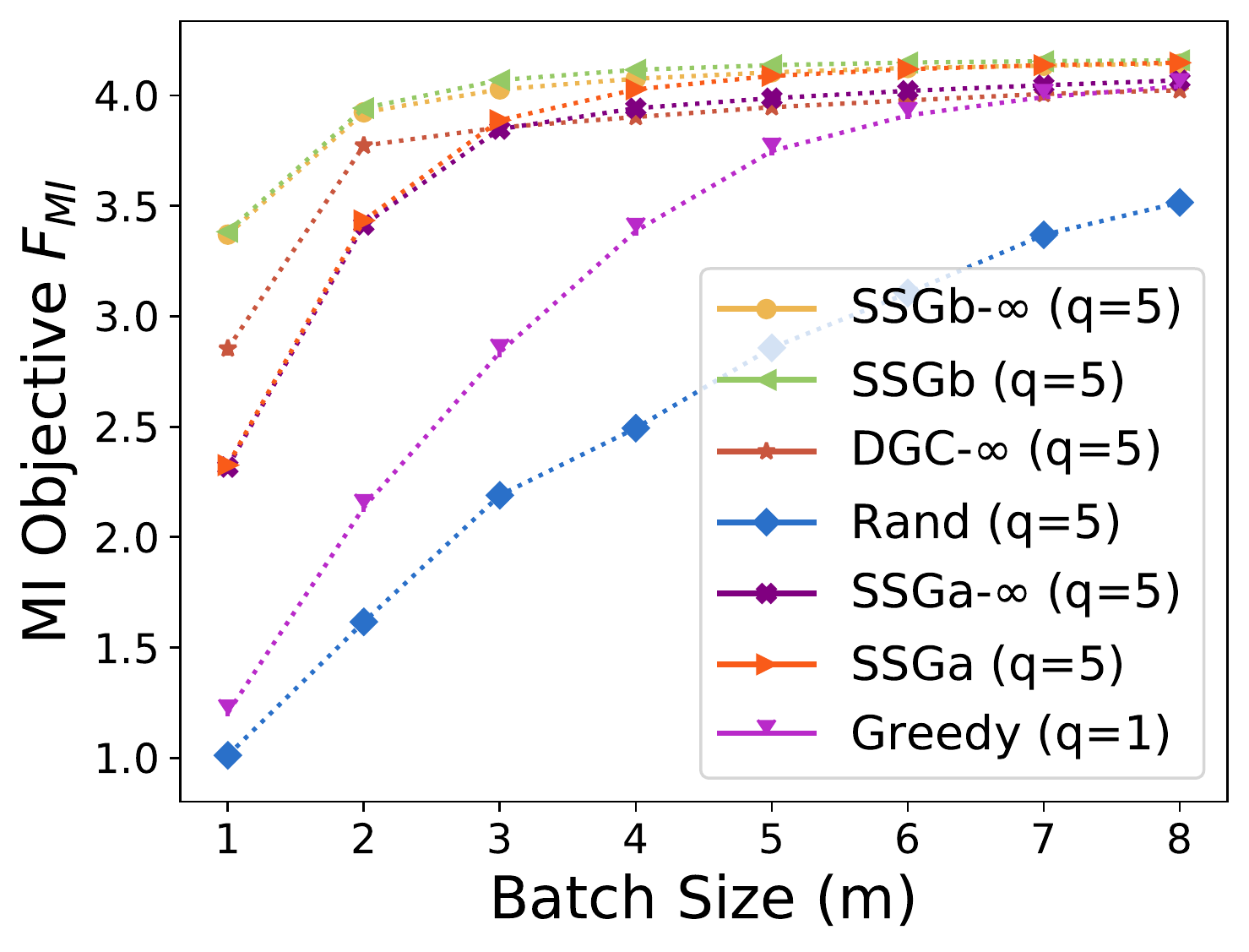}
\endminipage\hfill 

\caption{\textbf{a)} We demonstrate the runtime for selecting a batch of 5 interventions of size $q=3$ with each algorithm in the infinite sample setting. This is on ER $0.1$ graphs with $p=40$ nodes. We see that \scg{} is slower than other approaches but not impractical, whilst out of the \ssg{} methods, the graph-sensitive separating system construction results in faster runtimes. \textbf{b)} We show that for selecting $8$ interventions of size $q=5$, the infinite sample approximation methods we give result in substantially improved runtimes compared to methods that use the finite sample objective. This is again on the ER $0.1$ graphs but in the finite sample setting. \textbf{c)} We replicate Figure~\ref{fig:inf}(d) but include the \ssg{} algorithms based on the graph agnostic separating system construction. This separating system construction does not perform as well as the other algorithms we propose.} 
\label{fig:runtimes}
\end{center}
\vskip -0.2in
\end{figure*}

\subsection{Dream 3 Network Experiments}

\begin{figure*}[t]
\begin{center}
\minipage{\columnwidth/3}
\textbf{a}
\centering
\includegraphics[width=\columnwidth]{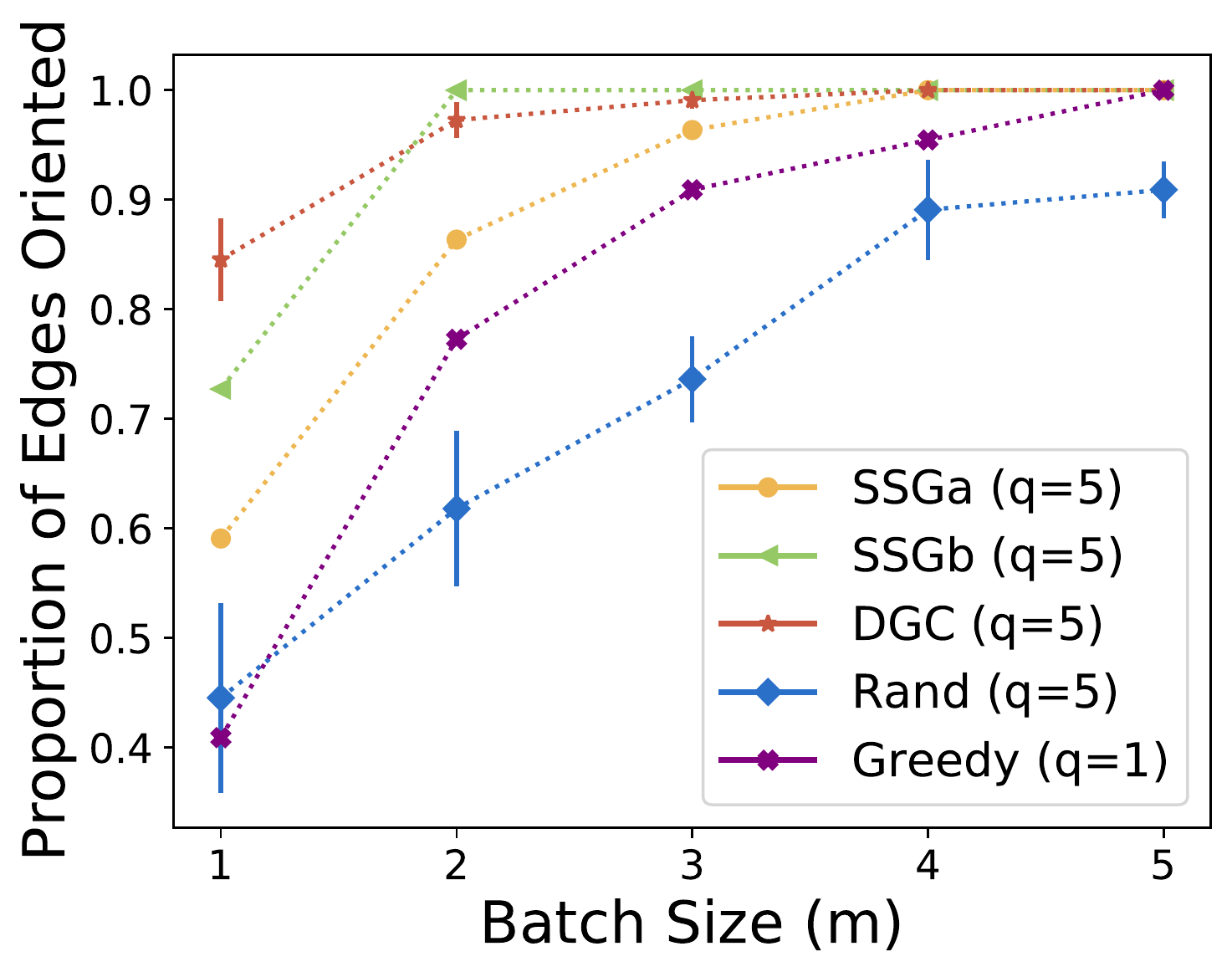}
\endminipage\hfill 
\minipage{\columnwidth/3}
\textbf{b}
\centering
\includegraphics[width=\columnwidth]{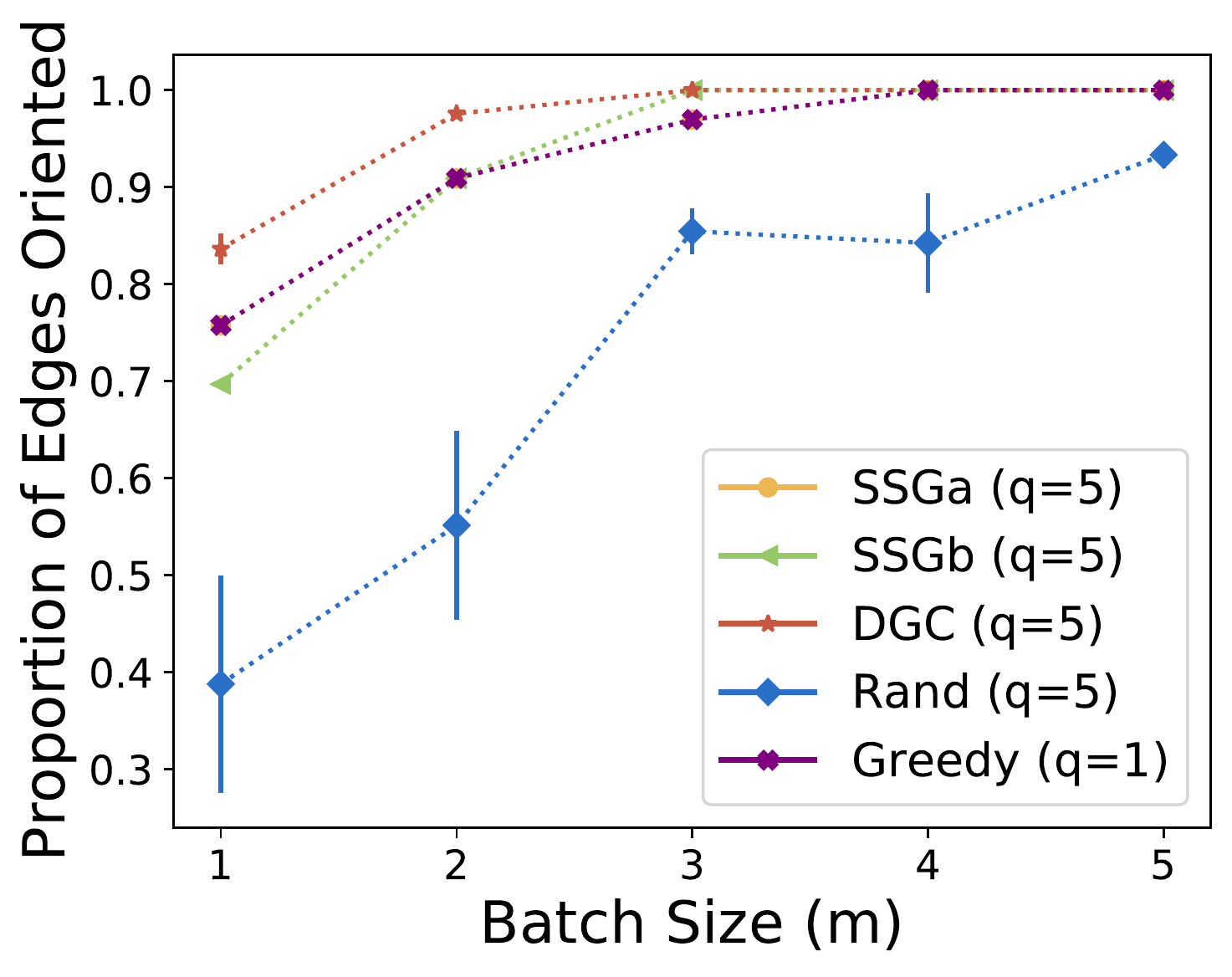}
\endminipage\hfill 
\minipage{\columnwidth/3}
\textbf{c}
\centering
\includegraphics[width=\columnwidth]{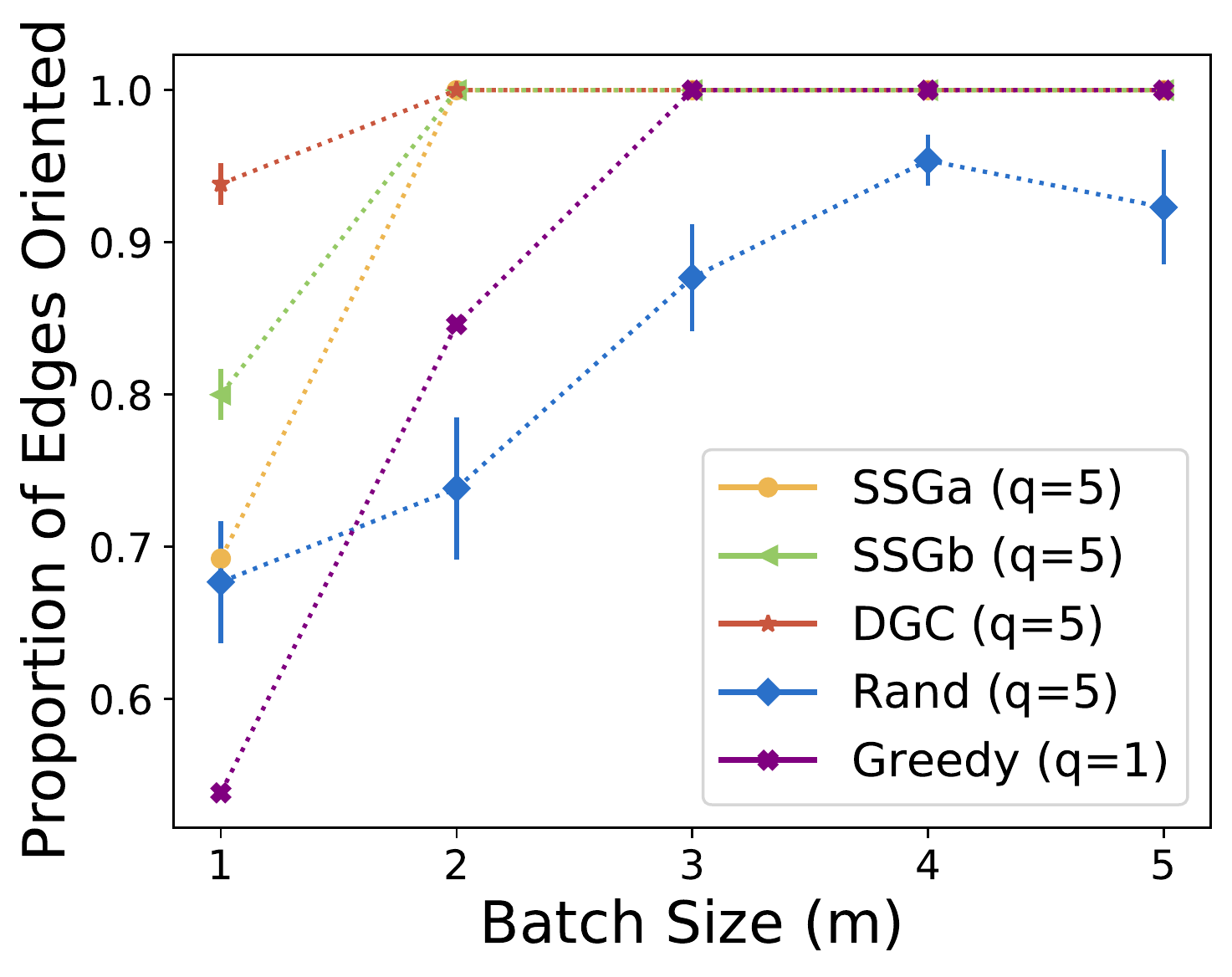}
\endminipage\hfill 
\minipage{\columnwidth/3}
\textbf{d}
\centering
\includegraphics[width=\columnwidth]{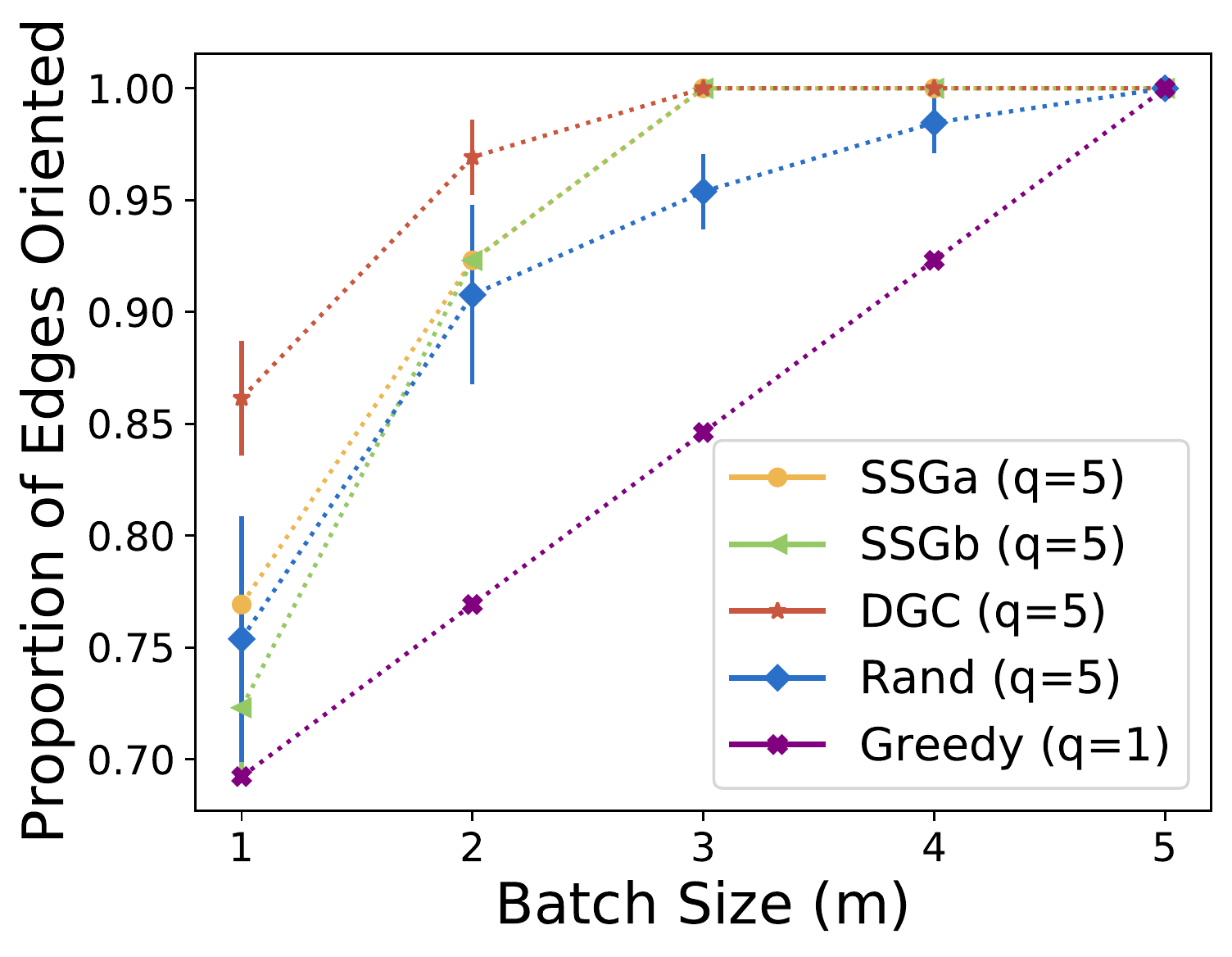}
\endminipage\hfill 
\minipage{\columnwidth/3}
\textbf{e}
\centering
\includegraphics[width=\columnwidth]{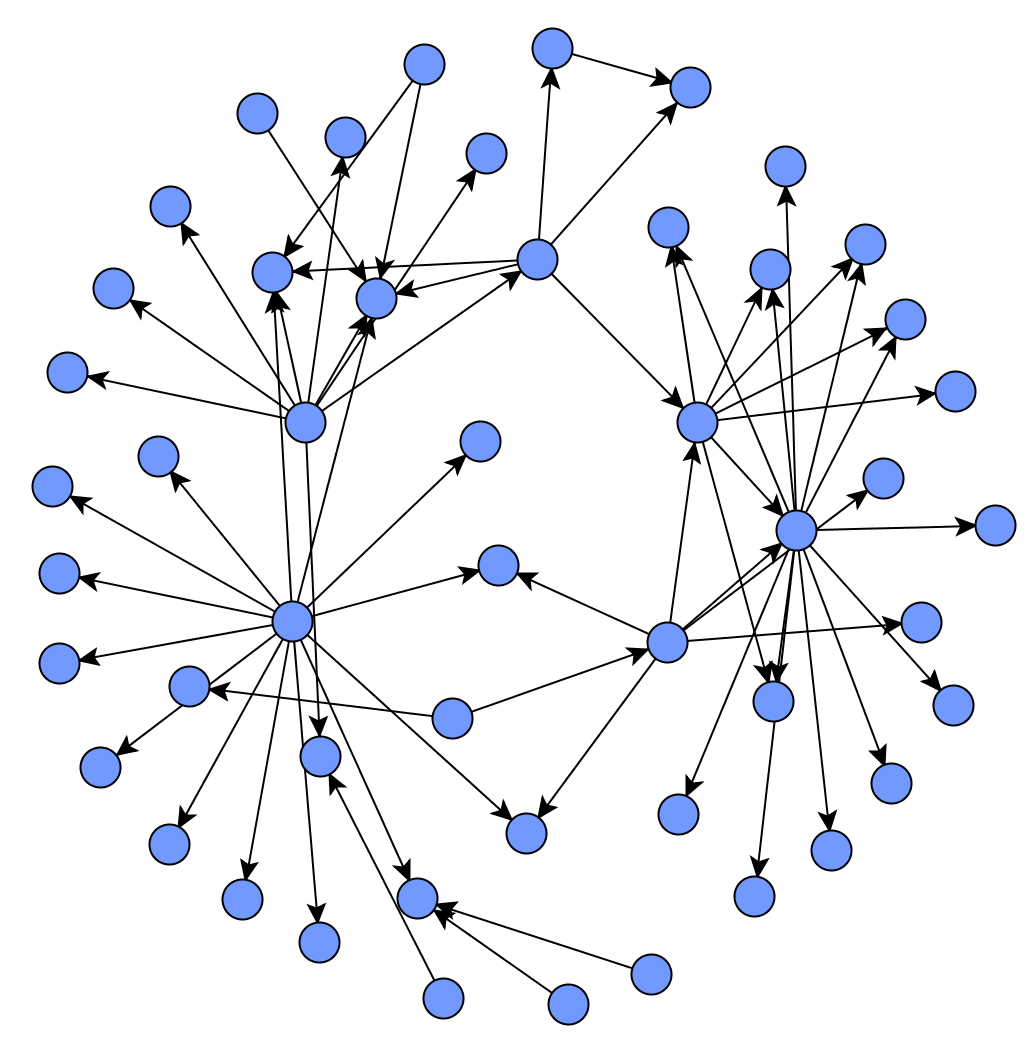}
\endminipage\hfill 
\minipage{\columnwidth/3}
\textbf{f}
\centering
\includegraphics[width=\columnwidth]{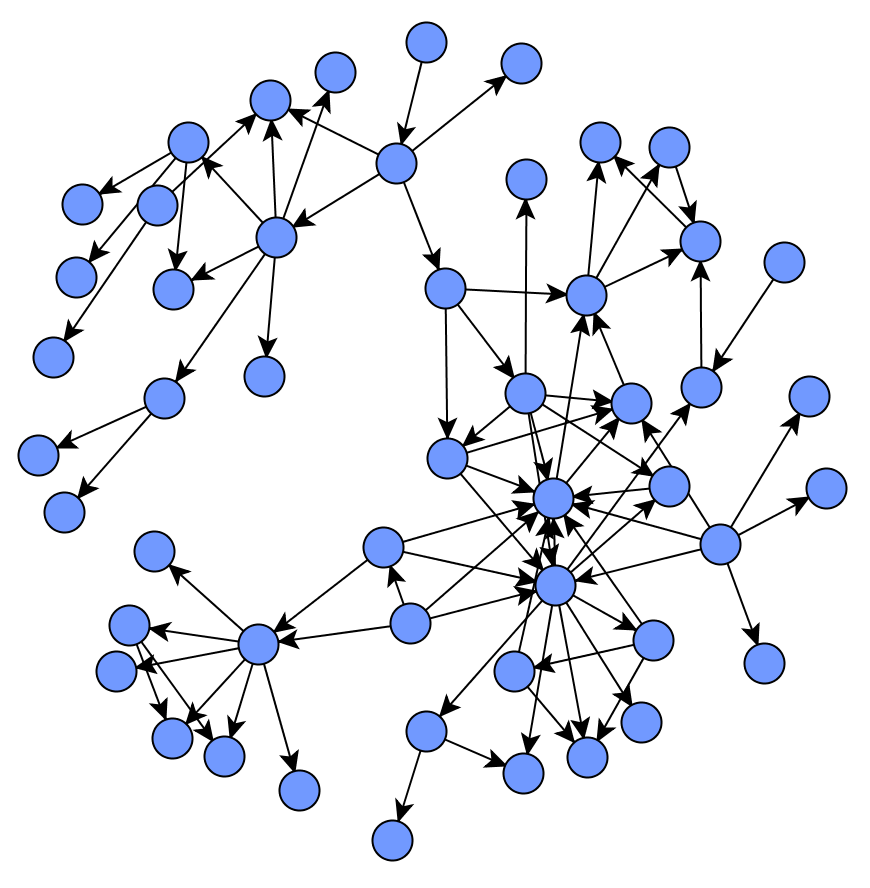}
\endminipage\hfill
\caption{\textbf{a--d)} The results for experiments identical to those in Figure~\ref{fig:inf} for the $p=50$ networks ``Ecoli1", ``Ecoli2", ``Yeast2" and ``Yeast3" from \citet{marbach2009generating} respectively. For an equal number of interventions and $q=5$, our methods in general orient more edges than both random and single-perturbation greedy interventions. \textbf{e, f)} The ground truth network for ``Ecoli1" and ``Yeast1`` respectively. 
} 

\label{fig:dream}
\end{center}
\vskip -0.2in
\end{figure*}

In Figure~\ref{fig:dream}(a--d) we give the results of DREAM3 networks not included in the main paper. For each, we record the number of edges oriented in the true DAG by each method, averaging over $5$ repeats. Again, we work in the infinite samples per intervention setting. In Figure~\ref{fig:dream}(e--f) we give illustrations of the ground truth DAG associated with two of the DREAM3 networks (images generated using software from \citet{marbach2009generating}). We note that these networks differ from typical Erd{\"o}s-Reny{\'i} random graphs in that they have a few nodes with many connections, and many nodes with sparser connections. 

\end{document}